\DeclareMathAlphabet{\mathsf}{OT1}{cmss}{m}{n}
\SetMathAlphabet{\mathsf}{bold}{OT1}{cmss}{bx}{n}
\providecommand{\norm}[1]{\|#1\|}
\begin{document}

\title{\huge Noisy Gradient Descent Converges to Flat Minima for Nonconvex Matrix Factorization}

\author{Tianyi Liu, Yan Li, Song Wei, Enlu Zhou and Tuo Zhao\thanks{T. Liu, Y.Li, S.Wei, E. Zhou, and T. Zhao are affiliated with School of Industrial and Systems Engineering at Georgia Tech; Tianyi Liu and Tuo Zhao is the corresponding author; Email: \{tianyiliu,tourzhao\}@gatech.edu.}}
\date{}

\maketitle
\begin{abstract}
Numerous empirical evidences have corroborated the importance of noise in nonconvex optimization problems.
The theory behind such empirical observations, however, is still largely unknown. 
This paper studies this fundamental problem through investigating the nonconvex rectangular matrix factorization problem, which has infinitely many global minima due to rotation and scaling invariance.
Hence, gradient descent (GD) can converge to any optimum, depending on the initialization. 
In contrast, we show that a perturbed form of GD with an arbitrary initialization converges to a global optimum that is uniquely determined by the injected noise. 
Our result implies that the noise imposes implicit bias towards certain  optima.
Numerical experiments are provided to support our theory.
\end{abstract}
\section{Introduction}
Nonconvex optimization has been widely adopted in various domains, including  image recognition  \citep{hinton2012deep, krizhevsky2012imagenet}, Bayesian graphical models \citep{jordan2004graphical, attias2000variational}, recommendation systems \citep{salakhutdinov2007restricted}, etc. 
Despite the fact that solving a nonconvex problem is generally difficult, empirical evidences have shown that  simple first order algorithms such as stochastic gradient descent (SGD), are able to solve a majority of the aforementioned nonconvex problems efficiently.
The theory behind these empirical observations, however, is still largely unexplored. 

In classical optimization literature, there have been fruitful results on characterizing the convergence of  SGD to first-order stationary points for nonconvex problems.
However, these types of results fall short of explaining the empirical evidences that SGD often converges to global minima for a wide class of nonconvex problems used in practice. 
More recently, understanding the role of noise in the algorithmic behavior of SGD has received significant attention.
For instance, \citet{jin2017escape} show that a perturbed form of gradient descent is able to escape from strict saddle points and converge to second-order stationary points (i.e., local minima).  
\citet{zhou2019towards} further show that noise in the update can help SGD to escape from spurious local minima and converge to the global minima.
We argue that, 
despite all these recent results on showing the convergence of SGD to global minima for various nonconvex problems,  
there is still an important question yet to be addressed.
Specifically, the  convergence of SGD  in the presence of multiple global minima remains uncleared for many important nonconvex problems. 
For example, for over-parameterized neural networks, it has been shown that the nonconvex objective has multiple global minima \citep{kawaguchi2016deep}, only few of which can yield good generalization.
In addition, \citet{allen2018convergence}  show that for an over-parameterized network,  SGD can converge to a global minimum in polynomial time.
Combining with the empirical successes of training over-parameterized neural networks with SGD, these results strongly advocate that SGD not only can solve the nonconvex problem efficiently, but also implicitly biases towards solutions with good generalization ability.
Motivated by this, this paper aims to provide more theoretical insights to the following question:
\begin{center}
\textbf{\emph{Does noise impose implicit bias towards certain minimizer
in nonconvex optimization problems?}}
\end{center}

We answer this question through investigating a simple yet non-trivial problem -- nonconvex matrix factorization,  which serves as an important foundation for a wide spectrum of problems such as matrix sensing \citep{bhojanapalli2016dropping,zhao2015nonconvex,chen2015fast,tu2015low}, matrix
completion \citep{keshavan2010matrix,hardt2014understanding,zheng2016convergence}, and deep linear networks \citep{ji2018gradient, gunasekar2018implicit}. 
Given a matrix $M \in \RR^{d_1 \times d_2} $, the nonconvex matrix factorization aims to solve:
\begin{align}\label{mf_obj}
\min_{X\in \RR^{d_1 \times r},Y \in \RR^{d_2 \times r}} \frac{1}{2}\norm{XY^\top - M}_{\rm{F}}^2.
\end{align}
Despite its simplicity, \eqref{mf_obj} possesses several intriguing landscape properties: nonconvexity of the objective, all the saddle points satisfy the strict saddle property, and infinitely many global optima due to scaling and rotational invariance.
Specifically, for any pair of global optimum $(X^*, Y^*)$, $(\alpha X^*, \frac{1}{\alpha} Y^*)$ and $(X^*R, Y^*R)$ are also global optima for any non-zero constant $\alpha$ and rotation matrix $R \in \RR^{r \times r}$. This is different from symmetric matrix factorization, which only possesses rotational invariance. 
The scaling and rotational invariance also imply that the global minima of \eqref{mf_obj} are connected, a landscape property that is also shared by deep neural networks \citep{nguyen2017loss, draxler2018essentially,nguyen2018loss,venturi2018spurious,garipov2018loss,liang2018understanding,nguyen2019connected,kuditipudi2019explaining}.

Nonconvex matrix factorization  \eqref{mf_obj} has been recently studied by \citet{du2018algorithmic}, with  focus on the algorithmic behavior of gradient descent (GD).
Their results reveal an interesting algorithmic regularization imposed by gradient descent: (i) gradient flow (GD with an infinitesimal step size) has automatic balancing property, i.e., the difference of the squared norm $\norm{X}_{\rm{F}}^2 - \norm{Y}_{\rm{F}}^2$ stays constant during training. (ii) for properly chosen step size, GD converges asymptotically for rank-r case,  linearly for rank-1 case, while maintaining approximate balancing property.
However, \citet{du2018algorithmic} do not consider any noise in the update, 
 hence their results  can not provide further theoretical insights on understanding the role of noise when applying first order algorithms to nonconvex problems.

In this paper, we are interested in studying the algorithmic behavior of first order algorithms in the presence of noise.
Specifically, we study a perturbed form of gradient descent (Perturbed GD) applied to the matrix factorization problem  \eqref{mf_obj}, which 
 injects independent noise to iterates, and then evaluates gradient at the perturbed iterates.
Note that our algorithm is different from SGD in terms of the noise.
For our algorithm, we inject independent noise to the iterates $(X_t, Y_t)'s$ and use the gradient evaluated at the perturbed iterates.
The noise of SGD, in contrast, comes from the training sample. As a consequence, the noise of SGD has very complex dependence on the iterate, which is difficult to analyze. See more detailed discussions in Sections \ref{sec_discussion}.

We further analyze the convergence properties of our Perturbed GD algorithm for the rank-1 case.
At the early stage, noise helps the algorithm to escape from regions with undesired landscape, including the strict saddle point.
After entering the region with benign landscape, Perturbed GD behaves similarly to gradient descent, until the loss is sufficiently small.   
At the early stage, noise provides additional explorations that help the algorithm to escape from the strict saddle point.
Then at the  later stage, the noise dominates the update of Perturbed GD, and gradually rescales the iterates to a balanced solution that is uniquely determined by the injected noise.
Specifically, the ratio of the norm $\norm{x_t}_2/\norm{y_t}_2$ is completely determined by the ratio of the variance of noise injected to $(x_t, y_t)$.
To the best of our knowledge, this is the first theoretical result towards understanding the implicit bias of noise in nonconvex optimization problems.
Our analysis  reveals an interesting characterization of the local landscape around global minima, which relates to the sharp/flat minima in deep neural networks \citep{keskar2016large}, and we will further discuss these connections in detail in Section \ref{sec_discussion}.
We believe that investigating the implicit bias of the noise in nonconvex matrix factorization can serve as a fundamental building block for studying stochastic optimization for more sophiscated nonconvex problems, including training over-parameterized neural networks.

\noindent{\textbf{Notations}}: $\mathbf{1}$ Given a matrix $A$, $\tr(A)$ denotes the trace of $A.$ For matrices $A,B\in \RR^{n\times m},$  we use $\inner{A}{B}$ to denote the Frobenius inner product, i.e., $\inner{A}{B}=\tr(A^\top B).$ $\norm{A}_{\rm{F}}=\sqrt{\inner{A}{A}}$ denotes the Frobenius norm of $A.$ $I_d\in \RR^{d\times d}$ denotes the identity matrix.

\section {Model and Algorithm}\label{sec_model}
We first describe the nonconvex matrix factorization problem, and then present the perturbed gradient descent algorithm (Perturbed GD).
For simplicity, we primarily focus on the  rank-1 matrix factorization problem. Extensions to the rank-r case are provided in Section \ref{sec_extend}.
\subsection{ Rank-1 Matrix Factorization}
We consider the following nonconvex optimization problem: 
\begin{align}\label{mat_fct_ncvx}
\min_{x\in\RR^{d_1}, y\in\RR^{d_2}}\cF(x,y)=\frac{1}{2}\|xy^\top-M\|^2_{\rm{F}},
\end{align}
 where $M \in \RR^{d_1\times d_2}$ is a rank-1 matrix and can be factorized as follows:
$\displaystyle
M=u_*v_*^\top,
$
where $u_*\in\RR^{d_1}$, $v_*\in\RR^{d_2}$. Without loss of generality, we assume $\norm{u_*}_2=\norm{v_*}_2=1.$

The optimization landscape of \eqref{mat_fct_ncvx} has been well studied in the previous literature \citep{ge2016matrix, ge2017no, chi2019nonconvex, li2019symmetry}.  
 Because of the bilinear form in  $\cF,$ there exist infinitely many global minima to \eqref{mat_fct_ncvx}, including highly unbalanced ones, i.e.,   $xy^\top=M$ with $\norm{x}_2\gg\norm{y}_2$ or $\norm{x}_2\gg\norm{y}_2$ (see Definition \ref{def:balance}).
In Section \ref{main}, we will show that such unbalancedness essentially implies global minima with a large condition number.
 To address this issue, \citet{tu2015low,ge2017no} propose a regularizer of the form $(\norm{x}_2^2-\norm{y}_2^2)^2$ to balance $\norm{x}_2$ and $\norm{y}_2$.
 Recently,  \citet{du2018algorithmic} show that even without explicit regularization, gradient descent  with small random initialization converges to  balanced solutions with constant probability.
 Yet, all of the previous results assume noiseless updates. 
The algorithmic behavior of first order algorithms with noisy updates remains unclear for the nonconvex matrix factorization problem.

\subsection{Perturbed Gradient Descent}
To study the effect of noise, we consider a perturbed gradient descent algorithm (Perturbed GD).
At the $t$-th iteration, we first perturb the iterate $(x_t,y_t)$ with independent Gaussian noise $\xi_{1,t}\sim N(0,\sigma_1^2I_{d_1})$ and $\xi_{2,t}\sim N(0,\sigma_2^2I_{d_2}),$ respectively.
We then update $(x_t, y_t)$ with the gradient evaluated at the perturbed iterates. 
The detail of the Perturbed GD algorithm is summarized in Algorithm \ref{alg:Perturbed GD}.
\begin{algorithm}[H]
    \caption{Perturbed Gradient Descent for Rank-1 Matrix Factorization.}
    \label{alg:Perturbed GD}
    \begin{algorithmic}
    	\STATE{\textbf{Input}: step size $\eta$, noise level $\sigma_1, \sigma_2$, matrix $M \in \RR^{d_1 \times d_2}$, number of iterations $T$.}
	\STATE{\textbf{Initialize}: initialize $(x_0, y_0)$ arbitrarily.}
	\FOR{$t = 0 \ldots T-1$}
	\STATE{Sample $\xi_{1,t}  \sim N(0, \sigma_1^2 I_{d_1})$ and $\xi_{2,t} \sim N(0, \sigma_2^2 I_{d_2})$.}
	\STATE{$\tilde{x}_t = x_t + \xi_{1,t}, ~ \tilde{y}_t = y_t + \xi_{2,t}.$}
	\STATE{$x_{t+1} = x_t - \eta (\tilde{x}_t \tilde{y}_t^\top - M ) \tilde{y}_t.$}
	\STATE{$y_{t+1} = y_t - \eta (\tilde{y}_t \tilde{x}_t^\top - M^\top ) \tilde{x}_t.$}
	\ENDFOR
    \end{algorithmic}
\end{algorithm}

\textbf{Smoothing Effect}.
 Using Perturbed GD to solve \eqref{mat_fct_ncvx}  can also be viewed as solving the following stochastic optimization problem:
\begin{align}\label{mat_fact_eq}
\min_{x\in\RR^{d_1}, y\in\RR^{d_2}}\tilde\cF(x,y)=\EE_{\xi_{1},\xi_{2}}\cF(x+\xi_1,y+\xi_2),
\end{align}
where $\xi_{1}\sim N(0,\sigma_1^2 I_{d_1})$ and $\xi_{2}\sim N(0,\sigma_2^2 I_{d_2}).$ 
Throughout this paper, we will refer to problem \eqref{mat_fact_eq} as the smoothed problem.
The expectation in  \eqref{mat_fact_eq} can be viewed as convoluting the objective function with a Gaussian kernel. 
In Section  \ref{main}, we show that this convolution effectively  smooths out  unbalanced optima  and yields a benign landscape.

\begin{remark}
The use of random noise to convolute with the objective function is also known as randomized smoothing, which is first proposed in \citet{duchi2012randomized}.
\citet{zhou2019towards,jin2017escape} further exploit this effect to explain the importance of noise in helping first order algorithms to escape from strict saddle points and spurious local optima.
\end{remark}

\section{Main Results}\label{main}
We study the algorithmic behavior of our proposed perturbed gradient descent (Perturbed GD) algorithm. 
 We primarily focus on the rank-1 nonconvex matrix factorization.
 We first characterize the landscape of the original problem \eqref{mat_fct_ncvx},
 and show that noise effectively smooths the original problem, yielding a smoothed problem \eqref{mat_fact_eq} with benign landscape.
 We then provide a non-asymptotic convergence analysis of the Perturbed GD, and demonstrate the implicit bias induced by the noise.
 In particular, we consider the case where noise is balanced, see more details in Theorem \ref{thm_main}.
 Due to space limit, we defer all the proofs to the appendix. 

We analyze the landscape of the original problem \eqref{mat_fct_ncvx} and the smoothed problem \eqref{mat_fact_eq}. 
Note that due to the bilinear form in $\cF(x,y)$, we have for  $\alpha \neq 0,$ $\displaystyle\cF\left(\alpha x,\alpha^{-1} y\right)=\cF( x,y).$
The scaling invariance nature of \eqref{mat_fct_ncvx} results in undesired landscape properties, and makes the 
analysis of first order algorithms particularly difficult.
To facilitate further discussions, below we characterize the landscape of  \eqref{mat_fct_ncvx}.

\begin{lemma}[\bf Landscape Analysis]\label{lem_landscape}
The gradients of $\cF$ with respect to $x$ and $y$ take the form:
\begin{align*}
\nabla_{x}\cF(x,y)=(xy^\top-M)y,~
\nabla_{y}\cF(x,y)=(xy^\top-M)^{\top}x.
\end{align*}
Then $\cF$ has two types of stationary points: (i) For any $\alpha\neq 0,$  $\left(\alpha u_*, \alpha^{-1} v_*\right)$ is a global optimum; (ii) For any $x\in \RR^{d_1},y\in\RR^{d_2}$ such that $x^\top u_*=y^\top v_*=0,$   $(x,0)$ and $(0,y)$ are strict saddle.
\end{lemma}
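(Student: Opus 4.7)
The plan is to proceed in three steps: derive the gradient formulas, classify the zeros of the gradient, and then verify the strict-saddle property at the non-optimal stationary points.

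First, I would compute the gradients by expanding $\cF(x,y) = \tfrac{1}{2}\tr\bigl((xy^\top-M)^\top(xy^\top-M)\bigr)$ and differentiating with respect to each variable in turn. This is routine matrix calculus and gives the stated expressions $\nabla_x \cF = (xy^\top - M)y$ and $\nabla_y \cF = (xy^\top - M)^\top x$.

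Second, to classify stationary points, I would set both gradients to zero and split into cases based on whether $x$ or $y$ vanishes. If $y \neq 0$, the first equation reads $\|y\|_2^2\, x = My = (v_*^\top y)\, u_*$, so $x$ must be a multiple of $u_*$; symmetrically, if $x\neq 0$, then $y$ is a multiple of $v_*$. Writing $x = \beta u_*$, $y = \gamma v_*$ with both nonzero, substitution into either stationarity equation forces $\beta\gamma = 1$, recovering the family $(\alpha u_*, \alpha^{-1} v_*)$ of global optima (checking $\cF = 0$ there is immediate). In the remaining cases $y = 0$ or $x = 0$, the stationarity condition reduces to $M^\top x = 0$, i.e.\ $u_*^\top x = 0$, or to $My = 0$, i.e.\ $v_*^\top y = 0$, respectively, yielding precisely the saddle family described.

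Third, I would verify the strict-saddle property. I expect this to be the main step, though it is still essentially algebraic. At a candidate $(x,0)$ with $x^\top u_* = 0$, I would expand
\begin{align*}
\cF(x+s,\, t) = \tfrac{1}{2}\|(x+s)t^\top - M\|_{\rm F}^2
\end{align*}
and collect the quadratic part in the perturbation $(s,t)$. The cross term with $M$ contributes $-(s^\top u_*)(v_*^\top t)$ because $x^\top u_*=0$ kills the other piece, while the purely quadratic part contributes $\|x\|_2^2\|t\|_2^2$; so the Hessian quadratic form is
\begin{align*}
Q(s,t) = \|x\|_2^2 \|t\|_2^2 - 2(s^\top u_*)(v_*^\top t).
\end{align*}
Choosing $s = u_*$ and optimizing $t = c\, v_*$ over $c$ yields the minimum value $-1/\|x\|_2^2 < 0$ when $x\neq 0$, and the direction $s=u_*,\, t=v_*$ gives $Q=-2$ when $x=0$; in either case a strictly negative eigenvalue of the Hessian is exhibited. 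The case $(0,y)$ is symmetric by swapping the roles of $x\leftrightarrow y$ and $u_*\leftrightarrow v_*$.

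The only substantive subtlety is the strict-saddle verification, since the saddle set is large and one must exhibit a descent direction uniformly; the explicit Hessian quadratic form above handles this cleanly. Everything else reduces to direct computation, so the proof should be short.
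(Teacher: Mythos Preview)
Your proposal is correct and follows essentially the same route as the paper: set the gradients to zero, classify stationary points using the rank-1 structure of $M$, then exhibit an explicit direction of negative curvature at the non-optimal stationary points. The only cosmetic difference is that the paper introduces a full orthonormal basis containing $u_*$ and $v_*$ and works in coordinates, whereas you argue directly from $My=(v_*^\top y)u_*$ and $M^\top x=(u_*^\top x)v_*$; your version is slightly more streamlined but the underlying computation is the same.
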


The scaling-invariance leads to infinitely many global optima for \eqref{mat_fct_ncvx}, each taking the form $(\alpha u_*, \alpha^{-1} v_*)$. However,  Lemma \ref{lem_condition_number} shows that  different values of $\alpha$ lead to significantly different local landscape around the global minima.

\begin{lemma}\label{lem_condition_number}
The condition number  of  the Hessian matrix of $\cF$ at the global optimum $(\alpha u_*,\alpha^{-1} v_*)$ is $$\kappa\left(\nabla^2 \cF\left(\alpha u_*,\alpha^{-1} v_*\right)\right)=\max\left\{\alpha^4,\alpha^{-4}\right\}+1.$$
\end{lemma}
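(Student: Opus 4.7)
The plan is to compute the Hessian of $\cF$ at $(\alpha u_*, \alpha^{-1} v_*)$ explicitly, then diagonalize it by exhibiting a convenient invariant subspace decomposition. Start by rewriting the objective as $\cF(x,y)=\tfrac{1}{2}\|x\|_2^2\|y\|_2^2 - x^\top M y + \tfrac{1}{2}\|M\|_{\rm F}^2$, which makes differentiation clean. The four Hessian blocks at a general point are then $H_{xx}=\|y\|_2^2 I_{d_1}$, $H_{yy}=\|x\|_2^2 I_{d_2}$, and $H_{xy}=H_{yx}^\top = 2xy^\top - M$. Substituting $(x,y)=(\alpha u_*,\alpha^{-1} v_*)$ and using $\|u_*\|_2=\|v_*\|_2=1$, $M=u_*v_*^\top$, the off-diagonal blocks simplify to $2M-M=u_*v_*^\top$, giving
\[
H \;=\; \begin{pmatrix} \alpha^{-2} I_{d_1} & u_* v_*^\top \\ v_* u_*^\top & \alpha^2 I_{d_2} \end{pmatrix}.
\]

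Next I would decompose $\RR^{d_1}\oplus\RR^{d_2}$ into three $H$-invariant subspaces: the two-dimensional subspace $V_1=\spspan\{(u_*,0),(0,v_*)\}$, the subspace $V_2 = u_*^\perp \oplus \{0\}$ of dimension $d_1-1$, and $V_3=\{0\}\oplus v_*^\perp$ of dimension $d_2-1$. For any $(x^\perp,0)\in V_2$, the cross term $v_*u_*^\top x^\perp$ vanishes, so $H(x^\perp,0) = (\alpha^{-2} x^\perp, 0)$; similarly $H$ acts on $V_3$ as multiplication by $\alpha^2$. On $V_1$, writing vectors as $(au_*, bv_*)$, the action of $H$ is given by the $2\times 2$ matrix
\[
\tilde H \;=\; \begin{pmatrix} \alpha^{-2} & 1 \\ 1 & \alpha^2 \end{pmatrix},
\]
whose trace is $\alpha^{-2}+\alpha^2$ and determinant is $0$, hence eigenvalues $\{0,\ \alpha^{-2}+\alpha^2\}$. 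Collecting everything, the spectrum of $H$ is $\{0\}$ together with $\alpha^{-2}+\alpha^2$, the value $\alpha^{-2}$ with multiplicity $d_1-1$, and $\alpha^2$ with multiplicity $d_2-1$.

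Finally, the condition number is the ratio of the largest eigenvalue to the smallest \emph{positive} one (the zero eigenvalue reflects the scaling invariance $(\alpha u_*, \alpha^{-1} v_*)\mapsto (\alpha(1{+}\epsilon)u_*, \alpha^{-1}(1{+}\epsilon)^{-1}v_*)$ and should be excluded). The maximum eigenvalue is $\alpha^{-2}+\alpha^2$, and the minimum positive one is $\min\{\alpha^{-2},\alpha^2\}$; a short case split on whether $|\alpha|\ge 1$ or $|\alpha|<1$ gives $\kappa(H)=(\alpha^{-2}+\alpha^2)/\min\{\alpha^{-2},\alpha^2\}=\max\{\alpha^4,\alpha^{-4}\}+1$, as claimed.

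The only subtle point is the handling of the zero eigenvalue in the definition of $\kappa$; I would briefly note that it corresponds precisely to the tangent direction of the scaling orbit $(au_*, a^{-1} v_*)$, which is a flat direction forced by the invariance, and therefore is conventionally excluded. Aside from this, every step is a direct computation once the block structure of $H$ and the $V_1\oplus V_2\oplus V_3$ decomposition are identified.
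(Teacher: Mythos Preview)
Your proposal is correct and follows essentially the same approach as the paper: write the Hessian in block form, read off the eigenvalues $\alpha^{2}+\alpha^{-2}$, $\alpha^{2}$, $\alpha^{-2}$, and take the ratio of the largest to the smallest. Your version is in fact more complete than the paper's terse argument, since you explicitly exhibit the invariant subspace decomposition and address the zero eigenvalue coming from the scaling-invariance direction, which the paper's proof silently drops.
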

\begin{figure*}
\centering
\includegraphics[width=\linewidth]{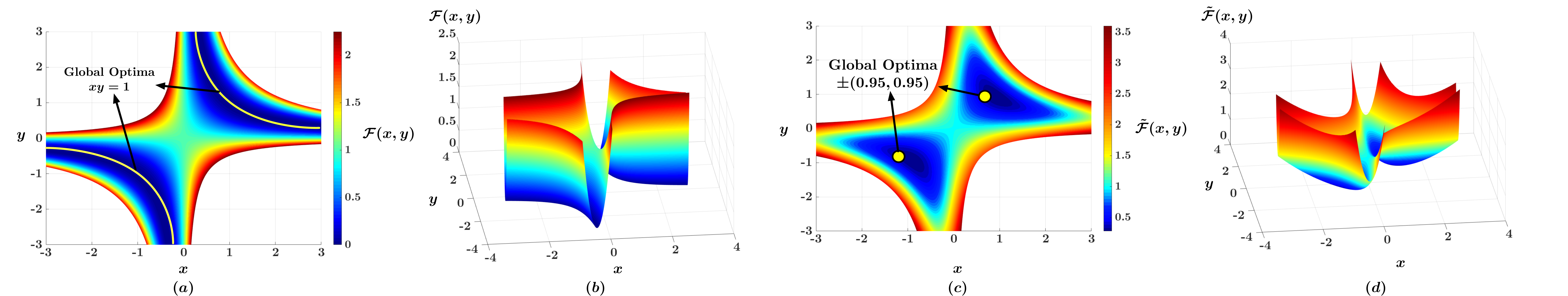}
\vspace{-0.2in}
\caption{ The visualization of objective functions $\cF(x,y)=(1-xy)^2$ and  $\tilde\cF(x,y)$ with $x,y\in \RR$ and $\sigma_1^2=\sigma_2^2=0.0975.$  For $\cF(x,y),$ any $(x,y)$ that satisfies $xy=1$ is a global minimum (as shown in (a)). $\tilde\cF(x,y)$ only has global minima close to $\pm(1,1)$ (as shown in (c)).}\label{fig:landscape}
\vspace{-0.1in}
\end{figure*}
From Lemma \ref{lem_condition_number} we know  that when $|\alpha|$ is extremely large or small, 
 the global minimum $(\alpha u_*, \alpha^{-1} v_*)$ is ill-conditioned (large condition number).
The condition number also relates to the sharp/flat minima in neural networks, which we will discuss in detail in Section \ref{sec_discussion}.
Our previous discussion implies that any global optimum $(x_*, y_*)$  to \eqref{mat_fct_ncvx} will be ill-conditioned if the norm ratio $\norm{x_*}_2/\norm{y_*}_2$ is close to zero or infinity.
To facilitate further discussions, we define the balancedness property as follows.

\begin{definition}[$\gamma$-balancedness]\label{def:balance}
We say that $(x,y)\in(\RR^{d_1},\RR^{d_2})$ is $\gamma-$balanced for some positive number $\gamma$ if  $\norm{x}_2=\gamma\norm{y}_2.$
Informally, we say $(x,y)$ is unbalanced if $\gamma$ is close to zero or infinity, and $(x,y)$ is balanced if $\gamma$ is close to 1.\end{definition}

Our next lemma shows that, in the presence of noise, the smoothed problem  \eqref{mat_fact_eq}  has only balanced optima, and the balancedness is completely determined by the ratio of the variance of the noise injected to the iterates.
\begin{lemma}\label{lem_convolutional}
	We say the noise in the Perturbed GD is $\gamma-$balanced if $\EE\left[\norm{\xi_1}_2^2\right]=\gamma^2\EE\left[\norm{\xi_2}_2^2\right]$. 
	For $\gamma = 1$, we say the noise is balanced.
	For any noise ratio $\gamma>0,$ if we take 
	$\sigma^2=\EE\left[\norm{\xi_1}_2^2\right]=\gamma^2\EE\left[\norm{\xi_2}_2^2\right]\leq \min\{\gamma,1\},$
	\eqref{mat_fact_eq} has two global optima $(\tilde u_*(\gamma),\tilde v_*(\gamma))=\pm\sqrt{\gamma-\sigma^2}\left(u_*,\gamma^{-1}v_*\right)$ and one saddle point $(0,0)$ with  $\lambda_{\min}(\nabla^2 \tilde F(0,0))<0$.
\end{lemma}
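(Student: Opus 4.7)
The plan is to exploit the analytic tractability of Gaussian convolution of a quadratic objective, which reduces the smoothed problem to an explicit ridge-regularized version of matrix factorization. First, I would expand
\begin{equation*}
\EE_{\xi_1,\xi_2}\bigl\|(x+\xi_1)(y+\xi_2)^\top - M\bigr\|_{\mathrm{F}}^2
\end{equation*}
and use independence and zero mean of $\xi_1,\xi_2$ to kill all cross terms, yielding
\begin{equation*}
\tilde\cF(x,y) = \tfrac{1}{2}\|xy^\top - M\|_{\mathrm{F}}^2 + \tfrac{\tau_2}{2}\|x\|_2^2 + \tfrac{\tau_1}{2}\|y\|_2^2 + \tfrac{\tau_1\tau_2}{2},
\end{equation*}
where $\tau_i := \EE\|\xi_i\|_2^2$, so $\tau_1 = \sigma^2$ and $\tau_2 = \sigma^2/\gamma^2$ under the balance hypothesis. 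The key structural gain is that smoothing converts the scale-invariant bilinear objective into a ridge-regularized version with asymmetric weights, breaking the scale invariance and selecting a preferred relative size of $\|x\|_2$ and $\|y\|_2$.

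Next, I would enumerate the critical points directly from stationarity. The gradients take the form
\begin{equation*}
\nabla_x\tilde\cF(x,y) = (\|y\|_2^2 + \tau_2)\, x - (v_*^\top y)\, u_*,\qquad \nabla_y\tilde\cF(x,y) = (\|x\|_2^2 + \tau_1)\, y - (u_*^\top x)\, v_*,
\end{equation*}
and strict positivity of $\tau_1,\tau_2$ forces $x = a u_*$ and $y = b v_*$ at any stationary point. Substituting reduces stationarity to the scalar system $(b^2 + \sigma^2/\gamma^2)a = b$ and $(a^2 + \sigma^2)b = a$, which yields either $(a,b) = (0,0)$, or by dividing the two equations, $b = a/\gamma$ together with $a^2 = \gamma - \sigma^2 \ge 0$ (real under $\sigma^2 \le \gamma$). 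This produces exactly the claimed pair $\pm\sqrt{\gamma-\sigma^2}\,(u_*, \gamma^{-1}v_*)$ and no other critical points.

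To identify the two nontrivial critical points as global minima, I would note that $\tilde\cF$ is coercive because the separate ridge penalties control $\|x\|_2$ and $\|y\|_2$ individually, so a global minimizer exists and must be among the three critical points. A direct evaluation gives $\tilde\cF(\pm\sqrt{\gamma-\sigma^2}(u_*,\gamma^{-1}v_*)) = \sigma^2/\gamma$ versus $\tilde\cF(0,0) = \tfrac{1}{2} + \sigma^4/(2\gamma^2)$, whose difference factors as $(\gamma-\sigma^2)^2/(2\gamma^2) > 0$. Finally, to exhibit the negative eigenvalue at the origin I would compute
\begin{equation*}
\nabla^2\tilde\cF(0,0) = \begin{pmatrix} (\sigma^2/\gamma^2) I_{d_1} & -M \\ -M^\top & \sigma^2 I_{d_2} \end{pmatrix},
\end{equation*}
and restrict to the invariant subspace $\spspan\{u_*\}\oplus\spspan\{v_*\}$, on which the Hessian acts as a $2\times 2$ symmetric matrix with diagonal $(\sigma^2/\gamma^2,\sigma^2)$ and off-diagonal $-1$; its determinant $\sigma^4/\gamma^2 - 1$ is strictly negative under $\sigma^2 < \gamma$, delivering a negative eigenvalue. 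The anticipated obstacle is less the Hessian bookkeeping than the enumeration step: ruling out spurious critical points off the span of $(u_*,v_*)$. This collapses once we observe that positivity of the ridge coefficients forces immediate alignment of $x$ with $u_*$ and $y$ with $v_*$ directly from the gradient equations, so the entire critical-point count is reduced to a two-variable polynomial system.
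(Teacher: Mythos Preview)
Your proposal is correct and follows essentially the same route as the paper: set the gradient of the smoothed objective to zero, use positivity of the ridge coefficients to force $x\in\mathrm{span}\{u_*\}$ and $y\in\mathrm{span}\{v_*\}$, solve the resulting two-variable system, and classify via the Hessian. The only cosmetic differences are that you identify the global minima by coercivity plus a direct function-value comparison (the paper instead verifies the Hessian is positive definite at the nontrivial stationary point), and you detect the negative eigenvalue at the origin via the determinant of the $2\times 2$ restriction rather than by exhibiting the test direction $(u_*,v_*)$; both variants are equally valid.
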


 We see that  when the noise is $\gamma$-balanced and the noise level is sufficiently small, 
 the global minima of 
 the smoothed problem \eqref{mat_fact_eq} can be arbitrarily close to the $\gamma$-balanced global minima of the original rank-1 matrix factorization problem  \eqref{mat_fct_ncvx}. 
 Compared with Lemma \ref{lem_landscape}, Lemma \ref{lem_convolutional} shows that noise effectively smooths the landscape of the original problem  \eqref{mat_fct_ncvx},  the unbalanced optima (with ill-conditioned Hessian) to \eqref{mat_fct_ncvx} are no longer optima after convolution. 
See Figure \ref{fig:landscape} for  detailed illustration of the landscape for the original problem  \eqref{mat_fct_ncvx}  and the smoothed  problem  \eqref{mat_fact_eq}.

Moreover, one can verify that the saddle point $(0,0)$ of problem \eqref{mat_fact_eq}  enjoys the strict saddle property, i.e., the smallest eigenvalue of Hessian at $(0,0)$ is negative. By the stable manifold theory proposed in \citet{lee2019first},  Perturbed GD with an infinitesimal step size and vanishing noise avoids all the strict saddle points and converges to the global optima, asymptotically. However, this asymptotic result does not provide any finite time guarantee. 

We next present the non-asymptotic convergence analysis of our proposed Perturbed GD algorithm.
Our analysis shows that Perturbed GD has implicit bias  determined by the injected noise. 
Specifically, we consider balanced noise, i.e., 
$\EE\left[\norm{\xi_1}_2^2\right]=\EE\left[\norm{\xi_2}_2^2\right]$,
and show that 
Perturbed GD converges to the balanced $\pm(u_*,v_*)$ in polynomial time in Theorem \ref{thm_main}.

\begin{theorem}[\bf Convergence Analysis]\label{thm_main}
Suppose  $x_0\in \RR^{d_1},$  $y_0\in \RR^{d_2}.$ For any $\epsilon>0$ and for any $\delta\in(0,1),$  we take 
$\sigma^2=\EE\left[\norm{\xi_1}_2^2\right]=\EE\left[\norm{\xi_2}_2^2\right]=\rm{poly}(\epsilon,(\log(1/\delta))^{-1}),$
$$\eta={\rm{poly}}( \sigma,\epsilon, (d_1+d_2)^{-1},(\log{(d_1+d_2)})^{-1},(\log(1/\delta))^{-1}).$$


With probability at least $1-\delta,$ we have $\norm{x_t-u_*}_2\leq \epsilon$ and $\norm{y_t-v_*}_2\leq \epsilon.$ for all $t_1\leq t\leq  T_1= O\left({1}/{\eta^2}\right),$ where
$t_1=O\left(\eta^{-1} \sigma^{-2}\log(1/\eta )\log(1/\delta)\right).$
\end{theorem}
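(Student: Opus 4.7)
The plan is to view Perturbed GD as stochastic gradient descent on the smoothed objective
\begin{align*}
\tilde\cF(x,y)=\tfrac{1}{2}\norm{xy^\top-M}_{\rm F}^2+\tfrac{\sigma^2}{2}\bigl(\norm{x}_2^2+\norm{y}_2^2\bigr)+\tfrac{\sigma^4}{2},
\end{align*}
whose global minima (by Lemma \ref{lem_convolutional}) are the balanced pair $\pm\sqrt{1-\sigma^2}(u_*,v_*)$, and to decompose each iterate through the signal coordinates $a_t=\langle x_t,u_*\rangle$ and $b_t=\langle y_t,v_*\rangle$, the orthogonal residuals $x_t^{\perp}=x_t-a_tu_*$ and $y_t^{\perp}=y_t-b_tv_*$, and the unbalancedness $\Phi_t=\norm{x_t}_2^2-\norm{y_t}_2^2$. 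A direct computation shows $\nabla_x\tilde\cF=(\norm{y}_2^2+\sigma^2)x-bu_*$ and symmetrically, so the stochastic gradient of the algorithm is an unbiased estimate of $\nabla\tilde\cF$. The argument proceeds in three phases.

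Phase 1 (saddle escape). Near the origin the expected update reduces to the $2\times 2$ linear system $a_{t+1}\approx a_t+\eta(b_t-\sigma^2 a_t)$, $b_{t+1}\approx b_t+\eta(a_t-\sigma^2 b_t)$, whose unstable eigenvector is $(1,1)/\sqrt 2$ with rate $1-\sigma^2$ and stable eigenvector is $(1,-1)/\sqrt 2$ with rate $-(1+\sigma^2)$. The injected Gaussian noise places an $\Omega(\eta\sigma)$ component along $(u_*,v_*)$ at each step, so standard anti-concentration and strict-saddle amplification (cf.\ \citet{jin2017escape}) yield $\min(|a_t|,|b_t|)=\Omega(1)$ and $\norm{x_t}_2,\norm{y_t}_2=O(1)$ with probability at least $1-\delta/3$ after $O(\eta^{-1}\log(1/\eta)\log(1/\delta))$ iterations, even starting from pathological initializations such as $(0,0)$ or from vectors orthogonal to $(u_*,v_*)$.

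Phase 2 (contraction and rebalancing). Once the signal is $\Omega(1)$, the conditional-mean identities
\begin{align*}
\EE\bigl[x_{t+1}^{\perp}\mid\mathcal F_t\bigr]=\bigl(1-\eta(\norm{y_t}_2^2+\sigma^2)\bigr)x_t^{\perp},\qquad \EE\bigl[\Phi_{t+1}\mid\mathcal F_t\bigr]=(1-2\eta\sigma^2)\Phi_t+O(\eta^2),
\end{align*}
(and the symmetric one for $y_t^{\perp}$) deliver the two key contractions. The first drives the orthogonal residuals below $\epsilon$ in $O(\eta^{-1}\log(1/\epsilon))$ steps, while the second is the discrete counterpart of the continuous identity $\dot\Phi=-2\sigma^2\Phi$ and rebalances $\Phi_t\to 0$ at rate $2\eta\sigma^2$. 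This is precisely the quantitative form of the implicit bias: in the noiseless flow of \citet{du2018algorithmic} the quantity $\Phi$ is exactly conserved, whereas the noise breaks that conservation law. Freedman-type martingale concentration on $\Phi_{t+1}-\EE[\Phi_{t+1}\mid\mathcal F_t]$ controls the stochastic deviations, so after a further $O(\eta^{-1}\sigma^{-2}\log(1/\epsilon)\log(1/\delta))$ iterations $(x_t,y_t)$ enters an $\epsilon$-ball around one of $\pm(u_*,v_*)$, giving the stated $t_1$.

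Phase 3 (staying close to $T_1=O(\eta^{-2})$ steps) and the main obstacle. Near the smoothed minimum $\nabla^2\tilde\cF$ is uniformly positive definite, the expected update contracts, and the per-step variance is $O(\eta^2\sigma^2)$; an Azuma union bound over $T_1$ steps keeps the cumulative stochastic drift at $O(\sigma\sqrt{\log(T_1/\delta)})$, which is $\le\epsilon$ under the stipulated polynomial choice of $\sigma$ and $\eta$. The hardest step, I expect, is Phase 2: the clean $-2\eta\sigma^2\Phi$ contraction in the conditional mean is accompanied by $O(\eta^2)$ discretization corrections nonlinear in $\norm{x_t}_2$ and $\norm{y_t}_2$, and the conditional variance of $\Phi_{t+1}$ itself depends on the iterate, so closing the induction requires an a priori bound $\norm{x_t}_2,\norm{y_t}_2=O(1)$ maintained simultaneously with the $\Phi$-decay, and Freedman's inequality applied with a data-dependent predictable quadratic variation. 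A secondary difficulty is bridging phases: the good event at $t=t_1$ must simultaneously certify saddle escape, alignment with $(u_*,v_*)$, and norm control, demanding careful union bounds over the arbitrary initialization and the entire noise trajectory.
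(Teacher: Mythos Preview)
Your high-level strategy---signal/orthogonal decomposition, a phase structure, and supermartingale concentration via Azuma/Freedman---is the same as the paper's, and the identity $\EE[\Phi_{t+1}\mid\cF_t]=(1-2\eta\sigma^2)\Phi_t+O(\eta^2)$ for $\Phi_t=\norm{x_t}_2^2-\norm{y_t}_2^2$ is correct; it is essentially the paper's Phase~III (Lemma~\ref{lem_convergence}), which instead tracks $(a_t^2-b_t^2)^2$ after the orthogonal parts are already negligible.

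There is, however, a genuine gap in your Phase~2. The two contractions you list---orthogonal residuals $\to 0$ and $\Phi_t\to 0$---do \emph{not} imply $(x_t,y_t)\to\pm(u_*,v_*)$: they give $x_t\approx a_tu_*$, $y_t\approx b_tv_*$ with $a_t^2\approx b_t^2$, but say nothing about the product $a_tb_t$. The point $(2u_*,2v_*)$ has $\Phi=0$ and zero orthogonal residual yet is far from any minimum. The paper closes this by inserting a separate phase (its Phase~II, Lemma~\ref{lem_loss}) between saddle escape and rebalancing: once $a_tb_t\ge 1/4$ is secured (Lemma~\ref{lem_escape}), the potential $(a_tb_t-1)^2$ contracts at rate $\Theta(\eta)$---much faster than the $\eta\sigma^2$ rebalancing rate---down to $O(\sigma^2)$, and only \emph{after} $a_tb_t\approx 1$ does the slow $\Phi$-contraction finish the argument. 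You need this third potential. A related point is the $O(1)$ norm bound you flag as the main obstacle: the paper does not attempt it. For arbitrary initialization it only establishes $\norm{x_t}_2^2+\norm{y_t}_2^2\le 2/\sigma^2$ (Lemma~\ref{lem_bounded}, proved \emph{first} and fed as the a~priori input into every later phase), and this coarse $1/\sigma$ bound is what propagates through all the $O(\eta^2)$ remainders and martingale differences---hence the high powers of $\sigma$ in the step-size condition. An $O(1)$ bound is genuinely unavailable until both the loss and rebalancing phases have run; before rebalancing the iterate can sit near an unbalanced global minimum $(\alpha u_*,\alpha^{-1}v_*)$ with $|\alpha|$ of order $1/\sigma$. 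Accordingly the paper's phase order is boundedness $\to$ orthogonal decay $\to$ saddle escape $\to$ loss $\to$ balance (Lemmas~\ref{lem_bounded}, \ref{lem_b_converge}, \ref{lem_escape}, \ref{lem_loss}, \ref{lem_convergence}), and the $2\times 2$ linear saddle picture you invoke becomes valid only after the first two of these are in place.
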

 Theorem \ref{thm_main} differs from the convergence analysis of GD in \citet{du2018algorithmic} in the following aspects: (i) Perturbed GD converges regardless of initializations, while GD requires a random initialization near $(0,0)$; 
 (ii) Perturbed GD guarantees convergence with high probability, while GD converges with only constant probability over randomness of initializations;
  (iii) Perturbed GD converges to the balanced solutions ($\norm{x}_2/\norm{y}_2=1$) while GD only maintains approximate balancedness, i.e.,  $c_0\leq\norm{x}_2/\norm{y}_2\leq C_0$ for some absolute constants $c_0, C_0>0.$ 
 
We provide a proof sketch that contains the essential ingredients of characterizing the convergence, since the proof of Theorem \ref{thm_main} is very technical and highly involved. 
See more details and the proof of technical lemmas in Appendix \ref{pf_2}.

\begin{proof}[Proof Sketch]
The convergence of Perturbed GD consists of  three phases: 
{\bf Phase I}: Regardless of initialization, within polynomial time the noise encourages Perturbed GD to escape from region with undesired landscape (e.g., strict saddle points)  and enter the region with benign landscape.
{\bf Phase II}: Perturbed GD drives the loss to zero and approaches the set of global minima $\left\{(x,y)\big|xy^\top=M\right\}.$
{\bf Phase III}: After the loss is sufficiently small, 
the injected noise dominates the update, which helps the Perturbed GD to balance $x$ and $y,$ and converge to a balanced optimum.

Before we proceed with our proof, 
we first define some notations. Let $U$ and $V$ denote the linear span of $u_*$ and $v_*$, respectively, i.e.,
$
\cU=\{\alpha_1 u_*:\alpha_1\in\RR\},~~\cV=\{\alpha_1 v_*:\alpha_1\in\RR\}.
$
The corresponding orthogonal complement of $\cU$ (or $\cV$) in $\RR^{d_1}$ (or $\RR^{d_2}$) is denoted as $\cU^\perp$ (or $\cV^\perp$).
Our  analysis considers the convergence of Perturbed GD in $(\cU,\cV)$ and $(\cU^\perp,\cV^\perp),$ respectively. Specifically, we take the following orthogonal decomposition of $x_t$ and $y_t$:
\begin{align*}
x_t&=u_*^\top x_t  u_*+(x_t-u_*^\top x_t   u_*),\\y_t&=v_*^\top y_t   v_*+(y_t-v_*^\top y_t   v_*).
\end{align*}
One can check that $(x_t-u_*^\top x_t   u_*)\in\cU^\perp$ and $(y_t-v_*^\top y_t   v_*)\in \cV^\perp,$ respectively. 

\noindent $\bullet$ {\bf Phase I}: Regardless of initializations,
 the following lemma shows that $(x_t-u_*^\top x_t   u_*)$ and $(y_t-v_*^\top y_t   v_*)$ vanish after polynomial time.
\begin{lemma}\label{lem_b_converge}
Suppose  $\norm{x_t}^2_2+\norm{y_t}^2_2\leq 2/\sigma^2$ holds for all $t>0.$ For any $\delta\in(0,1)$  we take 
$$\eta\leq \eta_2=C_4{\sigma^8}\left(\log((d_1+d_2)/\delta)\log(1/\delta)\right)^{-1},$$  where $C_4$ is some positive constant. Then with probability at least $1-\delta,$ we have 
\begin{align*}
\norm{x_t-u_*^\top x_t   u_*}_2^2&\leq  2\eta C_2\sigma^{-2},\\
\norm{y_t-v_*^\top y_t   v_*}_2^2&\leq  2\eta C_2\sigma^{-2}
\end{align*}
for any $\tau_1\leq t\leq T_1= O(1/\eta^2),$ where $\tau_1=O(\eta^{-1}\sigma^{-2}\log(1/\eta)\log(1/\delta))$ and  $C_2=(\sigma^2+1/\sigma^2)(2/\sigma^4+6/d_1+6/d_2+6\sigma^4).$
\end{lemma}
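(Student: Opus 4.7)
The plan is to exploit the crucial structural observation that $M=u_*v_*^\top$ has range $\cU$ and co-range $\cV$, so projecting the Perturbed GD updates onto the orthogonal subspaces $\cU^\perp$ and $\cV^\perp$ eliminates the $M$ term and completely decouples the dynamics of $x_t^\perp:=x_t-(u_*^\top x_t)u_*$ and $y_t^\perp:=y_t-(v_*^\top y_t)v_*$ from the in-subspace components. Concretely, rewriting $x_{t+1}=x_t-\eta\|\tilde y_t\|_2^2\tilde x_t+\eta M\tilde y_t$ and using $M\tilde y_t=u_*(v_*^\top\tilde y_t)\in\cU$, projection onto $\cU^\perp$ yields the clean linear recursion
\[
x_{t+1}^\perp \;=\; (1-\eta\|\tilde y_t\|_2^2)\,x_t^\perp \;-\; \eta\|\tilde y_t\|_2^2\,\xi_{1,t}^\perp,
\]
where $\xi_{1,t}^\perp$ is the $\cU^\perp$-component of $\xi_{1,t}$ (still an isotropic Gaussian with $\EE\|\xi_{1,t}^\perp\|_2^2=(1-1/d_1)\sigma^2$), and the symmetric identity holds for $y_{t+1}^\perp$. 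Crucially, $\xi_{1,t}$ is independent of $\tilde y_t = y_t+\xi_{2,t}$.

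Next, squaring the display above and conditioning on the history $\mathcal{H}_t$ generated by $(x_s,y_s,\xi_{1,s},\xi_{2,s})_{s\le t}$, the cross term vanishes by independence and centeredness of $\xi_{1,t}$, giving
\[
\EE\!\left[\|x_{t+1}^\perp\|_2^2\,\big|\,\mathcal{H}_t\right] \;=\; \EE\!\left[(1-\eta\|\tilde y_t\|_2^2)^2\,\big|\,\mathcal{H}_t\right]\|x_t^\perp\|_2^2 \;+\; \eta^2\,\EE\!\left[\|\tilde y_t\|_2^4\,\big|\,\mathcal{H}_t\right]\EE\|\xi_{1,t}^\perp\|_2^2.
\]
Using $\EE[\|\tilde y_t\|_2^2|\mathcal{H}_t]=\|y_t\|_2^2+\sigma^2\ge\sigma^2$, the a priori bound $\|x_t\|_2^2+\|y_t\|_2^2\le 2/\sigma^2$, and explicit control of the conditional fourth moment of $\|\tilde y_t\|_2^2$ in terms of $\sigma^{-2}$ and $1/d_i$, the prescribed choice $\eta\le\eta_2$ produces a contractive affine recursion of the form
\[
\EE\!\left[\|x_{t+1}^\perp\|_2^2\,\big|\,\mathcal{H}_t\right] \;\le\; (1-2\eta\sigma^2)\,\|x_t^\perp\|_2^2 \;+\; \eta^2 C_2,
\]
whose stationary value is on the order of the target bound $\eta C_2/\sigma^2$. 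Iterating from $\|x_0^\perp\|_2^2\le 2/\sigma^2$ (implied by the a priori bound) drives the expected squared norm below $2\eta C_2/\sigma^2$ after $\tau_1=O(\eta^{-1}\sigma^{-2}\log(1/\eta))$ steps.

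Finally, to upgrade the in-expectation decay to a uniform high-probability bound over $t\in[\tau_1,T_1]$, I set up a supermartingale by subtracting off the drift predicted by the recursion and apply a Freedman-type concentration inequality. The martingale increments are sub-exponential quadratic forms in Gaussians; their conditional variance is controlled by $\eta^3 C_2\|x_t^\perp\|_2^2$ plus lower-order terms, which stays within the desired band as long as $\|x_t^\perp\|_2^2$ does, and a union bound over the $O(1/\eta^2)$ iterations introduces the $\log(1/\eta)\log(1/\delta)$ overhead appearing in $\tau_1$. The identical argument for $y_{t+1}^\perp$, with the roles of the two factors swapped, yields the second inequality.

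The hard part is the concentration step: the multiplicative factor $(1-\eta\|\tilde y_t\|_2^2)$ is a random variable whose deviations could in principle compound multiplicatively across $T_1=O(1/\eta^2)$ iterations, and the quartic moment $\EE[\|\tilde y_t\|_2^4|\mathcal{H}_t]$ governing the noise injection term must be carefully balanced against the drift. The specific scaling $\eta_2 = C_4\sigma^8/(\log((d_1+d_2)/\delta)\log(1/\delta))$ is precisely what is needed so that the per-step stochastic fluctuation in the contraction factor is strictly dominated by the linear-in-$\eta\sigma^2$ drift even after the union bound, ensuring that the supermartingale property survives and that the log factors in $\tau_1$ emerge with the stated dependence.
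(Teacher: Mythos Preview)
Your derivation is essentially the paper's argument: you obtain the same projected recursion (the paper writes it in the $\beta_{1,t}$ basis but it is exactly your identity), the same contractive affine inequality
\[
\EE\!\left[\|x_{t+1}^\perp\|_2^2\,\middle|\,\cF_t\right]\le(1-2\eta\sigma^2)\|x_t^\perp\|_2^2+\eta^2 C_2,
\]
and then a supermartingale plus concentration to control the tail. Your projection-based derivation is in fact cleaner than the paper's coordinate expansion, and swapping Azuma for Freedman is harmless.

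There is one genuine gap in your plan, however. You write that iterating the contraction ``drives the expected squared norm below $2\eta C_2/\sigma^2$ after $\tau_1$ steps,'' and then launch the supermartingale argument. But decay of the \emph{expectation} does not by itself give a high-probability hitting time, and you cannot simply run the concentration argument from $t=0$: under only the a priori bound $\|x_t^\perp\|_2^2\le 2/\sigma^2$, the martingale increments are of order $\eta\sigma^{-2}$ (up to $\log$ factors), which after summing and un-scaling leaves a fluctuation of order $\sqrt{\eta}\,\sigma^{-3}$. This is much larger than the target $\eta C_2/\sigma^2$ under the stated choice $\eta\le C_4\sigma^8$, so the single-phase argument fails. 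The paper handles this with a two-phase structure: first a Markov-inequality argument (repeated $O(\log(1/\delta))$ times, which is where the $\log(1/\delta)$ factor in $\tau_1$ actually comes from) shows that $\|x_\tau^\perp\|_2^2\le\eta C_2/\sigma^2$ at \emph{some} time $\tau\le\tau_1$ with probability $1-\delta$; only then does it restart the supermartingale with the indicator $\mathds{1}_{\{\|x_s^\perp\|_2^2\le 2\eta C_2/\sigma^2\ \forall s\}}$, under which the increments drop to the much smaller scale needed for Azuma to close. Your parenthetical remark that the variance ``stays within the desired band as long as $\|x_t^\perp\|_2^2$ does'' is the right intuition for the second phase, but you still need the hitting-time step to justify entering that band in the first place.
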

Lemma \ref{lem_b_converge} shows that with an arbitrary initialization, the projection of $x_t$ (and $y_t$) onto $\cU^\perp$ (and $\cV^\perp$) vanishes. 
Thus, we only need to characterize the algorithmic behavior of Perturbed GD in subspace $(\cU,\cV).$  
However, as the projection of $x_t$ (and $y_t$) onto $\cU^\perp$ (and $\cV^\perp$) vanishes,
Perturbed GD can possibly approach the region with undesired landscape, e.g., the small neighborhood around the saddle point.
The next lemma shows that the Perturbed GD will escape such regions within polynomial time.

\begin{lemma}\label{lem_escape}
Suppose $\norm{x_t-u_*^\top x_t   u_*}_2^2\leq  2\eta C_2\sigma^{-2}$ and $\norm{y_t-v_*^\top y_t   v_*}_2^2\leq  2\eta C_2\sigma^{-2}$ hold for all $t>0.$ For any $\delta\in(0,1),$ we take $\eta\leq\eta_3=C_3\sigma^{12}\left(\log((d_1+d_2)/\delta)\log(1/\delta)\right)^{-1},$ where $C_3$ is some positive constant.
Then with probability at least $1-\delta,$  we have 
\begin{align}\label{eq_escape}
x_{t}^\top u_* v_*^\top y_{t}=x_{t}^\top M y_{t}\geq 1/4,
\end{align}
for all $\tau_2\leq t\leq T_1=O\left({1}/{\eta^2}\right),$ where $\tau_2=O\left(\eta^{-1}\sigma^{-2}\log(1/\eta)\log(1/\delta)\right).$
\end{lemma}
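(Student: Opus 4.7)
My plan is to reduce the lemma to a two-dimensional saddle-escape problem for the scalar projections $a_t := u_*^\top x_t$ and $b_t := v_*^\top y_t$, and then drive $a_t b_t$ past a constant threshold by time $\tau_2$. Note first that $x_t^\top u_* v_*^\top y_t = a_t b_t$, so the target quantity is exactly $a_t b_t$. Writing $\tilde a_t = a_t + u_*^\top \xi_{1,t}$ and $\tilde b_t = b_t + v_*^\top \xi_{2,t}$, projecting the Perturbed GD updates onto $u_*$ and $v_*$, and using $M = u_* v_*^\top$, yields the coupled recursion
\begin{align*}
a_{t+1} &= a_t - \eta\, \tilde a_t \|\tilde y_t\|_2^2 + \eta\, \tilde b_t,\\
b_{t+1} &= b_t - \eta\, \tilde b_t \|\tilde x_t\|_2^2 + \eta\, \tilde a_t.
\end{align*}
By the hypothesis and Lemma \ref{lem_b_converge}, the orthogonal parts $x_t - a_t u_*$ and $y_t - b_t v_*$ are of size $O(\sqrt{\eta/\sigma^2})$, so $\|\tilde x_t\|_2^2$ and $\|\tilde y_t\|_2^2$ equal $a_t^2 + \sigma^2$ and $b_t^2 + \sigma^2$ respectively, up to negligible error. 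Linearizing around the origin, the unstable eigendirection is $s_t := a_t + b_t$ with eigenvalue $1 + \eta(1-\sigma^2) > 1$, and the stable one is $d_t := a_t - b_t$ with eigenvalue $1 - \eta(1+\sigma^2) < 1$; this is the standard strict-saddle structure.

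I would split the evolution into three sub-stages. Stage (i), noise-driven escape: while $(a_t, b_t)$ lies in an $O(\sigma)$ ball around the origin, $s_{t+1} \approx (1+\eta)s_t + \eta \zeta_t$ with Gaussian $\zeta_t = u_*^\top \xi_{1,t} + v_*^\top \xi_{2,t}$ of variance $2\sigma^2$; a Freedman/Azuma-type argument (in the spirit of the saddle-escape analyses of \citet{jin2017escape} and \citet{zhou2019towards}) shows that, with probability at least $1 - \delta/3$, $|s_t|$ exits the $O(\sigma)$ neighborhood within $O(\eta^{-1}\sigma^{-2}\log(1/\delta))$ iterations. Stage (ii), geometric growth: once outside, the $(1+\eta)$ factor amplifies $|s_t|$ to a constant in another $O(\eta^{-1}\log(1/\sigma))$ steps, while $|d_t|$ stays at its steady-state size $O(\sigma)$ thanks to its contraction and the additive noise. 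Stage (iii), nonlinear pull: once $|s_t|$ is of constant order, the cubic terms $\eta \tilde a_t \|\tilde y_t\|_2^2$ and $\eta \tilde b_t \|\tilde x_t\|_2^2$ drive $(a_t^2, b_t^2)$ toward the balanced smoothed optimum $(1-\sigma^2, 1-\sigma^2)$ from Lemma \ref{lem_convolutional}; a one-step Lyapunov contraction on $(|s_t^2 - 4|, |d_t|)$ then gives $a_t b_t \ge 1/2 - O(\sigma^2) \ge 1/4$ after another $O(\eta^{-1}\log(1/\sigma))$ iterations, and a martingale maximal inequality with a union bound over $T_1 = O(1/\eta^2)$ keeps $a_t b_t \ge 1/4$ for all subsequent $t$ with probability $1 - \delta/3$. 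Using $\sigma \ge (\eta/C_3)^{1/12}$ to convert $\log(1/\sigma)$ into $\log(1/\eta)$ yields the announced $\tau_2 = O(\eta^{-1}\sigma^{-2}\log(1/\eta)\log(1/\delta))$.

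The principal obstacle is the noise-driven escape itself: the dynamics of $s_t$ are only approximately linear, since the cubic couplings link $s_t$ both to $d_t$ and to the orthogonal components controlled by Lemma \ref{lem_b_converge}, and these perturbations must stay smaller than the noise-induced growth for the random-walk argument to go through. This is precisely why the step size must satisfy $\eta \le C_3 \sigma^{12}/(\log((d_1+d_2)/\delta)\log(1/\delta))$: it simultaneously makes the cubic perturbations $o(\eta\sigma)$ during the escape, keeps the orthogonal components on the $\sqrt{\eta/\sigma^2}$ scale, and allows a supermartingale bound on $a_t^2 + b_t^2$ to preserve the a priori boundedness $\|x_t\|_2^2 + \|y_t\|_2^2 \le 2/\sigma^2$ required by Lemma \ref{lem_b_converge} throughout the horizon $T_1 = O(1/\eta^2)$. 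A further subtlety is that the sign to which $s_t$ ultimately commits is random (matching the $\pm$ ambiguity in Lemma \ref{lem_convolutional}), but this does not affect the sign-free conclusion $a_t b_t \ge 1/4$.
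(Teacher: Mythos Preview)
Your plan is essentially the paper's own route: the paper also projects onto $(\alpha_{1,t},\alpha_{2,t})=(u_*^\top x_t,v_*^\top y_t)$, exploits the unstable direction $\alpha_{1,t}+\alpha_{2,t}$ to escape, and then runs a supermartingale/Azuma argument to keep $\alpha_{1,t}\alpha_{2,t}$ above a constant; the three sub-stages you describe line up with the paper's three steps. Two tactical differences are worth noting. First, for the escape the paper does not run a direct Freedman-type random-walk bound on $s_t$; instead it argues by contradiction via Markov's inequality: if $\alpha_{1,t}\alpha_{2,t}$ stayed $O(\eta/\sigma^4)$, then $\|x_t\|_2^2+\|y_t\|_2^2$ would be forced down to $O(\eta/\sigma^6)$, while simultaneously $\EE|\alpha_{1,t}+\alpha_{2,t}|$ would grow geometrically to a constant, which is impossible. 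Second, your Stage~(iii) is stronger than necessary and its Lyapunov is less clean: you propose contracting $(|s_t^2-4|,|d_t|)$, but $s_t^2=a_t^2+b_t^2+2a_tb_t$ need not approach $4$ at this stage (balancing $a_t^2\approx b_t^2$ is the content of the later Phase~III, with a slower rate $1-O(\eta\sigma^2)$), so that pair does not obviously contract jointly. The paper instead tracks the single scalar $(\alpha_{1,t}\alpha_{2,t}-1)^2$, shows it satisfies a $(1-4\eta c)$-supermartingale inequality once $\alpha_{1,t}\alpha_{2,t}>c$, and uses this both to reach and to remain in $\{(\alpha_{1,t}\alpha_{2,t}-1)^2<(1-c)^2\}\subset\{\alpha_{1,t}\alpha_{2,t}>c\}$ with $c=1/4$; this is simpler and sidesteps any need to control $d_t$ or approach the balanced optimum here.
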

Since the saddle point $(x,y)=(0,0)$ satisfies $x^\top M y = 0,$ Lemmas \ref{lem_b_converge} and \ref{lem_escape} together imply that regardless of initializations, Perturbed GD is bounded away from the saddle point  after polynomial time.
 The algorithm then enters Phase II and approaches   the set of global optima $\left\{(x,y)\big|xy^\top=M\right\}.$

 
 \noindent $\bullet$  {\bf Phase II}:
 In this phase, $(x_t,y_t)$ is still  away from the region $\left\{(x,y)\big|xy^\top=M\right\}.$ Thus,  $\nabla\cF(x_{t},y_{t})$ dominates the update of Perturbed GD.  Perturbed GD behaves similarly to gradient descent while driving the loss to zero. The next lemma formally characterizes this behavior, showing that $xy^
 \top$ converges to $M.$ 
 \begin{lemma}\label{lem_loss}
 	Suppose $x_{t}^\top u_* v_*^\top y_{t}\geq\frac{1}{4}$ holds for all $t>0.$ For any $\epsilon>0$ and for any $\delta\in(0,1)$, we choose $\sigma\leq\sigma'_1=C_4\sqrt{\epsilon}$ and take 
 	$\eta\leq \eta_4={C}_5{\sigma^6}\left(\log((d_1+d_2)/\delta)\log(1/\delta)\right)^{-1},$  where $C_4,C_5$  are some positive constants. Then with probability at least $1-\delta,$ we have $$\norm{x_ty_t^\top-M}_{\rm{F}}\leq \epsilon,$$
 	for all $\tau_3\leq t\leq  T_1= O\left({1}/{\eta^2}\right),$ where
 	$\tau_3=O\left(\eta^{-1}\log\frac{1}{\sigma}\log(1/\delta)\right).$
 	\end{lemma}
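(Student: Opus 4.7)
The plan is to show that, under the running hypothesis $x_t^\top u_* v_*^\top y_t = a_tb_t \geq 1/4$ (writing $a_t := u_*^\top x_t$, $b_t := v_*^\top y_t$), the loss $L_t := \cF(x_t,y_t) = \tfrac{1}{2}\norm{x_ty_t^\top - M}_{\rm{F}}^2$ contracts geometrically in expectation toward an $O(\sigma^4)$ floor set by the smoothing bias, and then to promote this to a high-probability statement via concentration. Using the orthogonal decomposition $x_t = a_tu_* + x_t^\perp$ and $y_t = b_tv_* + y_t^\perp$ with $x_t^\perp\in\cU^\perp$, $y_t^\perp\in\cV^\perp$, I would expand
\begin{align*}
L_t = \tfrac{1}{2}(a_tb_t-1)^2 + \tfrac{1}{2}a_t^2\norm{y_t^\perp}_2^2 + \tfrac{1}{2}b_t^2\norm{x_t^\perp}_2^2 + \tfrac{1}{2}\norm{x_t^\perp}_2^2\norm{y_t^\perp}_2^2,
\end{align*}
apply Lemma \ref{lem_b_converge} to bound the last three summands by $O(\eta/\sigma^4)$ (using the energy bound $\norm{x_t}_2^2 + \norm{y_t}_2^2 \leq 2/\sigma^2$, which I would maintain as an inductive invariant), and observe that $a_tb_t \geq 1/4 > 0$ forces $a_t,b_t$ to share a sign with $a_t^2 + b_t^2 \geq 2a_tb_t \geq 1/2$. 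Thus $L_t \approx \tfrac{1}{2}(a_tb_t-1)^2$ and the reduced dynamics sits in a region with a uniform lower bound on the effective curvature.

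Next, I would derive the one-step contraction for $L_t$. The key observation is that $\EE[(\tilde{x}_t\tilde{y}_t^\top - M)\tilde{y}_t \mid x_t,y_t] = \nabla_x\tilde{\cF}(x_t,y_t) = \nabla_x\cF(x_t,y_t) + \sigma^2 x_t$ (and analogously for the $y$-gradient), so the expected Perturbed GD update is gradient descent on the smoothed objective $\tilde{\cF}$. Taylor-expanding $L_{t+1}$ around $(x_t,y_t)$, pairing $\nabla\cF$ against $\nabla\tilde{\cF}$, and using independence of $\xi_{1,t}$ and $\xi_{2,t}$ to evaluate higher-moment terms, I expect a recursion of the form
\begin{align*}
\EE[L_{t+1}\mid x_t,y_t] \leq (1 - c\eta)L_t + C_1\eta\sigma^2\sqrt{L_t} + C_2\eta^2\,\mathrm{poly}(\sigma^{-1},d_1,d_2),
\end{align*}
where $c$ is proportional to $a_t^2 + b_t^2 \geq 1/2$, the $O(\eta\sigma^2\sqrt{L_t})$ bias reflects the smoothing offset $\nabla\tilde{\cF} - \nabla\cF = \sigma^2(x_t,y_t)$, and the $O(\eta^2)$ term collects higher-order noise contributions scaling as $\eta^2\sigma^4 d_1 d_2$ from quantities like $\norm{\xi_{1,t}}_2^2\norm{\xi_{2,t}}_2^2$.

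Iterating the recursion drives $L_t$ from its initial value $L_0 = O(1/\sigma^4)$ (implied by the energy bound) down to the equilibrium $O(\sigma^4)$ after $\tau_3 = O(\eta^{-1}\log(L_0/\sigma^4)) = O(\eta^{-1}\log(1/\sigma))$ steps, so $\norm{x_ty_t^\top - M}_{\rm{F}} \leq \sqrt{2L_t} = O(\sigma^2) \leq \epsilon$ under the choice $\sigma \leq C_4\sqrt{\epsilon}$. To upgrade this expectation bound to a high-probability statement uniform over $t\in[\tau_3,T_1]$, I would decompose $L_{t+1} - L_t$ into its conditional mean (controlled above) plus a martingale-difference residual whose magnitude is controlled by Gaussian tail bounds on $\xi_{i,t}$ and $\chi^2$ bounds on $\norm{\xi_{i,t}}_2^2$; a Freedman/Azuma--Hoeffding argument with a union bound over the $T_1 = O(1/\eta^2)$ steps then incurs only an extra $\log(1/\eta)$ factor, which is absorbed into the $\log(1/\delta)$ in $\tau_3$, and yields the stated failure probability $\delta$.

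The main obstacle will be carefully managing the coupling between the aligned and perpendicular components: because $a_t^2 + b_t^2$ can grow as large as $O(1/\sigma^2)$, the cross terms $a_t^2\norm{y_t^\perp}_2^2$ in the decomposition of $L_t$, together with analogous contributions to the one-step drift, carry a factor $1/\sigma^4$ and must remain negligible across all $T_1 = O(1/\eta^2)$ iterations, which is exactly what forces the quantitative step-size constraint $\eta \leq C_5\sigma^6/(\log((d_1+d_2)/\delta)\log(1/\delta))$ stated in the lemma. Relatedly, the whole argument is self-referential: verifying $a_tb_t \geq 1/4$ and $\norm{x_t}_2^2 + \norm{y_t}_2^2 \leq 2/\sigma^2$ throughout Phase II against stochastic fluctuations requires a simultaneous induction with the contraction bound on $L_t$, since the lower bound $a_t^2 + b_t^2 \geq 1/2$ is what pins down the contraction rate $c$ in the first place.
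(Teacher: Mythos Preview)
Your plan is sound and lands in the same place as the paper, but the paper takes a slightly more streamlined route that you may find worth adopting. Rather than tracking the full loss $L_t=\tfrac12\|x_ty_t^\top-M\|_{\rm F}^2$, the paper isolates the scalar $(\alpha_{1,t}\alpha_{2,t}-1)^2=(x_t^\top My_t-1)^2$ and derives the one-step contraction for \emph{that} quantity alone. This avoids having to control the perpendicular contributions inside the recursion: from the explicit expansion of $\alpha_{1,t+1}\alpha_{2,t+1}$ (using the $u_*,v_*$-components of the perturbed gradient) one gets directly
\[
\EE\big[(\alpha_{1,t+1}\alpha_{2,t+1}-1)^2\mid\cF_t\big]\le (1-4\eta c)(\alpha_{1,t}\alpha_{2,t}-1)^2+\eta\sigma^2+O(\eta^2),
\]
with $c=1/4$ coming from the hypothesis $\alpha_{1,t}\alpha_{2,t}\ge 1/4$ via $\alpha_{1,t}^2+\alpha_{2,t}^2\ge 2\alpha_{1,t}\alpha_{2,t}$. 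The additive floor is thus a clean $\gamma=O(\sigma^2)$, and the perpendicular pieces $a_t^2\|y_t^\perp\|_2^2,\ b_t^2\|x_t^\perp\|_2^2$ are handled separately by invoking Lemma~\ref{lem_b_converge} \emph{after} the contraction argument, rather than inside it. Your approach of carrying the full $L_t$ through the recursion works too, but requires more bookkeeping to separate out the coupling terms you flagged as the ``main obstacle.''

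Two small corrections. First, the $\log(1/\delta)$ factor in $\tau_3$ does not come from the Azuma step; it comes from the hitting-time argument. The paper shows via Markov's inequality that after $O(\eta^{-1}\log(1/\sigma))$ steps the process has entered $\{(\alpha_1\alpha_2-1)^2<4\gamma\}$ with probability at least $1/2$, and then repeats this $O(\log(1/\delta))$ times to boost the success probability to $1-\delta$. The Azuma argument is only used in the second sub-lemma, to show the process \emph{stays} in $\{(\alpha_1\alpha_2-1)^2<6\gamma\}$ once it has entered. Second, your worry about the argument being ``self-referential'' is unwarranted here: the condition $a_tb_t\ge 1/4$ is a standing hypothesis of Lemma~\ref{lem_loss} (established upstream by Lemma~\ref{lem_escape}), and the energy bound $\|x_t\|_2^2+\|y_t\|_2^2\le 2/\sigma^2$ is supplied by Lemma~\ref{lem_bounded}, so no simultaneous induction is needed inside this lemma.
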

 
 \noindent $\bullet$  {\bf Phase III}: After Phase II, Perturbed GD enters the region where $x_ty_t^\top\approx M.$ Thus, the noise will dominate the update of  the Perturb GD.
Recall in Lemma \ref{lem_condition_number}, we show that the Hessian of unbalanced optima has a large condition number, hence a small perturbation will significantly change the gradient of the objective.
This implies that the unbalanced optima would be  unstable against the noise, and Perturbed GD will escape from such optima and continue iterating towards the balanced optima. 
 The next lemma shows that $x_t $  and $y_t $ converge to $u_*, v_*,$ respectively.
\begin{lemma}\label{lem_convergence}
For $\forall\epsilon>0$, suppose $\norm{x_ty_t^\top-M}_{\rm{F}}\leq \epsilon$ holds for all $t>0.$ For any $\delta\in(0,1)$, we choose $\sigma\leq\sigma'_2={C}_6\left(\log(1/\delta)\right)^{-1/3}$ and take 
$\eta\leq \eta_5=C_7\sigma^{10}\epsilon,$  where $C_6, C_7$ are some positive constants. Then with probability at least $1-\delta,$ we have 
\begin{align*}
\norm{x_t -u_*}_2\leq \epsilon,~\norm{y_t-v_*}_2\leq \epsilon,
\end{align*}
for all $\tau_3\leq t\leq  T_1= O\left({1}/{\eta^2}\right),$ where
$\tau_4=O\left(\eta^{-1}\sigma^{-2}\log\eta^{-1}\log(1/\delta)\right).$
\end{lemma}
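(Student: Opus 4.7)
The plan is to show that under the hypothesis $\norm{x_t y_t^\top - M}_{\rm F} \le \epsilon$, the injected noise drives the balancedness gap $B_t := \norm{x_t}_2^2 - \norm{y_t}_2^2$ exponentially to a neighborhood of zero, and then to combine $B_t \approx 0$ with the small-loss hypothesis to conclude that $(x_t, y_t)$ is $\epsilon$-close to $\pm(u_*, v_*)$. The structural reason this works is that although $\cF$ has infinitely many global minima, the smoothed gradient has the clean form
\begin{align*}
\nabla_x \tilde\cF(x,y) = x(\norm{y}_2^2 + d_2\sigma_2^2) - My, \qquad \nabla_y \tilde\cF(x,y) = y(\norm{x}_2^2 + d_1\sigma_1^2) - M^\top x,
\end{align*}
in which the balanced-noise terms $d_1\sigma_1^2 = d_2\sigma_2^2 = \sigma^2$ couple the two updates and induce a contraction specifically of $B_t$.

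The first substep is to derive the one-step drift. Writing $g_t^x, g_t^y$ for the stochastic gradient samples in Algorithm~\ref{alg:Perturbed GD} and using $y_t^\top M^\top x_t = x_t^\top M y_t$, the bilinear cross-term cancels from $x_t^\top \EE[g_t^x\mid\mathcal F_t] - y_t^\top \EE[g_t^y\mid\mathcal F_t]$, yielding
\begin{align*}
\EE[B_{t+1}\mid \mathcal F_t] = (1 - 2\eta\sigma^2) B_t + \eta^2 R_t,
\end{align*}
where $\abs{R_t}$ is polynomially bounded in $1/\sigma$ via the invariant $\norm{x_t}_2^2 + \norm{y_t}_2^2 \le 2/\sigma^2$ carried over from the earlier phases. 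The second substep is to upgrade this drift to a time-uniform high-probability statement. Letting $\Delta_{t+s} := B_{t+s+1} - \EE[B_{t+s+1}\mid \mathcal F_{t+s}]$ and unrolling,
\begin{align*}
B_{t+\tau} = (1-2\eta\sigma^2)^\tau B_t + \sum_{s=0}^{\tau-1}(1-2\eta\sigma^2)^{\tau-1-s}\bigl(\eta^2 R_{t+s} + \Delta_{t+s}\bigr),
\end{align*}
the deterministic part decays after a burn-in $\tau_4 = O(\eta^{-1}\sigma^{-2}\log\eta^{-1}\log(1/\delta))$, while a Freedman-type tail bound on the geometrically weighted martingale sum (a discrete Ornstein--Uhlenbeck tail estimate) controls the stochastic part. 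The per-step increment is sub-Gaussian of size polynomially small in $\eta, 1/\sigma$, so the weighted variance is $O(\eta/\sigma^2) \cdot \mathrm{poly}(1/\sigma)$. The step-size choice $\eta \le C_7\sigma^{10}\epsilon$ then yields $\abs{B_t}\le\epsilon/4$ uniformly for $\tau_4 \le t \le T_1 = O(\eta^{-2})$ with probability $\ge 1-\delta/2$.

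The third substep translates $\abs{B_t}\le\epsilon/4$ together with $\norm{x_ty_t^\top - M}_{\rm F}\le\epsilon$ into the target bound. Set $a_t = u_*^\top x_t$ and $b_t = v_*^\top y_t$. Using the residual bounds $\norm{x_t - a_t u_*}_2, \norm{y_t - b_t v_*}_2 = O(\sqrt\eta/\sigma)$ inherited from Lemma~\ref{lem_b_converge}, projecting the small-loss hypothesis onto the rank-one direction $u_*v_*^\top$ gives $a_t b_t = 1 + O(\epsilon)$; Lemma~\ref{lem_escape} fixes the sign $a_tb_t > 0$; and $\abs{B_t}\le \epsilon/4$ gives $\abs{a_t^2 - b_t^2} \le \epsilon/2 + O(\eta/\sigma^2)$. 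These two scalar relations pin down $a_t, b_t$ within $O(\epsilon)$ of $+1$ (or of $-1$, with the same sign), and combined with the smallness of the orthogonal components this yields $\norm{x_t - u_*}_2 \le \epsilon$ and $\norm{y_t - v_*}_2 \le \epsilon$, after possibly negating the factorization $M = u_*v_*^\top$.

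The principal obstacle is the second substep. A naive Azuma--Hoeffding on the martingale differences would give a deviation of order $\eta\sqrt{T\log T}$, which blows up over $T = O(\eta^{-2})$ iterations and would destroy the conclusion. One must exploit the mean-reverting (contractive) structure of the $B_t$ recursion, so that the effective variance of the geometrically weighted sum is $O(\eta/\sigma^2)$ independently of $t$, giving a time-uniform deviation bound of the right order. A secondary technical nuisance is verifying that all prerequisites of Lemmas~\ref{lem_b_converge}--\ref{lem_loss} continue to hold throughout Phase~III under the stronger constraints $\eta \le C_7\sigma^{10}\epsilon$ and $\sigma \le C_6(\log(1/\delta))^{-1/3}$, which should follow since $\eta_5$ is the smallest of $\eta_2,\eta_3,\eta_4,\eta_5$ up to absolute constants.
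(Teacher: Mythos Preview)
Your proposal is correct and reaches the same conclusion as the paper, but the route differs in two respects worth noting.

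First, the tracked quantity. The paper does not work with the full-norm gap $B_t=\norm{x_t}_2^2-\norm{y_t}_2^2$; it works in the projected coordinates $\alpha_{1,t}=u_*^\top x_t$, $\alpha_{2,t}=v_*^\top y_t$ and tracks the \emph{squared} gap $(\alpha_{1,t}^2-\alpha_{2,t}^2)^2$ (their Lemma~\ref{a1=a2}, split into Lemmas~\ref{lma3} and~\ref{lma4}). Your choice is cleaner: the bilinear term $x_t^\top My_t$ cancels exactly from $x_t^\top\nabla_x\tilde\cF-y_t^\top\nabla_y\tilde\cF$, producing the exact linear contraction $\EE[B_{t+1}\mid\cF_t]=(1-2\eta\sigma^2)B_t+\eta^2R_t$ that you wrote down. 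In projected coordinates the analogous computation picks up cross-terms $\alpha_{1,t}^2\norm{\beta_{2,t}}_2^2-\alpha_{2,t}^2\norm{\beta_{1,t}}_2^2$, which the paper must bound separately using Lemma~\ref{lem_b_converge}.

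Second, the concentration step. The paper uses the two-stage pattern it already applied in the earlier lemmas: a Markov-inequality hitting-time argument to reach $(\alpha_1^2-\alpha_2^2)^2<4\Delta$, followed by Azuma on a supermartingale restricted to the good event to show it stays below $6\Delta$. You instead unroll the contraction and apply a single Freedman/OU-type tail bound to the geometrically weighted martingale sum. Both are standard; your version is more direct and avoids the two-stage structure, while the paper's version has the virtue of reusing exactly the same proof template as Lemmas~\ref{lem_bounded}, \ref{lem_b_converge}, and~\ref{prop1}. Your final substep, translating $|B_t|\lesssim\epsilon$ and $|a_tb_t-1|\lesssim\epsilon$ into $\norm{x_t-u_*}_2\le\epsilon$, matches the paper's closing computation essentially line for line.
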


With all the lemmas in place, we take $\sigma\leq \min\{\sigma'_1,\sigma'_2\},$ $\eta\leq \min\{\eta_1,\eta_2,\eta_3,\eta_4,\eta_5\}$ and $t_1=\tau_1+\tau_2+\tau_3+\tau_4$, then the claim of Theorem \ref{thm_main} follows immediately.
\end{proof}

\vspace{-0.12in}
\section{Extension to Rank-r Matrix Factorization}\label{sec_extend}
\vspace{-0.1in}
We extend our results to the rank-$r$ matrix factorization, which solves the following problem:
\begin{align}\label{rankr_mf}
\min_{X \in \RR^{d_1 \times r},Y \in \RR^{d_2 \times r}} \cF(X,Y) =  \frac{1}{2}\norm{XY^\top - M}_{\rm{F}}^2,
\end{align}  
where $M\in \RR^{d_1\times d_2}$ is a rank-$r$ matrix. 
 Let $M=A\Sigma B^\top$ be the SVD of $M.$ 
Let $U_*=A\Sigma^{\frac{1}{2}}$ and $V_*=B\Sigma^{\frac{1}{2}},$ then $(U_*, V_*)$ is a global minimum for problem \eqref{rankr_mf}.
Similar to the rank-1 case, using Perturbed GD to solve problem \eqref{rankr_mf} can be viewed as solving the following smoothed problem:
\begin{align}\label{rankr_eq}
\min_{X,Y}\tilde\cF(X,Y)=\EE_{\xi_{1},\xi_{2}}\cF(X+\xi_1,Y+\xi_2),
\end{align}
where $\xi_{1}\in \RR^{d_1\times r},$ $\xi_{2}\in \RR^{d_2\times r}$ have i.i.d. elements drawn from $N(0,\sigma_1^2)$ and   $N(0,\sigma_2^2),$ respectively.


 The next theorem shows that the noise in Perturbed GD effectively addresses the scaling invariance issue of \eqref{rankr_mf}.
  The smoothed problem \eqref{rankr_eq} only has balanced global minima.
\begin{theorem}\label{thm_rankr}
Let $\sigma_{\min}(M)$ be the smallest singular value of $M$. Suppose $$\EE\left[\norm{\xi_1}_{\rm{F}}^2\right]=\gamma^2\EE\left[\norm{\xi_2}_{\rm{F}}^2\right]=r\gamma^2\sigma^2,$$ and  $\gamma\sigma^2<\sigma_{\min}(M).$ Then  for $\forall(U,V)\in (\RR^{d_1},\RR^{d_2})$ such that $\nabla\tilde\cF(U,V)=0,$ we have
$U^\top U=\gamma^2 V^\top V.$
Moreover,  denote $(\tilde U,\tilde V)=\left(\sqrt{\gamma} A(\Sigma-\gamma\sigma^2I_r)^{\frac{1}{2}},\gamma^{-1/2}B(\Sigma-\gamma\sigma^2I_r)^{\frac{1}{2}}\right),$ then the set  $\{(\tilde U,\tilde V)R \big|R \in \RR^{r\times r}, RR^\top=R^\top R=I_r \}$ contains all the global optima. All other stationary points are strict saddles, i.e., $\lambda_{\min}\left(\nabla^2 \tilde\cF(U,V)\right)<0.$
\end{theorem}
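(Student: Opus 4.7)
My plan is fourfold: (i) compute $\tilde\cF$ in closed form to expose it as a ridge-penalized problem, (ii) derive the balancedness condition from the stationarity equations, (iii) classify all stationary points via a generalized eigenequation, and (iv) identify the orbit of global minima and exhibit an explicit direction of negative curvature at every other stationary point.

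Expanding $(X+\xi_1)(Y+\xi_2)^\top-M$ and taking expectation, all cross terms vanish by independence and zero mean of $\xi_1,\xi_2$; using $\EE[\xi_2^\top\xi_2]=d_2\sigma_2^2 I_r$ together with the implied scalings $d_2\sigma_2^2=\sigma^2$ and $d_1\sigma_1^2=\gamma^2\sigma^2$, I obtain
\[
\tilde\cF(X,Y)=\tfrac{1}{2}\|XY^\top-M\|_{\rm F}^2+\tfrac{\sigma^2}{2}\|X\|_{\rm F}^2+\tfrac{\gamma^2\sigma^2}{2}\|Y\|_{\rm F}^2+\text{const},
\]
so randomized smoothing reduces to a Frobenius ridge penalty whose weights are asymmetric exactly by the factor $\gamma^2$. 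Differentiating gives $\nabla_U\tilde\cF=(UV^\top-M)V+\sigma^2 U$ and $\nabla_V\tilde\cF=(UV^\top-M)^\top U+\gamma^2\sigma^2 V$. At a stationary point, forming $U^\top\nabla_U\tilde\cF=0$ and $(\nabla_V\tilde\cF)^\top V=0$ both produce $U^\top(UV^\top-M)V$, so equating yields $\sigma^2 U^\top U=\gamma^2\sigma^2 V^\top V$, i.e., $U^\top U=\gamma^2 V^\top V$.

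To classify stationary points, I solve $U=MV(V^\top V+\sigma^2 I_r)^{-1}$ from $\nabla_U\tilde\cF=0$ and substitute into $\nabla_V\tilde\cF=0$ to obtain $M^\top MV=\gamma^2 V(V^\top V+\sigma^2 I_r)^2$. By right-rotational invariance $(U,V)\mapsto(UR,VR)$ and the fact that $U^\top U$ is a scalar multiple of $V^\top V$, I may assume $S:=V^\top V=\mathrm{diag}(s_1,\ldots,s_r)$. The equation becomes $M^\top Mv_i=\gamma^2(s_i+\sigma^2)^2 v_i$ column-wise, so each $v_i$ is zero or an eigenvector of $M^\top M$, yielding $v_i=\pm\sqrt{\sigma_{j_i}/\gamma-\sigma^2}\,b_{j_i}$ for some $j_i\in\{1,\ldots,r\}$; the assumption $\gamma\sigma^2<\sigma_{\min}(M)$ guarantees the square root is real and positive. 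Diagonality of $S$ forces distinct columns to select distinct indices (repeated singular values are handled by a further right-rotation inside each eigenspace), and the matching $U$-columns recovered from $U=MV(S+\sigma^2 I_r)^{-1}$ are $\pm\gamma\sqrt{\sigma_{j_i}/\gamma-\sigma^2}\,a_{j_i}$. Thus every stationary point is parametrized by a subset $J\subseteq\{1,\ldots,r\}$ of ``selected'' singular directions, modulo a global orthogonal $R$.

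Plugging this ansatz back into $\tilde\cF$ gives $\tilde\cF=\tfrac{1}{2}\sum_{j\notin J}\sigma_j^2-\tfrac{1}{2}|J|\gamma^2\sigma^4+\gamma\sigma^2\sum_{j\in J}\sigma_j+\text{const}$, and adjoining any $j\notin J$ changes $\tilde\cF$ by $-\tfrac{1}{2}(\sigma_j-\gamma\sigma^2)^2<0$, so the minimum requires $J=\{1,\ldots,r\}$, which recovers exactly the orbit $\{(\tilde U,\tilde V)R:RR^\top=R^\top R=I_r\}$ after noting that $\tilde\cF$ is itself right-rotation invariant. For any stationary point with $|J|<r$, pick $j^*\notin J$ and an index $i$ with $u_i=v_i=0$, and perturb only those two columns along $(t\,a_{j^*},s\,b_{j^*})$. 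A direct Taylor expansion of $\tilde\cF$ in $(t,s)$ yields the $2\times 2$ Hessian block
\[
\begin{pmatrix}\sigma^2 & -\sigma_{j^*}\\ -\sigma_{j^*}&\gamma^2\sigma^2\end{pmatrix},
\]
whose determinant $\gamma^2\sigma^4-\sigma_{j^*}^2$ is strictly negative under the hypothesis, giving $\lambda_{\min}(\nabla^2\tilde\cF)<0$ and the strict saddle property. The main obstacle I anticipate is the degenerate-singular-value case in step (iii): it requires carefully choosing the right-rotation so that $V^\top V$ remains diagonal while simultaneously aligning $V$'s columns with an orthonormal eigenbasis inside each multiplicity block of $M^\top M$, and verifying that this flexibility is precisely what is absorbed by the final $R$.
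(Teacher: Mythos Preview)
Your argument is correct, and it takes a genuinely different route from the paper. The balancedness step~(ii) matches the paper's Step~1 verbatim, but from there the two proofs diverge. The paper never classifies the stationary points: instead it verifies directly that the orbit $\{(\tilde U,\tilde V)R\}$ is stationary, and then, at \emph{any} stationary point $(U,V)$, evaluates the Hessian quadratic form in the single direction $\Delta=(U-\tilde UR,\,V-\tilde VR)$ (with $R$ the best alignment) and proves the uniform bound
\[
[\nabla^2\tilde\cF(U,V)](\Delta,\Delta)\;\le\;-\|UV^\top-\tilde U\tilde V^\top\|_{\rm F}^2-3\sigma^2\|A^\top U-B^\top V\|_{\rm F}^2,
\]
which is strictly negative unless $(U,V)$ already lies on the orbit. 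This is the ``curvature in the direction of the optimum'' trick from the no-spurious-local-minima literature, and it handles all stationary points in one inequality without ever writing them down.

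Your approach instead \emph{enumerates} the stationary points: eliminate $U$, reduce to the invariant equation $M^\top MV=\gamma^2 V(V^\top V+\sigma^2 I)^2$, diagonalize $V^\top V$, and read off that every stationary point is (up to rotation) a selection of a subset $J\subseteq\{1,\dots,r\}$ of singular directions. Then the global minimum is found by direct comparison of objective values, and the saddle direction at a deficient $J$ is the explicit $2\times2$ block along a missing singular pair. What your approach buys is a complete picture of the critical set and a very concrete escape direction; the price is the bookkeeping you already flagged---in the repeated-singular-value case you must argue that orthogonality of the columns of $V$ inside a multiplicity block can be absorbed into the trailing $R$, and that a column of $V$ cannot land in $\ker M^\top M$ (which you get from $\gamma^2(s_i+\sigma^2)^2>0$). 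The paper's single-inequality approach sidesteps this entirely, at the cost of a somewhat opaque Hessian computation. Both are valid proofs of the theorem.
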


Theorem \ref{thm_rankr} shows that  when noise is balanced, \eqref{rankr_eq}  only has balanced global optima and strict saddle points. 
We can invoke \citet{lee2019first} again and show Perturbed GD with an infinitesimal step size and vanishing noise converges to the balanced global optima, asymptotically. 

Note that compared to the rank-1 case, in addition to scaling invariance, the objective is also rotation invariant, i.e., $\tilde\cF(X,Y)= \tilde\cF(XR,YR),$ where $R\in \RR^{r\times r}$ is an orthogonal matrix. 
Thus, we can only recover $U_*$ and $V_*$ up to a rotation factor.
To establish the non-asymptotic convergence result, the optimization error should be measured by the following metric that is rotation invariant:
\begin{align*}
\mathrm{dist}_\cR(D_1,D_2)=\min_{R\in \RR^{r\times r}:RR^\top=I_r}\norm{D_1-D_2R}_{\rm{F}}.
\end{align*}
However, it is more challenging and involved to handle the complex nature of the distance $\mathrm{dist}_\cR(\cdot, \cdot)$. 
As our results for the rank-1 case already provide insights on understanding the implicit bias of noise, we leave the  non-asymptotic analysis of rank-r matrix factorization for future investigation.


\section{Numerical Experiments}\label{sec_numerical}

We present numerical results to support our theoretical findings. 
We compare our Perturbed GD algorithm with gradient descent (GD), and demonstrate that Perturbed GD with $\gamma-$balanced noise converges to $\gamma-$balanced optima, while the optima obtained by GD are highly sensitive to initialization and step size.
We also show that the phase transition between Phase II and Phase III is not an artifact of the proof, and faithfully captures the true algorithmic behavior of Perturbed GD.

\noindent\textbf{Rank-1 Matrix Factorization}.
We first consider the rank-1 matrix factorization problem. 
Without loss of generality, the matrix $M$ to be factorized is given by $M = u_* v_*^\top$, where $u_* = (1, 0, \ldots, 0)\in \RR^{d_1}$ and $v_* = (1, 0, \ldots, 0)\in \RR^{d_2}$, with $d_1 = 20$ and $d_2 = 30$. 
We initialize iterates $(x_0, y_0)$ with $x_0 \sim N(0,\sigma_x^2 I_{d_1}),\  y_0 \sim N(0,\sigma_y^2 I_{d_2})$. 
For all experiments, we use $\gamma$-balanced noise in Perturbed GD. 
Specifically, we choose $\xi_{1,t} \sim N(0, \sigma_1^2 I_{d_1}), \xi_{2,t} \sim N(0, \sigma_2^2 I_{d_2})$. 

\noindent\textbf{(1) Balanced Noise}.
We  consider the case of balanced noise ($\gamma =1$), where we take $\sigma_1 = \sqrt{1.5} \times 0.05$ and $\sigma_2 = 0.05$.
One can verify that $\gamma^2 = {d_1 \sigma_1^2}/{(d_2 \sigma_2^2)}  = 1$.

We first use balanced step size to compare with GD studied in  \citet{du2018algorithmic}.
Specifically, we set $\eta_x = \eta_y = 10^{-2}$ for both Perturbed GD and GD.
We further consider two initialization schemes:
(i) small initializations (which is also adopted in \citet{du2018algorithmic}): $\sigma_x = \sigma_y = 10^{-2}$; (ii) large initializations: $\sigma_x = \sigma_y = 10^{-1}$. 
Fig. \ref{mainresult}.(a, b, d, e)  summarize the results of $100$ repeated experiments in a box-plot. 
As can be seen, regardless of initializations, Perturbed GD always converges to the balanced optima (Fig. \ref{mainresult}.(a, b)).
In contrast, GD only converges to approximately balanced optima for small initializations (Fig. \ref{mainresult}.(d)), and the large initialization yields a large variance in terms of the balancedness of the obtained solution (Fig. \ref{mainresult}.(e)). 
  \begin{figure*}[t]
\centering
\includegraphics[width=0.9\linewidth]{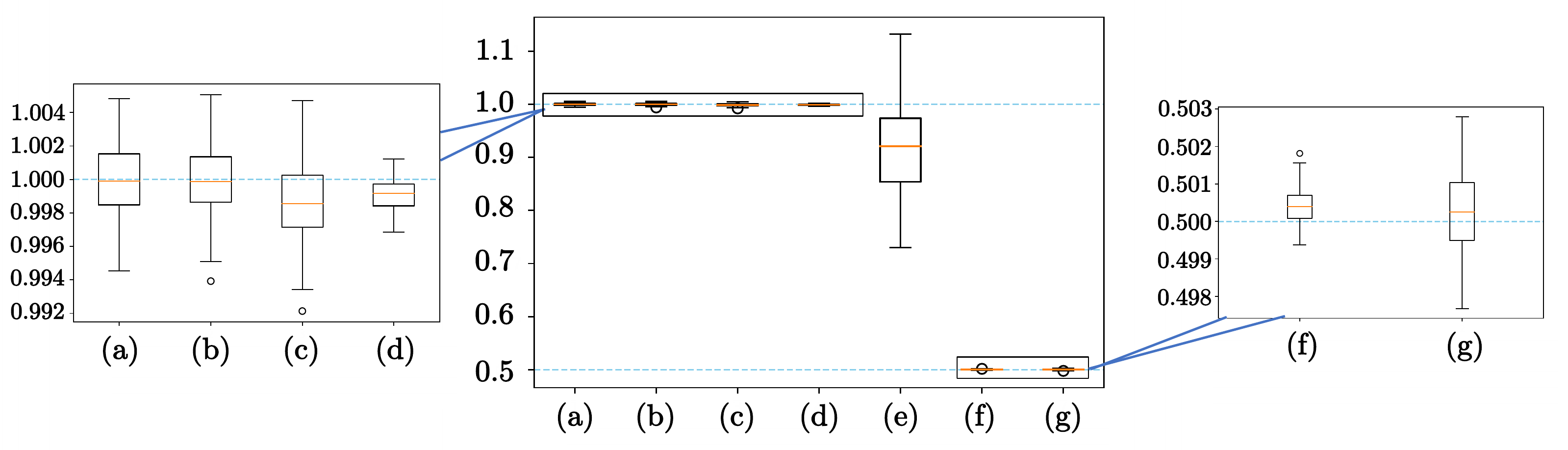}
    \caption{Perturbed GD with balanced noise (a, b and c), Perturbed GD with unbalanced noise (g) and GD (d,e and f) for the rank-1 matrix factorization problem. (a) and (d) use small initializations ($\sigma_x = \sigma_y = 10^{-2}$) and balanced step size ($\eta_x = \eta_y = 10^{-2}$). (b) and (e) use large initializations ($\sigma_x = \sigma_y = 10^{-1}$) and balanced step size. (c) and (f) use small initializations and unbalanced step size ($\eta_x = 0.5 \eta_y  =5 \times 10^{-3}$).}\label{mainresult}
\end{figure*}

We also use unbalanced step size (different step sizes for updating $x$ and $y$) and compare the convergence properties of Perturbed GD and GD. 
Specifically, we set $\eta_x = 0.5 \eta_y  =5 \times 10^{-3}$ for both Perturbed GD and GD.
We adopt a small initialization scheme, with $\sigma_x  = \sigma_y = 10^{-2}$.
Fig. \ref{mainresult}.(c, f)  summarize the results of $100$ repeated simulations in a box-plot. 
As can be seen, even with small initializations, GD with unbalanced step size converges to the approximately $\sqrt{0.5}$-balanced optima, instead of the $1$-balanced optima (Fig. \ref{mainresult}.(f)). 
In contrast, Perturbed GD is able to converge to the balanced optima with unbalanced step size (Fig. \ref{mainresult}.(c)). 

Our results suggest that the noise is the most important factor in determining the balancedness of the solutions obtained by Perturbed GD. 


\noindent\textbf{(2) Unbalanced Noise}.
We run Perturbed GD with unbalanced noise.
We take $\sigma_1 = \sqrt{0.75} \times 0.05$ and $\sigma_2 = 0.05$ with $\gamma^2 ={d_1 \sigma_1^2}/
{(d_2 \sigma_2^2)}= 0.5$. 
We use a  small initialization: $\sigma_x = \sigma_y = 10^{-2}$, and balanced step size: $\eta_x = \eta_y = 10^{-2}$.
Fig. \ref{mainresult}.(g)  summarizes the results of $100$ repeated simulations in a box-plot.
As can be seen,  for $\gamma \neq 1$, the Perturbed GD converges to the $\gamma$-balanced optima.

\noindent \textbf{Rank-10 Matrix Factorization}. We then consider rank-10 nonconvex matrix factorization problem. 
The matrix $M$ to be factorized is given by $M = U_* V_*^\top$, where 
$
   U_* =\begin{pmatrix}
I_{10}, 0
\end{pmatrix}_{d_1\times10}^\top,
 V_* =\begin{pmatrix}
I_{10}, 0
\end{pmatrix}_{d_2\times10}^\top,
$
with $d_1 = 20$ and $d_2 = 30$. 
We initialize iterates $(X_0, Y_0)$ with all entries ${X_0}^{(i,j)}$'s and ${Y_0}^{(i,j)}$'s independently sampled from $N(0,\sigma_x^2)$ and $N(0,\sigma_y^2)$, respectively.   
For all experiments, we use $\gamma$-balanced noise in Perturbed GD. 
Specifically, we choose  $\xi_{1,t}$ and $\xi_{2,t}$  with i.i.d. elements drawn from $N(0,\sigma_1^2)$ and $N(0,\sigma_2^2)$ respectively.
We repeat a similar set of experiments as in rank-1 case, and summarize the results in Fig. \ref{mainresultrk-10}.
For each of the experiments, we use the same set of $(\eta_x, \eta_y, \sigma_x, \sigma_y)$ as their counterpart in the rank-1 case.
As can be seen, Perturbed GD always converges to the $\gamma-$balanced optima, regardless of initializations. Our experiments suggest that, for  the  rank-r matrix factorization problem, the noise still determines the balancedness of the optima obtained by Perturbed GD. 
\begin{figure*}[t]
\centering
\includegraphics[width=0.9\linewidth]{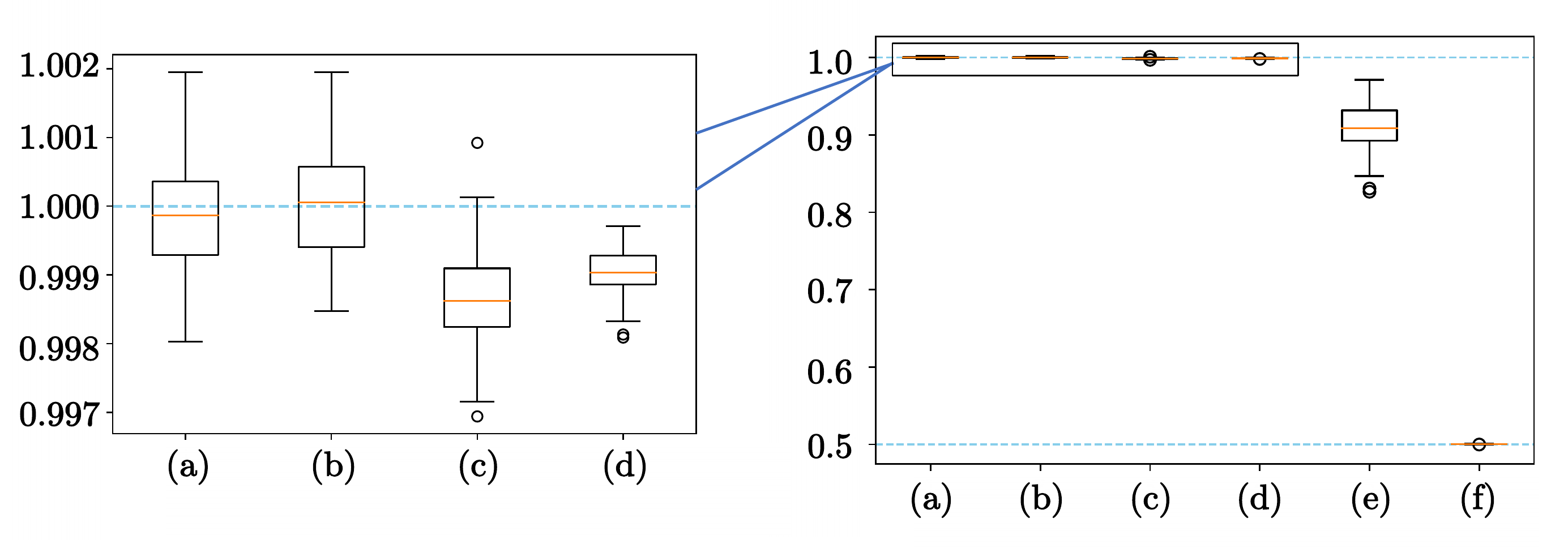}
    \caption{Perturbed GD with balanced noise (a, b and c) and GD (d,e and f) for rank-10 matrix factorization problem. (a) and (d) use small initializations ($\sigma_x = \sigma_y = 10^{-2}$) and balanced step size ($\eta_x = \eta_y = 10^{-2}$). (b) and (e) use large initializations ($\sigma_x = \sigma_y = 10^{-1}$) and balanced step size. (c) and (f) use small initializations and unbalanced step size ($\eta_x = 0.5 \eta_y  =5 \times 10^{-3}$).}\label{mainresultrk-10}
\end{figure*}

\noindent \textbf{Phase Transition.} We further demonstrate the transition between Phase II and Phase III in the Perturbed GD algorithm. Specifically, we consider 2-dimensional problem $f(x,y) = (1-xy)^2$ with balanced optima $\pm(1,1)$. 
We set $\sigma_1 = \sigma_2 = 0.05$, initialize $(x_0 ,y_0) = (3,5)$, and use balanced step size $\eta_x = \eta_y = 0.01$.
We repeat the experiments $50$ times and summarize the result of one realization in Fig. \ref{2phase}, as the convergence properties of Perturbed GD are highly consistent across different realizations.
We also use exponential moving average to smooth the loss trajectory to better illustrate the overall progress of the objective in Phase III.

As can be seen, in around the first $30$ to $40$ iterations,
Perturbed GD and GD behave similarly. 
 Both  Perturbed GD and GD iterate towards the set of global optima $\{(x,y)\big| xy=1\}$, while driving the loss to zero, and the squared norm ratio $x_t^2/y_t^2$ in Perturbed GD decreases from $0.36$ to around $0.004$.
 \begin{figure}[t]
\includegraphics[width=0.9\linewidth]{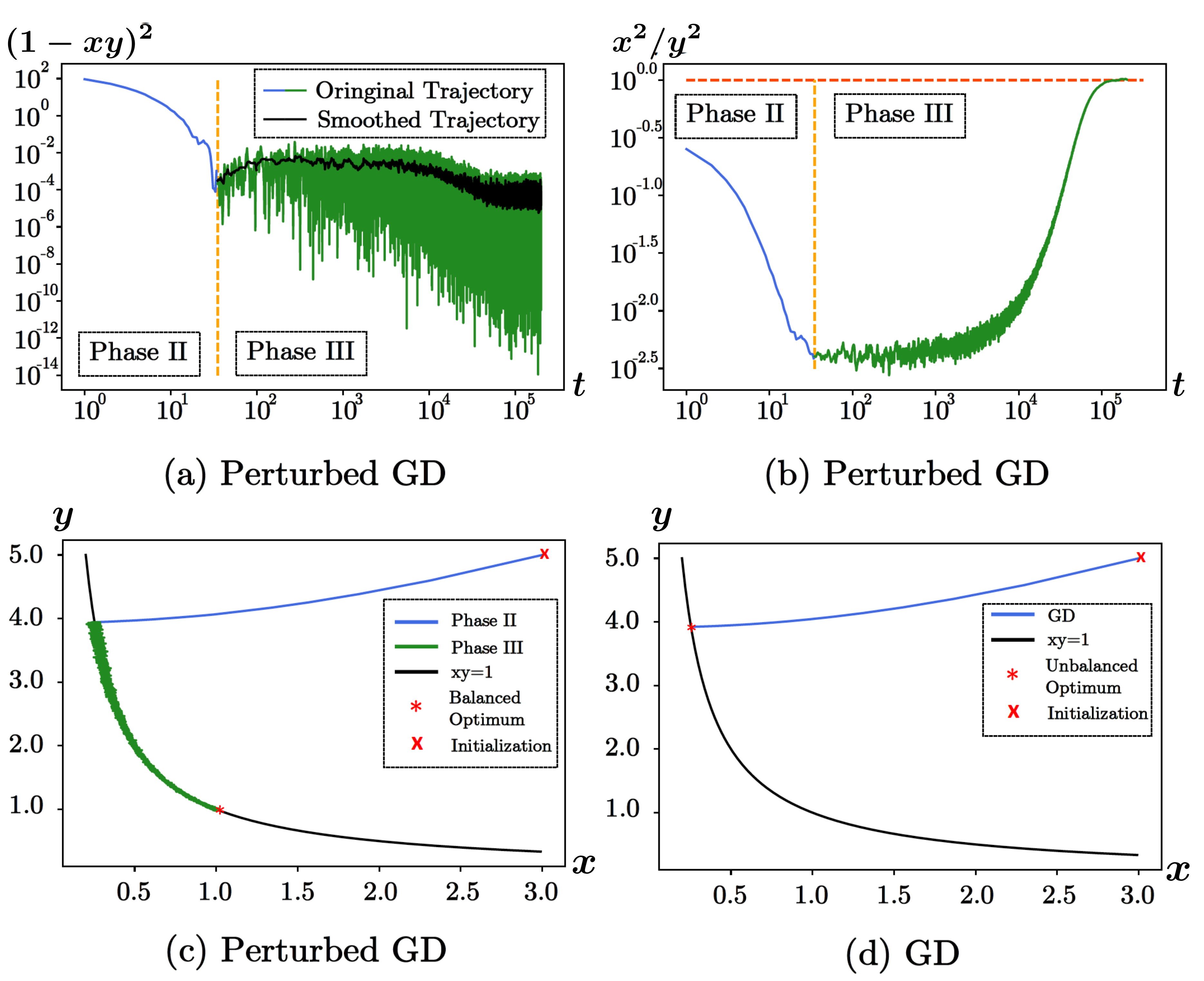}
  \caption{Algorithmic behaviors of Perturbed GD and GD. For Perturbed GD, phase transition happens around the first $30\!\sim\!40$ iterations, as shown in (a,b,c). GD does not show phase transitions.  }\label{2phase}
\end{figure}
After that, GD converges to the unbalanced optimum. Since the loss is sufficiently small, the noise dominates the update of Perturbed GD.
Then the squared norm ratio $x_t^2/y_t^2$ gradually increases from $0.004$ to $1$.
Perturbed GD  iterates towards the balanced optimum while staying  close to  global optima. 

The phase transition phenomenon can also been observed for higher dimensional problems. As shown in Figure \ref{fig:addit} for $d=4$,  the loss greatly decreases to  and stay around $10^{-4}$   in the first $2\times10^3$ iterations, and then the squared norm ratio ${\norm{x}_2^2}/{\norm{y}_2^2}$ gradually increases from $0.5$ to $1$. This implies the transition between Phase II and Phase III, that is Perturbed GD first approaches the set of global minima and then converges to the balanced optimum. 
\begin{figure}[t]\label{fig:addit}
\includegraphics[width=0.48\linewidth]{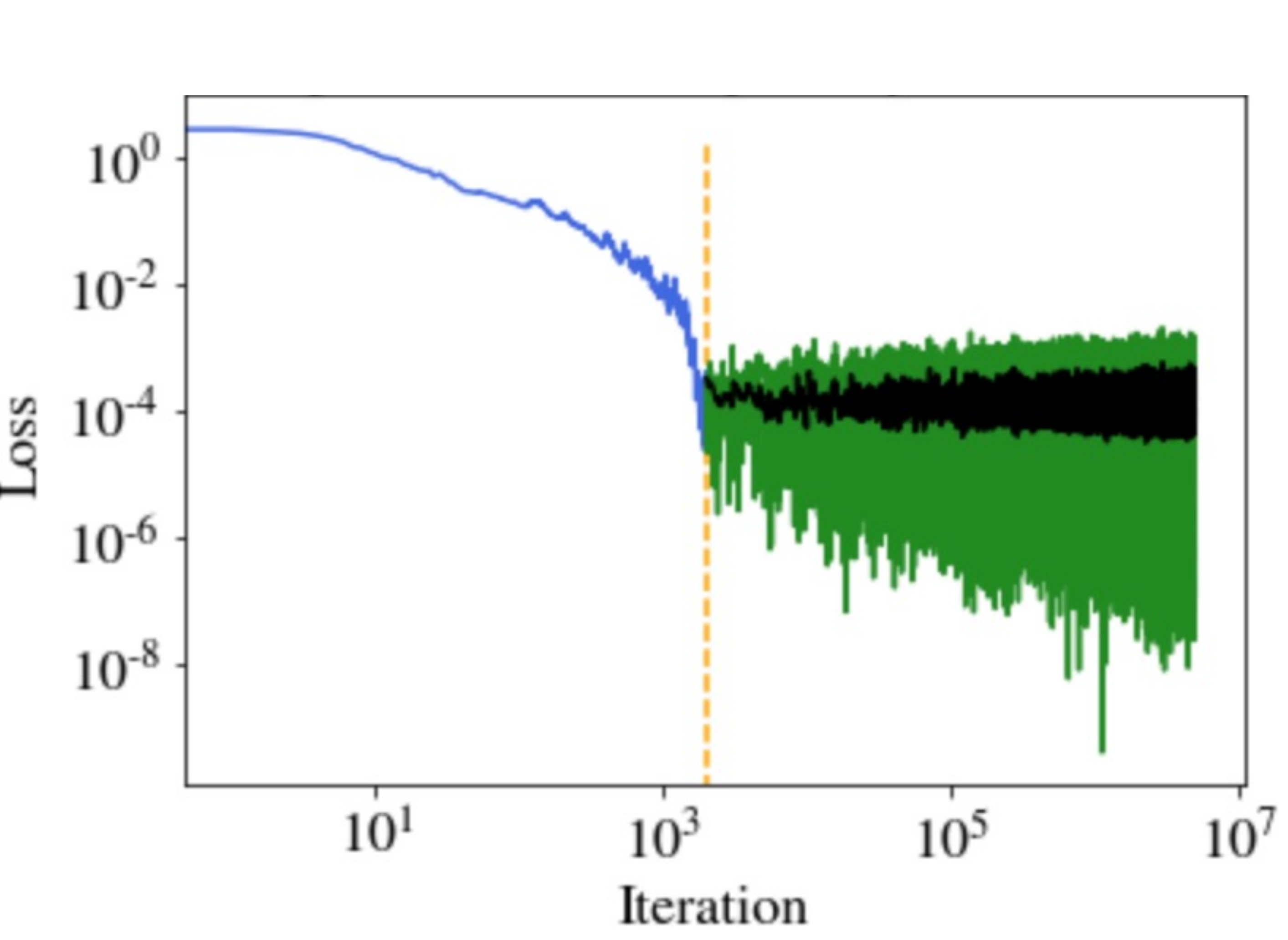}
\includegraphics[width=0.465\linewidth]{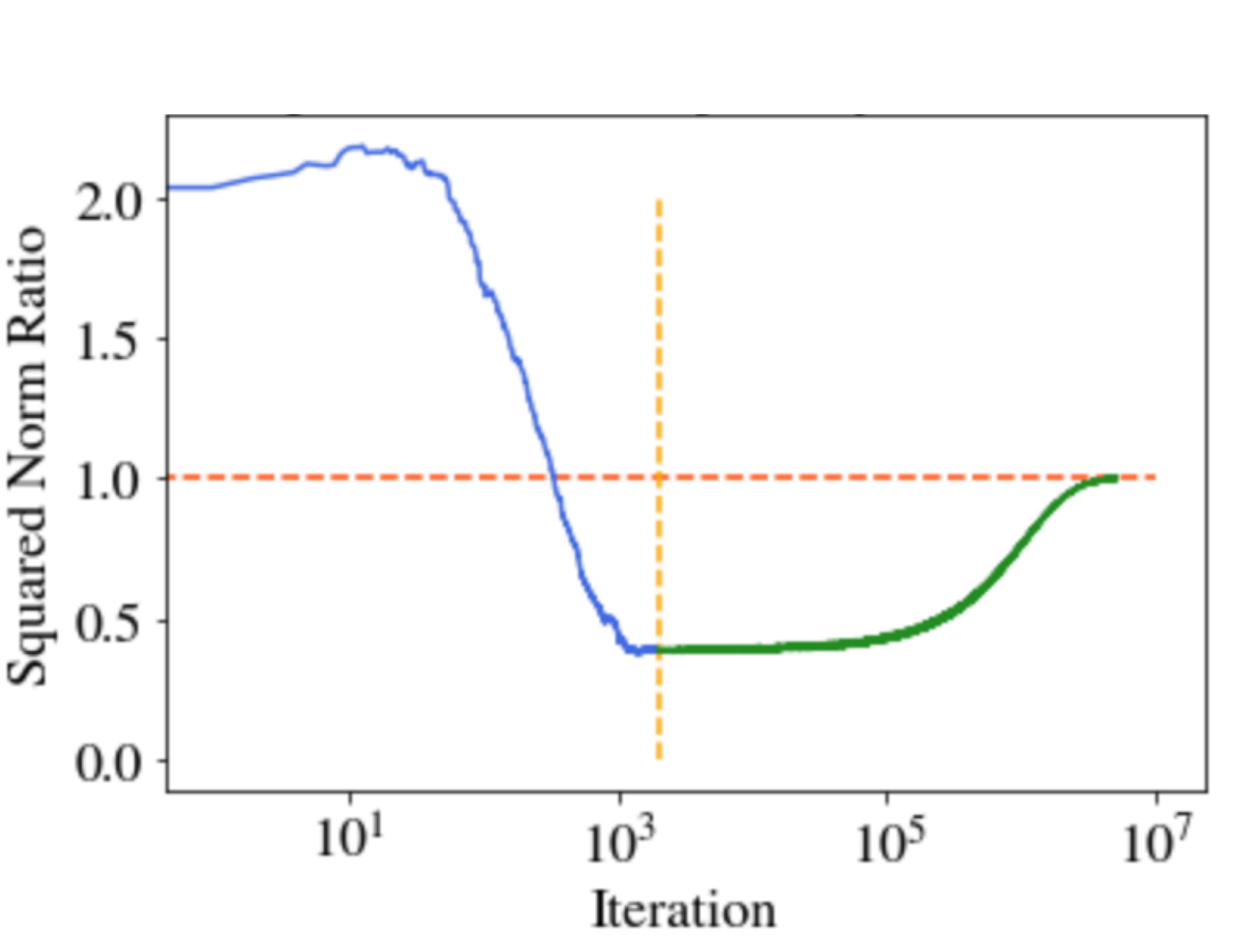}

  \caption{Algorithmic behaviors of Perturbed GD and GD for $d=4$. For Perturbed GD, phase transition happens around the first $2\times10^3$ iterations.  }
\end{figure}


\vspace{-0.1in}
\section{Discussions}\label{sec_discussion}
\vspace{-0.1in}
\noindent {\bf Connections to SGD.}
This paper studies the implicit bias of the noise in  nonconvex optimization. 
Direct analysis on SGD is beyond current technical limit due to complex dependencies.
Specifically,  the noise in SGD comes from random sampling of the training data, and heavily depends on the iterate. 
This induces a complex dependency between iterate and the noise, and  makes it difficult to characterize the distribution of the noise.

Our Perturbed GD can be viewed as a close variant of SGD.
Moreover, the noise in Perturbed GD follows Gaussian distribution and is independent of the iterates. Hence, the analysis of Perturbed GD, though still highly non-trivial, is now technically manageable.

\noindent {\bf Biased Gradient Estimator.} Different from SGD,  Perturbed GD implements a biased gradient estimator, i.e., $ \EE_{\xi_{1},\xi_{2}}\nabla\cF(x+\xi_1,y+\xi_2)\neq\nabla\cF(x,y).$ Such biased gradient also appears in training deep neural networks combined with computational heuristics.
 Specifically, \cite{luo2018towards} show that with batch normalization,  the gradient estimator in SGD is  also biased with respect to the original loss. The similarity between this biased gradient and our perturbed gradient is worth future investigation.


\noindent {\bf Extension to Other Types of Noise.} Our work considers Gaussian noise, but can be extended to analyzing other types of noise. For example, we can show  that anisotropic noise will have different smoothing
effects along different directions. The implicit bias will thus depend on the covariance of noise in addition to the noise level. For another example, heavy tailed distribution of noise will affect the probability of the convergence of Perturbed GD. It may be difficult to achieve high probability convergence as we have shown for light tailed distributions.

\noindent {\bf Sharp/Flat Minima and Phase Transition.} 
Lemma \ref{lem_condition_number} shows that the Hessian matrix of an unbalanced global optimum is ill-conditioned, and the landscape around such an optimum is sharp in some directions and flat in others. 
For nonconvex matrix factorization, all the global optima are connected and form a path.
The landscape around the path forms a valley, which is narrow around unbalanced optima and wide around the balanced ones (See Figure \ref{fig:landscape2}).
Our three-phase convergence analysis shows that the Perturbed GD first falls into the valley, and then traverses within the valley until it finds the balanced optima.
\begin{figure}[t]
\centering
\includegraphics[width=0.95\linewidth]{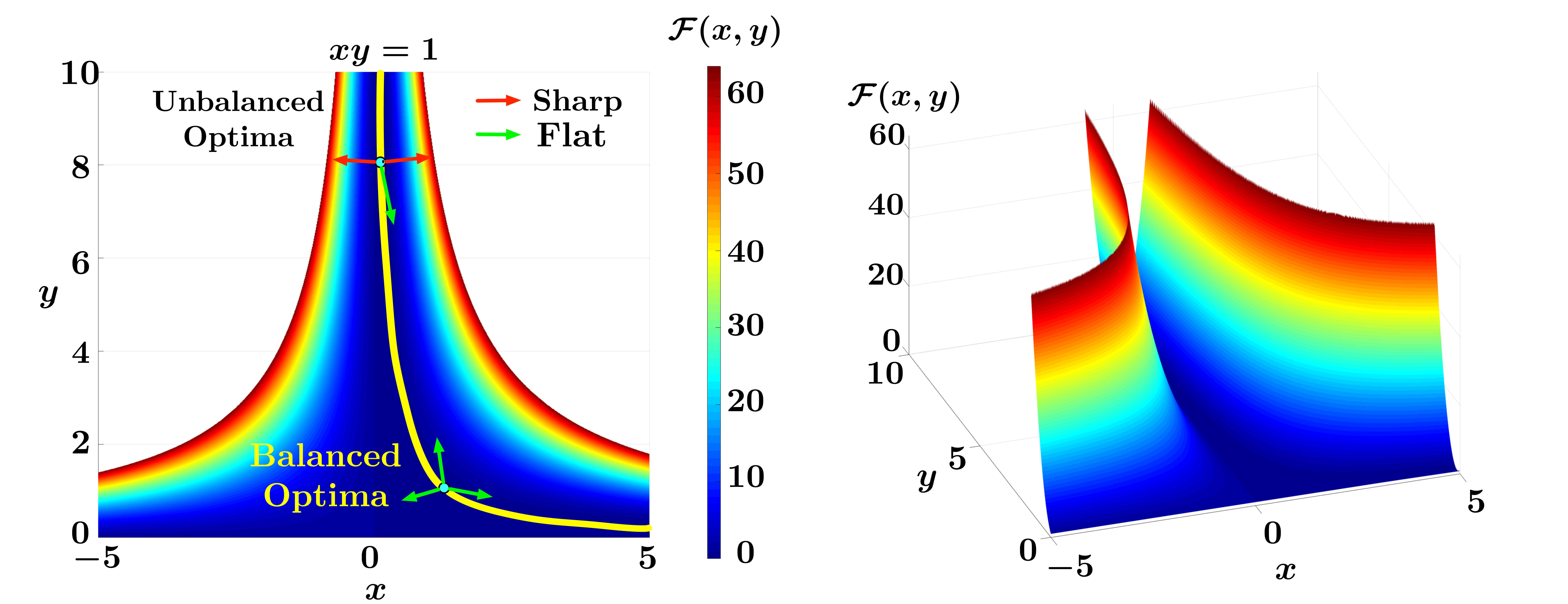}
\caption{ The visualization of objective   $\cF(x,y)=(1-xy)^2.$  All the global optima are connected and form a path. The landscape around the path forms a valley.  Around unbalanced optima, the landscape  is sharp in some directions and flat in others.  Around balanced optima, the landscape only contains flat directions.}
\label{fig:landscape2}
\end{figure}

As we have mentioned earlier, people have shown that the local optima of deep neural networks are also  connected. Thus, the phase transition also provides a new perspective to explain the plateau of training curves after learning rate decay in training neural networks. 
Our analysis suggests that after adjusting the learning rate, the algorithm enters a new phase, where the noise slowly re-adjusts the landscape.
At the beginning of this phase, the loss decreases rapidly due to the reduced noise level.
By the end of this phase, the algorithm falls into a region with benign landscape that is suitable for further decreasing the step size.

\noindent {\bf Related Literature.} 
Implicit bias of noise has also been studied in \citet{haochen2020shape,blanc2020implicit}. However, they consider perturbing labels while our work considers perturbing parameters.
In a broader sense, our work is also related to \citet{li2019towards} which show that the noise scale of SGD may change the learning order of patterns. However, they do not study the implicit bias of noise towards certain optima.

\bibliography{ref}
\bibliographystyle{ims}


\section{Preliminaries}
We first introduce some important notions and results which can be used in the following proof. 

Assuming  $\{u_*,\tilde{u}_1,\dots,\tilde{u}_{d_1-1}\}$ and $\{v_*,\tilde{v}_1,\dots,\tilde{v}_{d_2-1}\}$ are two sets of standard orthogonal basis of $\RR^{d_1}$ and $\RR^{d_2}$ respectively, we can then rewrite $\forall x\in\RR^{d_1}$ and $\forall y\in\RR^{d_2}$ as 
\begin{align*}
x\triangleq \alpha_1u_*+\sum_{i=1}^{d
_1-1} \beta_{1}^{(i)}\tilde{u}_i,\\
y\triangleq \alpha_2v_*+\sum_{j=1}^{d
_2-1} \beta_2^{(j)}\tilde{v}_j,
\end{align*}
where $\alpha_1=x^\top u_*,$ $\alpha_2=y^\top v_*,$ $\beta_{1}^{(i)}=x^\top \tilde{u}_i$ and $\beta_{2}^{(j)}=y^\top \tilde{v}_j, ~\forall 0\leq i\leq d_1-1, 0\leq j\leq d_2-1.$
For simplicity, we denote $\beta_k=(\beta_k^{(1)},...,\beta_k^{(d_k-1)})^\top$ where $k=1,2.$

With the notions above, we can rewrite the Perturbed GD update as
\begin{align*}
\alpha_{1,t+1}=\alpha_{1,t}-\eta<\nabla_{x}\cF(x_{t}+\xi_{1,t},y_{t}+\xi_{2,t}),u_*>,\\\alpha_{2,t+1}=\alpha_{2,t}-\eta<\nabla_{y}\cF(x_{t}+\xi_{1,t},y_{t}+\xi_{2,t}),v_*>.
\end{align*}
Note that the optimal solutions to \eqref{mat_fct_ncvx} satisfy $x^\top My=1$. Thus, in our proof, we need to characterize the update of $x^\top My,$ which can be re-expressed as
\begin{align}
x_{t+1}^\top My
_{t+1}&=\alpha_{1,t+1}\alpha_{2,t+1}\nonumber\\&=(\alpha_{1,t}-\eta<\nabla_{x}\cF(x_{t}+\xi_{1,t},y_{t}+\xi_{2,t}),u_*>)(\alpha_{2,t}-\eta<\nabla_{y}\cF(x_{t}+\xi_{1,t},y_{t}+\xi_{2,t}),v_*>)\nonumber\\&=\alpha_{1,t}\alpha_{2,t}-\eta\left(\alpha_{2,t}<\nabla_{x}\cF(x_{t}+\xi_{1,t},y_{t}+\xi_{2,t}),u_*>+\alpha_{1,t}<\nabla_{y}\cF(x_{t}+\xi_{1,t},y_{t}+\xi_{2,t}),v_*>\right)\nonumber\\&\quad\quad\quad\quad\quad\quad\quad\quad+\eta^2<\nabla_{x}\cF(x_{t}+\xi_{1,t},y_{t}+\xi_{2,t}),u_*><\nabla_{y}\cF(x_{t}+\xi_{1,t},y_{t}+\xi_{2,t}),v_*>.
\end{align}
For simplicity, we denote
\begin{align*}
&A_t\triangleq \alpha_{2,t}<\nabla_{x}\cF(x_{t}+\xi_{1,t},y_{t}+\xi_{2,t}),u_*>+\alpha_{1,t}<\nabla_{y}\cF(x_{t}+\xi_{1,t},y_{t}+\xi_{2,t}),v_*>,\\&B_t\triangleq <\nabla_{x}\cF(x_{t}+\xi_{1,t},y_{t}+\xi_{2,t}),u_*><\nabla_{y}\cF(x_{t}+\xi_{1,t},y_{t}+\xi_{2,t}),v_*>.
\end{align*}
Then the update of $x^\top My=\alpha_1 \alpha_2$ can be expressed in a more compact way as follows.
\begin{align}\label{*1}
\alpha_{1,t+1}\alpha_{2,t+1}=\alpha_{1,t}\alpha_{2,t}-\eta A_t+\eta^2B_t.
\end{align}
Similarly, the update of $(x^\top My-1)^2$ can be re-expressed as
\begin{align}\label{*2}
(x_{t+1}^\top My_{t+1}-1)^2&=(\alpha_{1,t+1}\alpha_{2,t+1}-1)^2
\nonumber\\&=(\alpha_{1,t}\alpha_{2,t}-1-\eta A_t+\eta^2B_t)^2
\nonumber\\&=(\alpha_{1,t}\alpha_{2,t}-1)^2+\eta^2 A_t^2+\eta^4 B_t^2\nonumber\\&~~~~~~~~~~~~-2\eta A_t(\alpha_{1,t}\alpha_{2,t}-1)-2\eta^3A_tB_t+2\eta^2B_t(\alpha_{1,t}\alpha_{2,t}-1).
\end{align}
Furthermore, since  the balanced optima satisfy $x^\top u_*=y^\top v_*,$ we further explicitly write down the update of $\left((x^\top u_*)^2-(y^\top v_*)^2\right)^2$  as follows.
\begin{align}\label{*3}
\left((x_{t+1}^\top u_*)^2-(y_{t+1}^\top v_*)^2\right)^2&=(\alpha_{1,t+1}^2-\alpha_{2,t+1}^2)^2\nonumber\\
&=\left(\alpha_{1,t}^2-\alpha_{2,t}^2\right)^2+4\eta^2 D_t^2+\eta^4 F_t^2\nonumber\\&~~~~~~~~~~~~-4\eta D_t\left(\alpha_{1,t}^2-\alpha_{2,t}^2\right)-4\eta^3D_t F_t+2\eta^2F_t\left(\alpha_{1,t}^2-\alpha_{2,t}^2\right),
\end{align}
where $D_t$ and $F_t$ is defined as
\begin{align*}
&D_t\triangleq \alpha_{1,t}<\nabla_{x}\cF(x_{t}+\xi_{1,t},y_{t}+\xi_{2,t}),u_*>-\alpha_{2,t}<\nabla_{y}\cF(x_{t}+\xi_{1,t},y_{t}+\xi_{2,t}),v_*>,\\&F_t\triangleq <\nabla_{x}\cF(x_{t}+\xi_{1,t},y_{t}+\xi_{2,t}),u_*>^2-<\nabla_{y}\cF(x_{t}+\xi_{1,t},y_{t}+\xi_{2,t}),v_*>^2.
\end{align*}

Next we are going to calculate $<\nabla_{x}\cF(x_{t}+\xi_{1,t},y_{t}+\xi_{2,t}),u_*>$, the gradient projection along the direction of the optimum $u_*$.
\begin{align}\label{*4}
&<\nabla_{x}\cF(x_{t}+\xi_{1,t},y_{t}+\xi_{2,t}),u_*>\nonumber\\&=u_*^\top\nabla_{x}\cF(x_{t}+\xi_{1,t},y_{t}+\xi_{2,t})
\nonumber\\&=u_*^\top\left((x_{t}+\xi_{1,t})(y_{t}+\xi_{2,t})^\top-u_*v_*^\top\right)(y_{t}+\xi_{2,t})
\nonumber\\&=\alpha_{1,t}(\alpha_{2,t}^2+\norm{\beta_{2,t}}_2^2)-\alpha_{2,t}+2\alpha_{1,t}y_{t}^\top \xi_{2,t}-v_*^\top \xi_{2,t}+u_*^\top \xi_{1,t}(\alpha_{2,t}^2+\norm{\beta_{2,t}}_2^2)\nonumber\\&~~~~~~~~~~~~+2u_*^\top\xi_{1,t}y_{t}^\top \xi_{2,t}+\alpha_{1,t}\norm{\xi_{2,t}}^2+u_*^\top \xi_{1,t}\norm{\xi_{2,t}}^2
\nonumber\\&\triangleq \alpha_{1,t}(\alpha_{2,t}^2+\norm{\beta_{2,t}}_2^2)-\alpha_{2,t}+g_x,
\end{align}
where $g_x=2\alpha_{1,t}y_{t}^\top \xi_{2,t}-v_*^\top \xi_{2,t}+u_*^\top \xi_{1,t}(\alpha_{2,t}^2+\norm{\beta_{2,t}}_2^2)+2u_*^\top\xi_{1,t}y_{t}^\top \xi_{2,t}+\alpha_{1,t}\norm{\xi_{2,t}}^2+u_*^\top \xi_{1,t}\norm{\xi_{2,t}}^2.$

Similarly, we have 
\begin{align}\label{*5}
&<\nabla_{y}\cF(x_{t}+\xi_{1,t},y_{t}+\xi_{2,t}),v_*>
\nonumber\\&=\alpha_{2,t}(\alpha_{1,t}^2+\norm{\beta_{1,t}}_2^2)-\alpha_{1,t}+2\alpha_{2,t}x_{t}^\top \xi_{1,t}-u_*^\top \xi_{1,t}+v_*^\top \xi_{2,t}(\alpha_{1,t}^2+\norm{\beta_{1,t}}_2^2)\nonumber\\&~~~~~~~~~~~~+2v_*^\top\xi_{2,t}x_{t}^\top \xi_{1,t}+\alpha_{2,t}\norm{\xi_{1,t}}^2+v_*^\top \xi_{2,t}\norm{\xi_{1,t}}^2
\nonumber\\&\triangleq \alpha_{2,t}(\alpha_{1,t}^2+\norm{\beta_{1,t}}_2^2)-\alpha_{1,t}+g_y,
\end{align}
where $g_y=\alpha_{1,t}+2\alpha_{2,t}x_{t}^\top \xi_{1,t}-u_*^\top \xi_{1,t}+v_*^\top \xi_{2,t}(\alpha_{1,t}^2+\norm{\beta_{1,t}}_2^2)+2v_*^\top\xi_{2,t}x_{t}^\top \xi_{1,t}+\alpha_{2,t}\norm{\xi_{1,t}}^2+v_*^\top \xi_{2,t}\norm{\xi_{1,t}}^2.$
\section{Proof of Lemma \ref{lem_landscape}, Lemma \ref{lem_condition_number} , and Lemma \ref{lem_convolutional}}\label{pf_1}

\subsection{Proof of Lemma \ref{lem_landscape}}
\begin{proof}
By setting the gradient of $ \cF$ to zero we get
\begin{align}
    &\norm{x}_2^2y=M^\top x,\label{zero_grad_1}\\
    &\norm{y}_2^2x=My.\label{zero_grad_2}
\end{align}
	Recall that  $M=u_* v_*^\top$ and \begin{align*}
x\triangleq \alpha_1u_*+\sum_{i=1}^{d
_1-1} \beta_{1}^{(i)}\tilde{u}_i,\\
y\triangleq \alpha_2v_*+\sum_{j=1}^{d
_2-1} \beta_2^{(j)}\tilde{v}_j.
\end{align*}
Substitute $x$ and $y$ in \eqref{zero_grad_1} and \eqref{zero_grad_2} by their expansion, we then have 
\begin{align*}
(\alpha_1^2+\sum_{i=1}^{d_1-1} (\beta_{1}^{(i)})^2)\alpha_2&=\alpha_1,\\
(\alpha_2^2+\sum_{i=1}^{d_2-1} (\beta_{2}^{(i)})^2)\alpha_1&=\alpha_2,\\
(\alpha_1^2+\sum_{i=1}^{d_1-1} (\beta_{1}^{(i)})^2)\beta_{2}^{(j)}&=0, \forall j=1,...,d_2-1,\\
(\alpha_2^2+\sum_{i=1}^{d_2-1} (\beta_{2}^{(i)})^2)\beta_{1}^{(j)}&=0, \forall j=1,...,d_1-1.
\end{align*}
The above equalities yield the following two types of stationary points.
\begin{itemize}
\item $(\alpha_1^2+\sum_{i=1}^{d_1-1} (\beta_{1}^{(i)})^2)(\alpha_2^2+\sum_{i=1}^{d_2-1} (\beta_{2}^{(i)})^2)\neq 0,\beta_1=0,\beta_2=0.$ This leads to $\alpha_1\alpha_2=1.$  Thus, $xy^\top=\alpha_1\alpha_2 u_*v_*^\top=M,$ and $\cF(\alpha_1u_*,\alpha_2v_*)=0.$ Then we have global optima $\left(\alpha u_*, \frac{1}{\alpha} v_*\right)$ for $\alpha\neq 0.$
\item Either $(\alpha_1^2+\sum_{i=1}^{d_1-1} (\beta_{1}^{(i)})^2)=0$ and $\alpha_2=0,$ or $(\alpha_2^2+\sum_{i=1}^{d_2-1} (\beta_{2}^{(i)})^2)=0$ and $\alpha_1=0.$ We next show that stationary points satisfy these conditions are strict saddle points.  We only consider the first case, and the second case can be proved following similar lines.  We first calculate the Hessian matrix as follows.
\begin{align*}
    \nabla^2 \cF(x, y)=\begin{pmatrix}
\norm{y}_2^2I_{d_1} & 2xy^\top-M \\
2yx^\top-M^\top & \norm{x}_2^2I_{d_2}
\end{pmatrix}.
\end{align*}
At $x=0$, $y=\sum_{j=1}^{d_2-1} \beta_2^{(j)}\tilde{v}_j,$
\begin{align*}
    \nabla^2 \cF(x, y)=\begin{pmatrix}
\sum_{i=1}^{d_2-1} (\beta_{2}^{(i)})^2I_{d_1} & -M \\
-M^\top & 0
\end{pmatrix}.
\end{align*}
For any $a\in \RR^{d_1},b\in \RR^{d_2},$
\begin{align*}
\begin{pmatrix}
a^\top, b^\top
\end{pmatrix}\nabla^2 \cF(x, y)\begin{pmatrix}
a  \\
b 
\end{pmatrix}=\sum_{i=1}^{d_2-1} (\beta_{2}^{(i)})^2\norm{a}_2^2-2a^\top M b
\end{align*}
For $(a, b)=(\tilde{u}_i, \tilde{v}_j)$, this quantity is positive. For  $(a, b)=(u_*, \sum_{i=1}^{d_2-1} (\beta_{2}^{(i)})^2v_*),$ this quantity is negative. Thus, $x=0$, $y=\sum_{j=1}^{d_2-1} \beta_2^{(j)}\tilde{v}_j$satisfies strict saddle property. We conclude that for any $x\in \RR^{d_1},y\in\RR^{d_2}$ such that $x^\top u_*=y^\top v_*=0,$ we have strict saddle points $(x,0)$ and $(0,y).$
 \end{itemize}
\end{proof}
\subsection{Proof of Lemma \ref{lem_condition_number}}
At $x=\alpha u_*, y=\frac{1}{\alpha}v_*,$
\begin{align*}
    \nabla^2 \cF(\alpha u_*, \frac{1}{\alpha}v_*)=\begin{pmatrix}
\frac{1}{\alpha^2}I_{d_1} &M \\
M^\top & \alpha^2I_{d_2}
\end{pmatrix}.
\end{align*}
One can verify that   $\nabla^2 \cF\left(\alpha u_*,\frac{1}{\alpha} v_*\right)$ has eigenvalues $\alpha^2+\frac{1}{\alpha^2},\alpha^2, \frac{1}{\alpha^2}.$ The largest eigenvalue is $\lambda_1=\alpha^2+\frac{1}{\alpha^2}$ and the smallest eigenvalue is $\lambda_{d_1+d_2}=\min\{\alpha^2,\frac{1}{\alpha^2}\}.$ Thus, the condition number can be easily calculated as follows.
$$\kappa\left(\nabla^2 \cF\left(\alpha u_*,\frac{1}{\alpha} v_*\right)\right)=\max\{\alpha^4,\frac{1}{\alpha^4}\}+1.$$
\subsection{Proof of Lemma \ref{lem_convolutional}}
\begin{proof}
Recall that  $M=u_* v_*^\top$ and \begin{align*}
x\triangleq \alpha_1u_*+\sum_{i=1}^{d
_1-1} \beta_{1}^{(i)}\tilde{u}_i,\\
y\triangleq \alpha_2v_*+\sum_{j=1}^{d
_2-1} \beta_2^{(j)}\tilde{v}_j.
\end{align*}

By setting the gradient of $\tilde \cF$ to zero we get
\begin{align*}
    &(\norm{x}_2^2+d_1 \sigma_1^2)y=M^\top x,\\
    &(\norm{y}_2^2+d_2 \sigma_2^2)x=My.
\end{align*}
From the equations above, we can verify that $(x,y)=(0,0)$ is a stationary point. Furthermore, left multiplying the equations above by $\tilde{v}_j^\top$ and $\tilde{u}_i^\top$ respectively, we will get that $\beta_{1}^{(i)}$'s and $\beta_{2}^{(j)}$'s are all zeros. Similarly, by left multiplying the equations above by $v_*^\top$ and $u_*^\top$ respectively, we will get\begin{align*}
    &(\alpha_1^2+d_1 \sigma_1^2)\alpha_2^2=\alpha_2\alpha_1,\\
    &(\alpha_2^2+d_2 \sigma_2^2)\alpha_1^2=\alpha_1\alpha_2.
\end{align*}
Then, with some algebraic manipulations, we get
\begin{align*}
    &\alpha_1^2=\sqrt{\frac{d_1 \sigma_1^2}{d_2 \sigma_2^2}} - d_1 \sigma_1^2,\\
    &\alpha_2^2=\sqrt{\frac{d_2 \sigma_2^2}{d_1 \sigma_1^2}} - d_2 \sigma_2^2.
\end{align*}
Specifically, when $d_1 \sigma_1^2=\gamma^2 d_2 \sigma_2^2\leq\gamma$, we have
\begin{align*}
    &\alpha_1=\gamma \alpha_2= \pm \sqrt{\gamma- \gamma^2 d_2 \sigma_2^2}.
\end{align*}
Next, we are going to show that $(0, 0)$ is a strict saddle point and $(x_*, y_*)\triangleq\pm(\alpha_1u_*, \alpha_2v_*)$ are global optima. We first calculate the Hessian matrix as follows.
\begin{align*}
    \nabla^2 \tilde\cF(x, y)=\begin{pmatrix}
\left(\norm{y}_2^2+d_2\sigma_2^2\right)I_{d_1} & 2xy^\top-M \\
2yx^\top-M^\top & \left(\norm{x}_2^2+d_1\sigma_1^2\right)I_{d_2}
\end{pmatrix}.
\end{align*}
Since the injected noise is small, we have $\alpha_1\alpha_2=1-\sqrt{d_1\sigma_1^2d_2\sigma_2^2}>0$. For any $a \in \RR^{d_1}$ and $b \in \RR^{d_2}$, we have\begin{align*}
    &\begin{pmatrix}
a^\top, b^\top
\end{pmatrix}\nabla^2 \tilde\cF(x_*, y_*)\begin{pmatrix}
a  \\
b 
\end{pmatrix}\\
=&(\alpha_2^2+d_2\sigma_2^2)\norm{a}_2^2+(\alpha_1^2+d_1\sigma_1^2)\norm{b}_2^2+2(2\alpha_1\alpha_2-1)(a^\top u_*) (b^\top v_*)\\
\geq&(\alpha_2^2+d_2\sigma_2^2)\norm{a}_2^2+(\alpha_1^2+d_1\sigma_1^2)\norm{b}_2^2-2|2(1-\sqrt{d_1\sigma_1^2d_2\sigma_2^2})-1|\ \norm{a}_2\ \norm{b}_2\\
>&\sqrt{\frac{d_2 \sigma_2^2}{d_1 \sigma_1^2}}\norm{a}_2^2+\sqrt{\frac{d_1 \sigma_1^2}{d_2 \sigma_2^2}}\norm{b}_2^2-2\norm{a}_2\ \norm{b}_2,
\end{align*}where the last inequality comes from the fact that $d_1\sigma_1^2$ and $d_2\sigma_2^2$ should be small enough such that $$0<1-\sqrt{d_1\sigma_1^2d_2\sigma_2^2}<1.$$Note that $$\sqrt{\frac{d_2 \sigma_2^2}{d_1 \sigma_1^2}}\norm{a}_2^2+\sqrt{\frac{d_1 \sigma_1^2}{d_2 \sigma_2^2}}\norm{b}_2^2-2\norm{a}_2\ \norm{b}_2=\left((\frac{d_2 \sigma_2^2}{d_1 \sigma_1^2})^{\frac{1}{4}}\norm{a}_2-(\frac{d_1 \sigma_1^2}{d_2 \sigma_2^2})^{\frac{1}{4}}\norm{b}_2\right)^2.$$
Thus, as long as ${d_1\sigma_1^2\ d_2\sigma_2^2}<1$, we have $\nabla^2 \cF_{reg}(x_*, y_*)$ is positive definite (PD). Thus $(x_*, y_*)$ is global minimum and so is $(-x_*,- y_*)$.

Similarly, for any $a \in \RR^{d_1}$ and $b \in \RR^{d_2}$, we have
\begin{align*}
    &\begin{pmatrix}
a^\top, b^\top
\end{pmatrix}\nabla^2 \cF_{reg}(0, 0)\begin{pmatrix}
a  \\
b 
\end{pmatrix}
=d_2\sigma_2^2\norm{a}_2^2+d_1\sigma_1^2\norm{b}_2^2-2(a^\top u_*) (b^\top v_*).
\end{align*}
It is easy to show that for $(a, b)=(u_*, v_*)$, the quantity is negative when noise is small enough. But for $(a, b)=(\tilde{u}_i, \tilde{v}_j)$, this quantity is positive. Thus, (0,0) is a strict saddle point. Here, we prove Lemma  \ref{lem_convolutional}. 
\end{proof}

\section{Proof of Theorem \ref{thm_main}}\label{pf_2}
\subsection{Boundedness of Trajectory}
We first show that the solution trajectory of Perturbed GD is bounded with high probability, which is a sufficient condition for our following convergence analysis.
\begin{lemma}[Boundedness of Trajectories]\label{lem_bounded}
	Given $x_0\in\RR^{d_1},$ $y_0\in\RR^{d_2},$ we choose $\sigma_1, \sigma_2>0$ such that $\sigma^2=\EE\left[\norm{\xi_1}_2^2\right]=\EE\left[\norm{\xi_2}_2^2\right]$ and $\norm{x_0}^2_2+\norm{y_0}^2_2\leq 1/\sigma^2.$      For any $\delta\in(0,1)$, we  take  $$\eta\leq \eta_1= C_1{\sigma^6}(\log((d_1+d_2)/\delta)\log(1/\delta))^{-1},$$ for some positive constant $C_1.$ Then with probability at least $1-\delta,$ we have $\norm{x_t}^2_2+\norm{y_t}^2_2\leq 2/\sigma^2$ for any $t\leq T_1=O(1/\eta^2).$
\end{lemma}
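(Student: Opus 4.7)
The plan is to use the Lyapunov function $Z_t := \norm{x_t}_2^2 + \norm{y_t}_2^2$ and show it stays below $2/\sigma^2$ for all $t \leq T_1 = O(1/\eta^2)$ with high probability via a drift-plus-martingale argument. Expanding the squared-norm recursion from the Perturbed GD update and taking expectation over the independent Gaussian noises $\xi_{1,t}, \xi_{2,t}$ conditional on the iterate $(x_t,y_t)$ yields
$$\EE\bigl[Z_{t+1} - Z_t \mid x_t,y_t\bigr] = -2\eta\bigl(2\norm{x_t}_2^2\norm{y_t}_2^2 + \sigma^2 Z_t - 2 x_t^\top M y_t\bigr) + \eta^2 R_t,$$
where $R_t$ is a random remainder polynomial in $\norm{\tilde x_t}_2, \norm{\tilde y_t}_2$. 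Using $\norm{M}_2 = 1$ and the elementary bound $2\norm{x_t}_2^2\norm{y_t}_2^2 - 2\norm{x_t}_2\norm{y_t}_2 \geq -1/2$, the first-order term is at most $-2\eta \sigma^2 Z_t + \eta$; hence whenever $Z_t \geq 1/\sigma^2$ the conditional drift is bounded above by $-\eta$, up to the higher-order $\eta^2$ correction.

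To handle the fluctuations, I would work on the high-probability noise event
$$\mathcal{E} := \bigcap_{t \leq T_1} \bigl\{\norm{\xi_{1,t}}_2^2 + \norm{\xi_{2,t}}_2^2 \leq C \sigma^2 \log((d_1 + d_2) T_1 / \delta)\bigr\},$$
which by standard Gaussian tail bounds and a union bound satisfies $\Pr(\mathcal{E}) \geq 1 - \delta/2$; this produces the $\log((d_1+d_2)/\delta)$ factor in $\eta_1$ (the additional $\log T_1 = O(\log(1/\eta))$ gets absorbed into the $\log(1/\delta)$ factor). Restricted to $\mathcal{E}$ and the region $\{Z_t \leq 2/\sigma^2\}$, both $|\eta^2 R_t|$ and the martingale increment $\Delta_t := Z_{t+1} - \EE[Z_{t+1}\mid x_t,y_t]$ admit deterministic upper bounds polynomial in $1/\sigma$ and logarithmic in $1/\delta$.

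The argument is closed by a stopping-time trick. Define $\tau := \inf\{t : Z_t > 2/\sigma^2\}$ and consider the stopped compensated sum $S_t := \sum_{s < t\wedge\tau}\Delta_s$, which is a bounded-difference martingale on $\mathcal{E}$. Azuma--Hoeffding (or Freedman's inequality) over $T_1 = O(1/\eta^2)$ steps then yields $\max_{t\leq T_1}|S_t| \leq 1/(2\sigma^2)$ with probability at least $1 - \delta/2$, provided $\eta \leq c\,\sigma^6/\bigl(\log((d_1+d_2)/\delta)\log(1/\delta)\bigr)$ for a sufficiently small absolute constant $c$, which is precisely the step-size restriction stated for $\eta_1$. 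Combining the martingale control with the negative drift established earlier keeps the stopped process $Z_{t\wedge\tau}$ below $2/\sigma^2$ for every $t \leq T_1$, so $\tau > T_1$ on the intersection of the two good events, giving the claim with probability at least $1 - \delta$.

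The principal obstacle is the tension between the horizon length $T_1 = O(1/\eta^2)$ and the noise scale: a naive bound on the accumulated martingale would give fluctuations of order $\eta\sqrt{T_1} = O(1)$, already comparable to the width $1/\sigma^2$ of the admissible region. The $\sigma^6$ scaling in $\eta_1$ is calibrated exactly so that, summed across the steps on which $Z_t$ tries to exceed $1/\sigma^2$, the per-step drift $\eta$ dominates the logarithmic-in-$\delta$ martingale fluctuation, while at the same time the $Z_t$-dependent residual $R_t$ remains controllable on $\mathcal{E}$ by virtue of the stopping time keeping $Z_t$ bounded throughout the argument. Verifying this balance carefully is the crux of the proof.
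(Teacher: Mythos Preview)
Your outline matches the paper's proof almost exactly: same Lyapunov function $Z_t=\norm{x_t}_2^2+\norm{y_t}_2^2$, the same one-step drift computation (including the bound $2ab-2\sqrt{ab}\geq-1/2$), the same high-probability noise event, the same stopping/indicator device, and Azuma to close. The $\sigma^6$ in $\eta_1$ indeed comes from controlling the $\eta^2 R_t$ remainder (the paper's constant $C_2=O(\sigma^{-6})$), exactly as you surmise.

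There is, however, one genuine gap, and it is precisely the ``principal obstacle'' you flag but do not resolve. Your martingale is the \emph{unweighted} sum $S_t=\sum_{s<t\wedge\tau}\Delta_s$. On $\mathcal{E}\cap\{Z_s\le 2/\sigma^2\}$ one has $|\Delta_s|=O\bigl(\eta\,\sigma^{-2}\sqrt{\log((d_1+d_2)/\delta)}\bigr)$, so Azuma over $T_1=O(1/\eta^2)$ steps gives $\max_t|S_t|=O\bigl(\sigma^{-2}\cdot\text{polylog}\bigr)$, which exceeds $1/(2\sigma^2)$ regardless of how small $\eta$ is: the product $\eta\sqrt{T_1}$ is $\Theta(1)$ and the polylog factor does not shrink with $\eta$. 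So the sentence ``Azuma--Hoeffding over $T_1$ steps then yields $\max_{t\le T_1}|S_t|\le 1/(2\sigma^2)$'' is not justifiable, and ``combining with the negative drift'' does not repair it, because the drift is only negative on the excursion set $\{Z_s\ge 1/\sigma^2\}$ and can be positive elsewhere.

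The paper resolves this by building the contraction into the martingale itself. It sets
\[
G_t \;=\; (1-2\eta\sigma^2)^{-t}\bigl(Z_t-1/\sigma^2\bigr),
\]
shows $G_t\mathds{1}_{\{\cdot\}}$ is a supermartingale, and applies Azuma to $G_t$. After multiplying back by $(1-2\eta\sigma^2)^{t}$, the effective increment at step $s$ carries the weight $(1-2\eta\sigma^2)^{t-s}$, so the sum of squared increments is a geometric series of size $O(1/(\eta\sigma^2))$ rather than $T_1$. The resulting fluctuation is $O\bigl(\sqrt{\eta}\,\sigma^{-3}\cdot\text{polylog}\bigr)$, which \emph{is} made $\le 1/\sigma^2$ by the stated step-size restriction. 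This geometric rescaling is exactly the rigorous version of your ``drift dominates the martingale'' intuition; add it and your argument goes through.
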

\begin{proof}
We first define the event where the injected noise for both $x$ and $y$ is bounded for the first $t$ iterations.
\begin{align}
&\cA_t=\left\{\big|{\xi^{(i)}_{1,\tau}}\big|,\big|{\xi^{(j)}_{2,\tau}}\big|\leq \sigma\left(\sqrt{2\log((d_1+d_2)\eta^{-2})} +\sqrt{\log(1/\delta)}\right), \forall\tau\leq t, i=1,...,d_1,j=1,..,d_2 \right\}.
\end{align}
By the concentration result of the maximum of Gaussian distribution,  we have $\PP(\cA_{1/\eta^2})\geq 1-\delta.$
Moreover, we use  $\cH_t$ to denote the event where  the first $t$ iterates $\{(x_\tau,y_\tau)\}_{\tau\leq t}$ is bounded, i.e.,
$
\cH_t=\left\{\norm{x_\tau}^2_2+\norm{y_\tau}^2_2\leq \frac{2}{\sigma_2},\forall\tau\leq t\right\}.
$
Let $\cF_t=\sigma\{(x_\tau,y_\tau),\tau\leq t\}$ denote the $\sigma-$algebra generated by that past $t$ iterations. 

Under the event $\cH_t$ and $\cA_t,$ we have the following inequality on the conditional expectation of  $\norm{x_{t+1}}_2^2+\norm{y_{t+1}}_2^2.$
\begin{align}
&\EE\left[(\norm{x_{t+1}}_2^2+\norm{y_{t+1}}_2^2)\mathds{1}_{\cH_t\cap\cA_t}\big|\cF_t\right]\nonumber\\=&\left\{(1-2\eta\sigma^2)\left(\norm{x_{t}}_2^2+\norm{y_{t}}_2^2\right)-4\eta\left(\norm{x_{t}}_2^2\norm{y_{t}}_2^2-x_t^\top M y_t\right)\right\}\mathds{1}_{\cH_t\cap\cA_t}\nonumber\\
&+\eta^2 \EE_{\xi_{1,t},\xi_{2,t}}\left[\norm{\nabla_{x}\cF(x_{t}+\xi_{1,t},y_{t}+\xi_{2,t})}_2^2+\norm{\nabla_{y}\cF(x_{t}+\xi_{1,t},y_{t}+\xi_{2,t})}_2^2\right]\mathds{1}_{\cH_t\cap\cA_t}\nonumber\\
\leq&\left\{(1-2\eta\sigma^2)\left(\norm{x_{t}}_2^2+\norm{y_{t}}_2^2\right)-4\eta\left(\norm{x_{t}}_2^2\norm{y_{t}}_2^2-\norm{x_{t}}_2\norm{y_{t}}_2\right)\right\}\mathds{1}_{\cH_t\cap\cA_t}\nonumber\\
&+\eta^2 \EE_{\xi_{1,t},\xi_{2,t}}\left[ \big|\big|\norm{y_t+\xi_{2,t}}_2^2(x_t+\xi_{1,t})-M(y_t+\xi_{2,t})\big|\big|_2^2\right]\mathds{1}_{\cH_t\cap\cA_t}\nonumber\\
&+\eta^2 \EE_{\xi_{1,t},\xi_{2,t}}\left[ \big|\big|\norm{x_t+\xi_{1,t}}_2^2(y_t+\xi_{2,t})-M^\top(x_t+\xi_{1,t})\big|\big|_2^2\right]\mathds{1}_{\cH_t\cap\cA_t}\nonumber\\
\leq&\left\{(1-2\eta\sigma^2)\left(\norm{x_{t}}_2^2+\norm{y_{t}}_2^2\right)-4\eta\left(\norm{x_{t}}_2^2\norm{y_{t}}_2^2-\norm{x_{t}}_2\norm{y_{t}}_2\right)\right\}\mathds{1}_{\cH_t\cap\cA_t}\nonumber\\
&+2\eta^2 \EE_{\xi_{1,t},\xi_{2,t}}\left[ \norm{y_t+\xi_{2,t}}_2^4\norm{x_t+\xi_{1,t}}_2^2+\norm{M(y_t+\xi_{2,t})}_2^2\right]\mathds{1}_{\cH_t\cap\cA_t}\nonumber\\
&+2\eta^2 \EE_{\xi_{1,t},\xi_{2,t}}\left[ \norm{x_t+\xi_{1,t}}_2^4\norm{y_t+\xi_{2,t}}_2^2+\norm{M^\top(x_t+\xi_{1,t})}_2^2\right]\mathds{1}_{\cH_t\cap\cA_t}\nonumber\\
\leq&\left\{(1-2\eta\sigma^2)\left(\norm{x_{t}}_2^2+\norm{y_{t}}_2^2\right)+\eta+\eta^2 C_2\right\}\mathds{1}_{\cH_t\cap\cA_t},\nonumber
\end{align} 
where the second inequality comes from the fact that $x^2-x\geq-\frac{1}{4}$ for all $x\in\RR$ and $C_2=(\sigma^2+1/\sigma^2)(2/\sigma^4+6/d_1+6/d_2+6\sigma^4)$ in the last inequality. We take $\eta\leq 1/C_2,$ then we have
\begin{align}
\EE\left[(\norm{x_{t+1}}_2^2+\norm{y_{t+1}}_2^2-1/\sigma^2)\mathds{1}_{\cH_t\cap\cA_t}\big|\cF_t\right]&\leq(1-2\eta\sigma^2)\left(\norm{x_{t}}_2^2+\norm{y_{t}}_2^2-1/\sigma^2\right)\mathds{1}_{\cH_t\cap\cA_t}\nonumber\\
&\leq(1-2\eta\sigma^2)\left(\norm{x_{t}}_2^2+\norm{y_{t}}_2^2-1/\sigma^2\right)\mathds{1}_{\cH_{t-1}\cap\cA_{t-1}}\label{martingale1}.
\end{align}
If we denote $G_t=(1-2\eta\sigma^2)^{-t}(\norm{x_{t+1}}_2^2+\norm{y_{t+1}}_2^2-1/\sigma^2),$  $G_t\mathds{1}_{\cH_{t-1}\cap\cA_{t-1}}$ is then a super-martingale according to \eqref{martingale1}. We will apply Azuma's Inequality to prove the bound and before that we have to bound the difference between $G_{t+1}\mathds{1}_{\cH_{t}\cap\cA_{t}}$ and $\EE[G_{t+1}\mathds{1}_{\cH_{t}\cap\cA_{t}}\big|\cF_t].$
\begin{align*}
d_{t+1}&=\big| G_{t+1}\mathds{1}_{\cH_{t}\cap\cA_{t}}- \EE[G_{t+1}\mathds{1}_{\cH_{t}\cap\cA_{t}}\big|\cF_t]\big|\\
&=(1-2\eta\sigma^2)^{-t-1}\Bigg|2\eta x_t^\top[(x_{t}+\xi_{1,t})(y_{t}+\xi_{2,t})^\top-M](y_{t}+\xi_{2,t}))\\&\hspace{+1.2in}-2\eta  \EE_{\xi_{1,t},\xi_{2,t}}\left[x_t^\top[(x_{t}+\xi_{1,t})(y_{t}+\xi_{2,t})^\top-M](y_{t}+\xi_{2,t}))\right]\\
&\hspace{+1.2in}+2\eta y_t^\top[(x_{t}+\xi_{1,t})(y_{t}+\xi_{2,t})^\top-M]^\top(x_{t}+\xi_{1,t}))\\&\hspace{+1.2in}-2\eta  \EE_{\xi_{1,t},\xi_{2,t}}\left[y_t^\top[(x_{t}+\xi_{1,t})(y_{t}+\xi_{2,t})^\top-M]^\top(x_{t}+\xi_{1,t}))\right]\\
&\hspace{+1.2in}+\eta^2\big|\big|\norm{y_t+\xi_{2,t}}_2^2(x_t+\xi_{1,t})-M(y_t+\xi_{2,t})\big|\big|_2^2\\
&\hspace{+1.2in}-\eta^2 \EE_{\xi_{1,t},\xi_{2,t}}\left[\big|\big|\norm{y_t+\xi_{2,t}}_2^2(x_t+\xi_{1,t})-M(y_t+\xi_{2,t})\big|\big|_2^2\right]\\
&\hspace{+1.2in}+\eta^2\big|\big|\norm{x_t+\xi_{1,t}}_2^2(y_t+\xi_{2,t})-M^\top(x_t+\xi_{1,t})\big|\big|_2^2\\
&\hspace{+1.2in}-\eta^2 \EE_{\xi_{1,t},\xi_{2,t}}\left[\big|\big|\norm{x_t+\xi_{1,t}}_2^2(y_t+\xi_{2,t})-M^\top(x_t+\xi_{1,t})\big|\big|_2^2\right]\Bigg|\mathds{1}_{\cH_{t}\cap\cA_{t}}
\\&\leq {C_1}' (1-2\eta\sigma^2)^{-t-1}\eta \sigma^{-2}\left(\left(\log\frac{d_1+d_2}{\eta}\right)^{\frac{1}{2}}+\left(\log\frac{1}{\delta}\right)^{\frac{1}{2}}\right),
\end{align*}
where ${C_1}'$ is some positive constant. Denote $r_t=\sqrt{\sum_{i=1}^t d_i^2}.$
By Azuma's inequality, we have
$$\PP\left(G_t\mathds{1}_{\cH_{t-1}\cap\cA_{t-1}}-G_0\geq  O(1)r_t\left(\log\frac{1}{\eta^2\delta}\right)^{\frac{1}{2}}\right)\leq O(\eta^2\delta).$$
Then when  with probability at least $1-O(\eta^2\delta),$ we have 
\begin{align*}
(\norm{x_{t+1}}_2^2+\norm{y_{t+1}}_2^2)\mathds{1}_{\cH_{t}\cap\cA_{t}}&\leq  1/\sigma^2+(1-\eta \sigma^2)^t(\norm{x_0}^2_2+\norm{y_0}^2_2-1/\sigma^2)\\
&\hspace{+0.3in}+O(1)(1-\eta\sigma^2)^t r_t(\log\frac{1}{\eta^2\delta})^{\frac{1}{2}}\\
&\leq 1/\sigma^2+ 0+O\left(\sqrt{\eta} \sigma^{-2}\left(\left(\log\frac{d_1+d_2}{\eta}\right)^{\frac{1}{2}}+\left(\log\frac{1}{\delta}\right)^{\frac{1}{2}}\right)\right)\left(\log\frac{1}{\eta^2\delta}\right)^{\frac{1}{2}}\\
&\leq 2/\sigma^2,
\end{align*}
when $\eta=O\left(\left(\log\frac{d_1+d_2}{\delta}\log\frac{1}{\delta}\right)^{-1}\right).$ In order to satisfy $\eta\leq1/C_2$ at the same time, we take $\eta\leq\eta_1=O\left({\sigma^6}\left(\log\frac{d_1+d_2}{\delta}\log\frac{1}{\delta}\right)^{-1}\right)$ to make sure that all inequalities above hold.

The above inequality shows that if $\cH_{t-1}\cap\cA_{t-1}$ holds, then $\cH_{t}\cap\cA_{t}$ holds with probability  at least $1-O(\eta^2\delta).$ Hence with probability at least $1-\delta,$  we have $(\norm{x_{t}}_2^2+\norm{y_{t}}_2^2)\mathds{1}_{\cA_{t-1}}\leq 2/\sigma^2$ for all $t\leq T_1=O(1/\eta^2).$ Recall that $$\PP(\cA_{1/\eta^2})\geq 1-\delta.$$ Thus,  we have  with probability at least $1-2\delta,$ $\norm{x_{t}}_2^2+\norm{y_{t}}_2^2\leq 2/\sigma^2$ for all $t\leq T_1=O(1/\eta^2).$ By properly rescaling  $\delta$, we prove Lemma \ref{lem_bounded}.
\end{proof}
\subsection{Proof of Lemma \ref{lem_b_converge}}
\begin{proof}
We only prove the convergence of $\norm{\beta_{1,t}}_2^2$ here. The proof of the convergence of $\norm{\beta_{2,t}}_2^2$ follows similar lines. For notational simplicity, we denote $\xi_{1,t}^{(-1)}=(\xi_{1,t}^{(2)},..., \xi_{1,t}^{(d_1)})^\top,\ \forall t\geq 0.$ We first bound the conditional expectation of $\norm{\beta_{1,t+1}}_2^2$ given $\cF_t$:
\begin{align}
\EE\left[\norm{\beta_{1,t+1}}_2^2\big|\cF_t\right]&=\norm{\beta_{1,t}}_2^2+\eta^2 \EE_{\xi_{1,t},\xi_{2,t}}\left[\norm{y_t+\xi_{2,t}}_2^4\norm{\beta_{1,t}+\xi_{1,t}^{(-1)}}_2^2\right]-2\eta\left(\norm{y_t}_2^2+\sigma^2\right)\norm{\beta_{1,t}}_2^2\nonumber\\
&\leq (1-2\eta\sigma^2)\norm{\beta_{1,t}}_2^2+\eta^2 C_2, \label{b_martingale}
\end{align}
where $C_2=(\sigma^2+1/\sigma^2)(2/\sigma^4+6/d_1+6/d_2+6\sigma^4).$ The bound on the second moment comes from the boundedness of $\norm{y_t}_2^2,$ which has been shown in Lemma \ref{lem_bounded}.
Define $G_t=(1-2\eta\sigma^2)^{-t}\left(\norm{\beta_{1,t}}_2^2-\frac{\eta C_2}{2\sigma^2}\right),$ and $\cE_t=\left\{\forall \tau\leq t, \norm{\beta_{1,\tau}}_2^2\geq \frac{\eta C_2}{\sigma^2}\right\}.$ By \eqref{b_martingale}, we have
$$\EE\left[G_{t+1}\mathds{1}_{\cE_t}\big|\cF_t\right]\leq  G_t\mathds{1}_{\cE_t}\leq G_t\mathds{1}_{\cE_{t-1}} .$$
Hence, by Markov inequality we have
\begin{align*}
\PP(\cE_t)= \PP\left( \norm{\beta_{1,t}}_2^2\mathds{1}_{\cE_{t-1}}\geq \frac{\eta C_2}{\sigma^2}\right)&\leq\frac{\EE\left[ \norm{\beta_{1,t}}_2^2\mathds{1}_{\cE_{t-1}}\right]}{\frac{\eta C_2}{\sigma^2}}\\
&\leq\frac{(1-2\eta\sigma^2)^{t}(\norm{\beta_{1,0}}_2^2-\frac{\eta C_2}{2\sigma^2})+\frac{\eta C_2}{2\sigma^2}}{\frac{\eta C_2}{\sigma^2}}\\
&\leq (1-2\eta\sigma^2)^{t}\frac{2}{C_2\eta}+\frac{1}{2}
\leq \frac{3}{4},
\end{align*}
when $t\geq\frac{1}{2\eta\sigma^2}\log\frac{8}{C_2\eta}.$ We take $t=\frac{1}{\eta\sigma^2}\log\frac{8}{C_2\eta}$ to make sure the inequality above holds.
Thus with probability at least $\frac{1}{4},$  there exists a $ \tau\leq\frac{1}{\eta\sigma^2}\log\frac{8}{C_2\eta}$,  such that $ \norm{\beta_{1,\tau}}_2^2\leq \frac{\eta C_2}{\sigma^2}.$ Thus, with probability at least $1-\delta,$ we can find a $\tau$ such that $ \norm{\beta_{1,\tau}}_2^2\leq \frac{\eta C_2}{\sigma^2},$ where $$\tau\leq\tau_1=\frac{1}{\log (4/3)\eta\sigma^2}\log\frac{8}{C_2\eta}\log\frac{1}{\delta}=\frac{1}{\log (4/3)\eta\sigma^2}\left(\log\frac{8}{C_2}+\log\frac{1}{\eta}\right)\log\frac{1}{\delta}=O\left(\frac{1}{\eta\sigma^2}\log\frac{1}{\eta}\log\frac{1}{\delta}\right).$$

We next show that with probability at least $1-\delta$, for $\forall t\geq \tau_1,$ we have $ \norm{\beta_{1,t}}_2^2\leq 2\frac{\eta C_2}{\sigma^2}.$ This can be done following the similar lines to the proof of Lemma \ref{lem_bounded}. We first restart the counter of the time  and assume   $ \norm{\beta_{1,0}}_2^2\leq \frac{\eta C_2}{\sigma^2}.$ Denote $\cH_t=\left\{\forall \tau\leq t,\norm{\beta_{1,\tau}}_2^2\leq \frac{2\eta C_2}{\sigma^2}\right\}.$
Then by \eqref{b_martingale}, we have
$$\EE\left[G_{t+1}\mathds{1}_{\cH_t\cap\cA_t}\big|\cF_t\right]\leq  G_t\mathds{1}_{\cH_t\cap\cA_t}\leq G_t\mathds{1}_{\cH_{t-1}\cap\cA_{t-1}} .$$
The difference of $ G_{t+1}\mathds{1}_{\cH_{t}\cap\cA_{t}}$ and $\EE[G_{t+1}\mathds{1}_{\cH_{t}\cap\cA_{t}}\big|\cF_t]$ can be easily bounded as follows
\begin{align*}
D_{t+1}&=\big| G_{t+1}\mathds{1}_{\cH_{t}\cap\cA_{t}}- \EE[G_{t+1}\mathds{1}_{\cH_{t}\cap\cA_{t}}\big|\cF_t]\big|={C_2}'\eta^2\sigma^{-3}\left(\left(\log\frac{d_1+d_2}{\eta}\right)^{\frac{1}{2}}+\left(\log\frac{1}{\delta}\right)^{\frac{1}{2}}\right),
\end{align*}
where ${C_2}'$ is some constant. Then by applying Azuma's inequality and following the similar lines to the proof of Lemma \ref{lem_bounded}, we show that when 
$$\eta\leq {C_3}'{\sigma^8}\left(\log\frac{d_1+d_2}{\delta}\log\frac{1}{\delta}\right)^{-1},$$ 
with probability at least $1-\delta,$ we have 
\begin{align*}
\norm{\beta_{1,t}}_2^2\leq  2\frac{\eta C_2}{\sigma^2}.\end{align*}
for any $\tau_1\leq t\leq T_1= O(1/\eta^2).$
\end{proof}
\subsection{Proof of Lemma \ref{lem_escape}}
We prove this lemma in three steps. 

\noindent $\bullet$ {\bf Step 1:} The following lemma shows that after polynomial time, with high probability, the algorithm can move out of the $O(\eta)$ neighborhood of the saddle point. 
\begin{lemma}[Escaping from the Unique Saddle Point]
Suppose $\norm{\beta_{1,t}}_2^2\leq  2\frac{\eta C_2}{\sigma^2}$ and $\norm{\beta_{2,t}}_2^2\leq  2\frac{\eta C_2}{\sigma^2}$ hold for all $t>0.$ For  $\forall\delta\in(0,1),$ we take $$\eta=O\left(\sigma^{12}\left(\log\frac{d_1+d_2}{\delta}\log\frac{1}{\delta}\right)^{-1}\right).$$
Then with probability at least $1-\delta,$ there exists a $\tau\leq{\tau}_2,$ such that $$x_{\tau}^\top M y_{\tau}\geq  9\frac{\eta C_2}{\sigma^4},$$ where ${\tau}_2=\frac{5}{\eta \sigma^2}\log \frac{4 \sigma^3}{\eta C_2}\log\frac{1}{\delta}.$
\end{lemma}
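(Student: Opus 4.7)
Under the lemma's hypothesis $\norm{\beta_{1,t}}_2^2,\norm{\beta_{2,t}}_2^2\le 2\eta C_2/\sigma^2$ for every $t$, the substantive dynamics live in the two-dimensional subspace spanned by $u_*,v_*$ and are described by the pair $(\alpha_{1,t},\alpha_{2,t})$. The target is to exhibit some $\tau\le\tau_2$ with $\alpha_{1,\tau}\alpha_{2,\tau}=x_\tau^\top M y_\tau\ge 9\eta C_2/\sigma^4$. The plan is to linearize the drift of $(\alpha_{1,t},\alpha_{2,t})$ near the saddle, diagonalize it, run a noise-driven escape argument on the unstable eigendirection, and translate the resulting lower bound back to a lower bound on the product.

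Using \eqref{*4}--\eqref{*5} together with $\EE[\norm{\xi_{i,t}}_2^2]=\sigma^2$, the one-step conditional mean is
\[
\EE[\alpha_{1,t+1}\mid\cF_t]=\alpha_{1,t}\bigl(1-\eta(\sigma^2+\alpha_{2,t}^2+\norm{\beta_{2,t}}_2^2)\bigr)+\eta\alpha_{2,t},
\]
and the analogue for $\alpha_{2,t+1}$. In the rotated coordinates $s_t=\alpha_{1,t}+\alpha_{2,t}$, $d_t=\alpha_{1,t}-\alpha_{2,t}$, dropping the cubic and $\beta$-terms, the drift decouples as
\[
\EE[s_{t+1}\mid\cF_t]\approx(1+\eta(1-\sigma^2))\,s_t,\qquad \EE[d_{t+1}\mid\cF_t]\approx(1-\eta(1+\sigma^2))\,d_t,
\]
so $s_t$ is the amplifying escape coordinate and $d_t$ is contractive (both uses $\sigma^2<1$). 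The stochastic innovations in $s_{t+1}$ and $d_{t+1}$ are dominated by $\eta(u_*^\top\xi_{1,t}\pm v_*^\top\xi_{2,t})$, a Gaussian with per-step variance of order $\eta^2\sigma^2/(d_1+d_2)$; the remaining nonlinear noise contributions from $g_x,g_y$ are subdominant on the event $\norm{x_t}_2^2+\norm{y_t}_2^2\le 2/\sigma^2$ provided by Lemma \ref{lem_bounded}.

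Next, I would execute a standard escape-from-saddle martingale argument on the scalar $s_t$: iterating the second-moment recursion yields
\[
\EE[s_t^2]\ge c\,\frac{\eta\sigma^2}{d_1+d_2}\bigl(1+\eta(1-\sigma^2)\bigr)^{2t},
\]
which exceeds $40\eta C_2/\sigma^4$ after $O(\eta^{-1}\log((d_1+d_2)/\sigma^6))$ iterations, and a Paley--Zygmund-type anti-concentration then upgrades this to $|s_t|^2\ge 40\eta C_2/\sigma^4$ with at least a constant probability. Concatenating $O(\log(1/\delta))$ such blocks and using the Markov property to restart after each failure boosts the success probability to $1-\delta$ within total budget $\tau_2$. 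Meanwhile, the contraction of $d_t$ together with its steady-state noise variance $O(\eta\sigma^2/(d_1+d_2))$ gives $d_t^2\le 4\eta C_2/\sigma^4$ throughout the window, so that $4\alpha_{1,t}\alpha_{2,t}=s_t^2-d_t^2\ge 36\eta C_2/\sigma^4$, which is exactly the claim.

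The principal obstacle is controlling the state-dependent multiplicative noise components of $g_x,g_y$ (the $\alpha_{1,t}\norm{\xi_{2,t}}_2^2$ and $u_*^\top\xi_{1,t}\,y_t^\top\xi_{2,t}$ cross terms, together with the $\norm{\xi}_2^2$ concentration fluctuations), which act as state-dependent perturbations of the linearized drift and could spoil the clean diagonalization if either the iterates or the injected noise grew too large during the escape window. Uniformly absorbing these into the $\eta$-scale linear drift is exactly what drives the stringent step-size choice $\eta=O(\sigma^{12}/(\log((d_1+d_2)/\delta)\log(1/\delta)))$ in the lemma. A secondary subtlety is the uniform-in-$t$ nature of the desired escape event: the probability amplification across blocks must respect the high-probability boundedness event from Lemma \ref{lem_bounded}, which is handled by intersecting every block estimate with that event and paying a union-bound price of $O(\delta)$ at the end.
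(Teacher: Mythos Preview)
Your proposal identifies the correct unstable direction $s_t=\alpha_{1,t}+\alpha_{2,t}$ and the right high-level structure (geometric amplification plus block repetition to reach probability $1-\delta$), which is also what the paper uses. However, the route you take to turn growth of $s_t$ into a lower bound on the product $\alpha_{1,t}\alpha_{2,t}$ differs from the paper's and, as written, has a genuine gap.

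The paper does \emph{not} try to lower bound the product directly via $s_t^2-d_t^2$. Instead it argues by contradiction on the event $\cH_t=\{\alpha_{1,\tau}\alpha_{2,\tau}\le 9\eta C_2/\sigma^4\ \forall\tau\le t\}$: first it shows that under $\cH_t$ the full norm $\norm{x_t}_2^2+\norm{y_t}_2^2$ contracts (supermartingale plus Markov, then Azuma) not merely to the coarse $2/\sigma^2$ bound you invoke, but all the way down to $O(\eta\sigma^{-6})$. This collapses \emph{both} $\alpha$-coordinates (hence both $s_t$ and $d_t$) to a tiny neighborhood of the saddle. Only then does it run the $|s_t|$-growth argument, showing $\EE\bigl[|s_{t+1}|-C\,\big|\,\cF_t\bigr]\ge(1+\eta/2)(|s_t|-C)$ from a noise-kicked starting value and deriving a contradiction with the norm bound.

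Your plan skips this norm contraction step, and both of your concluding ingredients tacitly rely on it. First, the claim ``$d_t^2\le 4\eta C_2/\sigma^4$ throughout the window'' is not true from an arbitrary start: with $\alpha_{1,0}^2+\alpha_{2,0}^2$ as large as $2/\sigma^2$, the contractive coordinate $d_t$ needs its own $O(\eta^{-1}\log(1/\eta))$ iterations to reach that scale, and nothing rules out $s_t^2$ crossing your threshold while $d_t^2$ is still large (so $s_t^2-d_t^2$ need not be positive at the escape time). Second, Paley--Zygmund requires $\EE[s_t^4]\le C(\EE[s_t^2])^2$; on the coarse event $\norm{x_t}_2^2+\norm{y_t}_2^2\le 2/\sigma^2$ you only get $\EE[s_t^4]\le (4/\sigma^2)\EE[s_t^2]$, which yields a success probability of order $\sigma^2\EE[s_t^2]$ --- at the target scale $\EE[s_t^2]\asymp \eta C_2/\sigma^4$ this is $O(\eta)$, not a constant. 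The Gaussian-like fourth-moment comparison you would need again holds only once the iterates are trapped near the saddle, which is exactly what the paper's norm-contraction step supplies. In short, the missing idea is that under ``not yet escaped'' the iterates must first be shown to shrink to scale $O(\eta/\sigma^6)$; inserting that step makes the rest of your outline essentially converge to the paper's argument.
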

\begin{proof}
Let's define the event  $\cH_t=\{x_{\tau}^\top M y_{\tau}\leq9\frac{\eta C_2}{\sigma^4}, ~\forall \tau\leq t \}.$ Following the proof of Lemma \ref{lem_bounded}, we can refine the bound on the conditional expectation of $\norm{x_{t+1}}^2_2+\norm{y_{t+1}}^2_2$ given $\cH_t.$ Specifically, we have 
\begin{align*}
\EE\left[(\norm{x_{t+1}}_2^2+\norm{y_{t+1}}_2^2)\mathds{1}_{\cH_t}\big|\cF_t\right]=&\left\{(1-2\eta\sigma^2)\left(\norm{x_{t}}_2^2+\norm{y_{t}}_2^2\right)-4\eta\left(\norm{x_{t}}_2^2\norm{y_{t}}_2^2-x_t^\top M y_t\right)\right\}\mathds{1}_{\cH_t}\nonumber\\
&+\eta^2 \EE_{\xi_{1,t},\xi_{2,t}}\left[\norm{\nabla_{x}\cF(x_{t}+\xi_{1,t},y_{t}+\xi_{2,t})}_2^2+\norm{\nabla_{y}\cF(x_{t}+\xi_{1,t},y_{t}+\xi_{2,t})}_2^2\right]\mathds{1}_{\cH_t}\nonumber\\
\leq&\left\{(1-2\eta\sigma^2)\left(\norm{x_{t}}_2^2+\norm{y_{t}}_2^2\right)+34C_2\eta^2 \sigma^{-4})\right\}\mathds{1}_{\cH_t}.
\end{align*} 
Denote $G_t=(1-2\eta\sigma^2)^{-t}\left(\norm{x_{t}}_2^2+\norm{y_{t}}_2^2-17C_2\eta \sigma^{-6}\right).$ Then we have
$$\EE[G_{t+1}\mathds{1}_{\cH_t}]\leq G_{t}\mathds{1}_{\cH_t}\leq G_{t}\mathds{1}_{\cH_{t-1}}.$$
Thus, by Markov inequality, we have
\begin{align*}
\PP( (\norm{x_{t}}_2^2+\norm{y_{t}}_2^2)\mathds{1}_{\cH_{t-1}}\geq 34C_2\eta \sigma^{-6} )&\leq \frac{(1-2\eta\sigma^2)^{t}(\norm{x_{0}}_2^2+\norm{y_{0}}_2^2-17C_2\eta \sigma^{-6})+17C_2\eta \sigma^{-6}}{34C_2\eta \sigma^{-6}}\\
&\leq \frac{3}{4},
\end{align*}
when $t\geq \frac{1}{2\eta \sigma^2}\log \frac{4 \sigma^3}{\eta C_2}.$ Thus with probability at least $1-\delta,$ there exists a $\tau\leq{\tau}_2=\frac{5}{2\eta \sigma^2}\log \frac{4 \sigma^3}{\eta C_2}\log\frac{1}{\delta}=O(\frac{1}{\eta \sigma^2}\log \frac{1}{\eta }\log\frac{1}{\delta}),$ such that $$(\norm{x_{\tau}}_2^2+\norm{y_{{\tau}}}_2^2)\mathds{1}_{\cH_{\tau-1}}\leq 34C_2\eta \sigma^{-6}.$$ Following the exactly same proof of Lemma \ref{lem_bounded}, we can show that with probability at least $1-\delta,$ for all $\tau_2\leq t\leq T_1=O(\frac{1}{\eta^2}),$ we have \begin{align*}
(\norm{x_{t}}_2^2+\norm{y_{t}}_2^2)\mathds{1}_{\cH_{t-1}}&\leq(17+34)C_2\eta \sigma^{-6}+O\left(\sqrt{\eta} \sigma^{-2}\left(\left(\log\frac{d_1+d_2}{\eta}\right)^{\frac{1}{2}}+\left(\log\frac{1}{\delta}\right)^{\frac{1}{2}}\right)\right)\left(\log\frac{1}{\eta^2\delta}\right)^{\frac{1}{2}}
\\&= {C_4}'\eta\sigma^{-12}+{C_5}'\sqrt{\eta} \sigma^{-2}\left(\left(\log\frac{d_1+d_2}{\eta}\right)^{\frac{1}{2}}+\left(\log\frac{1}{\delta}\right)^{\frac{1}{2}}\right)\left(\log\frac{1}{\eta^2\delta}\right)^{\frac{1}{2}}\\
&\leq {C_6}' \left(\log\frac{d_1+d_2}{\delta}\log\frac{1}{\delta}\right)^{-1}+{C_7}' \sigma^{4},
\end{align*}
where ${C_4}'$, ${C_5}'$, ${C_6}'$ and ${C_7}'$ are some positive constants, and when$$\eta= O\left({\sigma^{12}}\left(\log\frac{d_1+d_2}{\delta}\log\frac{1}{\delta}\right)^{-1}\right).$$
We next show that for large enough $t,$ with high probability, $\cH_{t-1}$ does not hold. We prove by contradiction: if  $\cH_{t-1}$ holds for all $t$, the solution trajectory stays in a small neighborhood around $0.$ If so, we can show that with constant probability, $|\alpha_{1,t}+\alpha_{2,t}|$ will explode to infinity, which is in contradiction with the boundedness. Here follows the detailed proof.

Assuming $\cH_{t-1}$ holds for all $t\leq\tilde{\tau}_2=O(\frac{1}{\eta}\log\frac{1}{\eta\sigma}\log\frac{1}{\delta})$, by the analysis above we have $\norm{x_{t}}_2^2+\norm{y_{t}}_2^2\leq {C_6}' \left(\log\frac{d_1+d_2}{\delta}\log\frac{1}{\delta}\right)^{-1}+{C_7}' \sigma^{4}$ holds for $\forall t\leq\tilde{\tau}_2$. 

Note that with at least some constant probability, \begin{align*}&\Big|\alpha_{1,t+1}+\alpha_{2,t+1}\Big|-\Big|\alpha_{1,t}+\alpha_{2,t}\Big|\\=&\Big|\alpha_{1,t}+\alpha_{2,t}-\eta\left(<\nabla_{x}\cF(x_{t}+\xi_{1,t},y_{t}+\xi_{2,t}),u_*>+<\nabla_{y}\cF(x_{t}+\xi_{1,t},y_{t}+\xi_{2,t}),v_*>\right)\Big|-\Big|\alpha_{1,t}+\alpha_{2,t}\Big| \\ \geq &{C_8}'\eta(\sigma_1+\sigma_2),\end{align*} where ${C_8}'$ is some positive constant. This means we can find a $\tau=O(\log\frac{1}{\delta})\leq\tilde{\tau}_2$, such that $\Big|\alpha_{1,\tau}+\alpha_{2,\tau}\Big|>{C_8}'\eta(\sigma_1+\sigma_2),$ with probability at least $1-\delta.$ We will use this point as our initialization for the following proof.
We next give a lower bound on the conditional expectation of $|\alpha_{1,t+1}+\alpha_{2,t+1}|.$
\begin{align*}
&\EE\left[|\alpha_{1,t+1}+\alpha_{2,t+1}|\big|\cF_t\right]\\
\geq&\Big|\EE\left[(\alpha_{1,t+1}+\alpha_{2,t+1})\big|\cF_t\right]\Big|\\
=&\Big|\left(1+\eta(1-\sigma^2-\alpha_{1,t}\alpha_{2,t})\right)(\alpha_{1,t}+\alpha_{2,t})-\eta\left(\norm{\beta_{1,t}}_2^2\alpha_{2,t}+\norm{\beta_{2,t}}_2^2\alpha_{1,t}\right)\Big|\\
\geq& \left(1+\eta(1-\sigma^2-\alpha_{1,t}\alpha_{2,t})\right)\Big|\alpha_{1,t}+\alpha_{2,t}\Big|- 2{C_9}' \frac{\eta^{2.2} C_2}{\sigma^3}\left(\left(\log\frac{d_1+d_2}{\eta}\right)^{\frac{1}{2}}+\left(\log\frac{1}{\delta}\right)^{\frac{1}{2}}\right)\\
\geq&\left(1+\frac{1}{2}\eta\right)\Big|\alpha_{1,t}+\alpha_{2,t}\Big|- 2{C_9}'\frac{\eta^{2.2} C_2}{\sigma^3}\left(\left(\log\frac{d_1+d_2}{\eta}\right)^{\frac{1}{2}}+\left(\log\frac{1}{\delta}\right)^{\frac{1}{2}}\right),
\end{align*}
where ${C_9}'$ is some positive constant.
This is equivalent to the following inequality:
\begin{align*}
&\EE\left[|\alpha_{1,t+1}+\alpha_{2,t+1}|-4{C_9}'\frac{\eta^{1.2} C_2}{\sigma^3}\left(\left(\log\frac{d_1+d_2}{\eta}\right)^{\frac{1}{2}}+\left(\log\frac{1}{\delta}\right)^{\frac{1}{2}}\right)\Bigg|\cF_t\right]\\\geq &\left(1+\frac{1}{2}\eta\right)\left(\Big|\alpha_{1,t}+\alpha_{2,t}\Big|- 4{C_9}'\frac{\eta^{1.2} C_2}{\sigma^3}\left(\left(\log\frac{d_1+d_2}{\eta}\right)^{\frac{1}{2}}+\left(\log\frac{1}{\delta}\right)^{\frac{1}{2}}\right)\right)\\
\geq &\left(1+\frac{1}{2}\eta\right)^{t+1}\left(\Big|\alpha_{1,0}+\alpha_{2,0}\Big|- 4{C_9}'\frac{\eta^{1.2} C_2}{\sigma^3}\left(\left(\log\frac{d_1+d_2}{\eta}\right)^{\frac{1}{2}}+\left(\log\frac{1}{\delta}\right)^{\frac{1}{2}}\right)\right)\\
\geq &\left(1+\frac{1}{2}\eta\right)^{t+1}\left({C_8}'\eta(d_1+d_2)- 4{C_9}'\frac{\eta^{1.2} C_2}{\sigma^3}\left(\left(\log\frac{\sigma_1+\sigma_2}{\eta}\right)^{\frac{1}{2}}+\left(\log\frac{1}{\delta}\right)^{\frac{1}{2}}\right)\right)\\
\geq &\ {C_{10}}' \eta(\sigma_1+\sigma_2)\exp{t\eta/2}\\
\geq & \ {C_{10}}' \eta(\sigma/\sqrt{d_1}+\sigma/\sqrt{d_2})\frac{1}{\eta\sigma} \\
= &\  {C_{10}}'(1/\sqrt{d_1}+1/\sqrt{d_2})  ,
\end{align*}
where ${C_{10}}' $ is some positive constant and last inequality holds when $t\geq \frac{2}{\eta}\log\frac{1}{\eta\sigma}.$ Note that here we still take $\eta=O\left(\sigma^{12}\left(\log\frac{d_1+d_2}{\delta}\log\frac{1}{\delta}\right)^{-1}\right)$, which makes sure that $\frac{\eta^{1.2} C_2}{\sigma^3}\left(\left(\log\frac{d_1+d_2}{\eta}\right)^{\frac{1}{2}}+\left(\log\frac{1}{\delta}\right)^{\frac{1}{2}}\right)$ is small enough to make the fourth inequality hold.

For fixed $d_1$ and $d_2$, if we let $\delta$ and $\sigma$ go to zero (which guarantees that $\eta$ goes to zero), we will have the conditional expectation of $|\alpha_{1,t+1}+\alpha_{2,t+1}|$ stays in a neighborhood around a positive constant. However, by our assumption, $\norm{x_{t+1}}_2^2+\norm{y_{t+1}}_2^2$ stays in a very small neighborhood around zero, which makes $\alpha_{1,t+1}^2+\alpha_{2,t+1}^2\leq\norm{x_{t+1}}_2^2+\norm{y_{t+1}}_2^2$ also very small. This implies that $|\alpha_{1,t+1}+\alpha_{2,t+1}|$ can be arbitrarily small as long as we make $\delta$ and $\sigma$ small enough, which is the desired contradiction.


Thus, we know that, with probability at least $1-\delta,$ there exists a t satisfying ${\tau}_2\leq t\leq {\tau}_2+\tilde{\tau}_2\leq 2{\tau}_2,$ such that
$\cH_{t}$ does not hold. Re-scale ${\tau}_2$ and we prove the result.
\end{proof}
\noindent $\bullet$ {\bf Step 2:} The following lemma shows that after step 1, with high probability, the algorithm will continue move away from the saddle point and escape from the saddle point at some time.
\begin{lemma} \label{lem_conv_region}
Suppose $x_0^\top M y_0\geq  9\frac{\eta C_2}{\sigma^4}.$ We take $\eta$ as in Lemma \ref{lem_escape}. Then there almost surely exists a $\tau\leq \tau_2'=\frac{2}{\eta}\log\frac{2\sigma^3}{\eta C_2},$ such that $$x_{\tau}^\top M y_{\tau}\geq\frac{1}{2}+\sigma^2.$$
\end{lemma}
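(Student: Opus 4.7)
The plan is to establish multiplicative growth $\EE[s_{t+1}\mid \cF_t]\geq (1+\eta/2)s_t-O(\eta^2)$ for the scalar $s_t:=x_t^\top M y_t=\alpha_{1,t}\alpha_{2,t}$, valid as long as $s_t\leq 1/2+\sigma^2$. Starting from $s_0\geq 9\eta C_2/\sigma^4$, this exponential drift forces $s_t$ to exceed $1/2+\sigma^2$ within $\tau_2' = O(\eta^{-1}\log(\sigma^3/(\eta C_2)))$ iterations.

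The drift computation proceeds via \eqref{*1} together with \eqref{*4}--\eqref{*5}. Because $\EE[\|\xi_k\|_2^2]=\sigma^2$, the only nonzero conditional expectations inside $g_x,g_y$ come from the terms $\alpha_{1,t}\|\xi_{2,t}\|_2^2$ and $\alpha_{2,t}\|\xi_{1,t}\|_2^2$, yielding $\EE[g_x\mid \cF_t]=\alpha_{1,t}\sigma^2$ and $\EE[g_y\mid \cF_t]=\alpha_{2,t}\sigma^2$. Substituting into $A_t$ and applying $\alpha_{1,t}^2+\alpha_{2,t}^2\geq 2\alpha_{1,t}\alpha_{2,t}$ gives
\[
-\EE[A_t\mid \cF_t]\geq \alpha_{1,t}\alpha_{2,t}\bigl[\,2-(\alpha_{1,t}^2+\alpha_{2,t}^2)-\|\beta_{1,t}\|_2^2-\|\beta_{2,t}\|_2^2-2\sigma^2\,\bigr].
\]

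Converting this to $-\EE[A_t\mid \cF_t]\geq s_t/2$ requires $\alpha_{1,t}^2+\alpha_{2,t}^2\leq 2-O(\sigma^2)$. Writing $\alpha_{1,t}^2+\alpha_{2,t}^2=2s_t+\Delta_t^2$ with $\Delta_t:=\alpha_{1,t}-\alpha_{2,t}$, it suffices to show $\Delta_t$ stays small. A direct computation using \eqref{*4}--\eqref{*5} yields
\[
\EE[\Delta_{t+1}\mid \cF_t]=\bigl(1-\eta(1-s_t+\sigma^2)\bigr)\Delta_t+O(\eta\|\beta\|_2^2),
\]
which is contractive at rate $(1-\eta/2)$ whenever $s_t\leq 1/2+\sigma^2$. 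Combined with the initial bound $\Delta_0^2\leq \|x_0\|_2^2+\|y_0\|_2^2=O(\eta/\sigma^{12})$ inherited from the near-saddle exit in the preceding step, and an Azuma-type control on the Gaussian perturbation of this linear recursion (steady-state variance $O(\eta\sigma^2)$), one obtains $\Delta_t^2=o(1)$ with high probability throughout. Substituting back, and using the hypothesis $\|\beta_{k,t}\|_2^2\leq 2\eta C_2/\sigma^2$, yields $-\EE[A_t\mid \cF_t]\geq s_t/2$ and hence $\EE[s_{t+1}\mid \cF_t]\geq (1+\eta/2)s_t-O(\eta^2)$ whenever $s_t\leq 1/2+\sigma^2$. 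Letting $\tau:=\inf\{t:s_t\geq 1/2+\sigma^2\}$ and iterating the drift gives $\EE[s_{\tau_2'\wedge \tau}]\geq (1+\eta/2)^{\tau_2'}s_0$, which under the stated $\tau_2'$ exceeds $1/2+\sigma^2$ by a large multiplicative factor. Since the one-step Gaussian increments contribute only $O(\eta\sigma\sqrt{\log(1/\delta)})$ tail overshoot, this forces $\PP(\tau\leq \tau_2')\to 1$; combined with the high-probability events controlling $\Delta_t$, $\|\beta\|_2$, and the global boundedness from Lemma \ref{lem_bounded}, one obtains the claimed overwhelming-probability hitting-time bound (which the authors state as ``almost surely'').

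The main obstacle is the self-referential coupling between $s_t$ and $\Delta_t$: multiplicative growth of $s_t$ requires $\Delta_t^2$ to stay small, while contraction of $\Delta_t$ in turn requires $s_t\leq 1+\sigma^2$, which is exactly what we want to sustain. The rigorous argument must proceed by a simultaneous bootstrap over the stopping time $\tau$, verifying at each step that both the growth regime for $s_t$ and the contraction regime for $\Delta_t$ remain in force, and ensuring that the Gaussian noise accumulated over the $O(\eta^{-1}\log(1/\eta))$ iterations of Phase II is dominated by the $\Omega(\eta)$ drift and contraction rates.
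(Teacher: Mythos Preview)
Your approach is workable but substantially more involved than the paper's. You track the product $s_t=\alpha_{1,t}\alpha_{2,t}$ directly, which forces you to control $\alpha_{1,t}^2+\alpha_{2,t}^2$ (equivalently $\Delta_t=\alpha_{1,t}-\alpha_{2,t}$) as a second coupled quantity, and hence to run the two-variable bootstrap you describe at the end. The paper sidesteps this entirely by choosing a different Lyapunov function: the \emph{sum} $S_t=\alpha_{1,t}+\alpha_{2,t}$. A direct computation from \eqref{*4}--\eqref{*5} gives
\[
\EE[S_{t+1}\mid\cF_t]=\bigl(1+\eta(1-\sigma^2-s_t)\bigr)S_t-\eta\bigl(\alpha_{1,t}\|\beta_{2,t}\|_2^2+\alpha_{2,t}\|\beta_{1,t}\|_2^2\bigr),
\]
so the multiplicative factor depends only on $s_t$, not on $\alpha_{1,t}^2+\alpha_{2,t}^2$ or $\Delta_t$. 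Thus, whenever $s_t\le \tfrac12+\sigma^2$, one immediately obtains $\EE[|S_{t+1}|\mid\cF_t]\ge(1+\eta/2)|S_t|-4\eta^2C_2/\sigma^4$ with no side condition on $\Delta_t$. From $s_0\ge 9\eta C_2/\sigma^4$ one has $|S_0|\ge 2\sqrt{s_0}\ge 9\eta C_2/\sigma^4$ (for $\eta$ as in Lemma~\ref{lem_escape}), and iterating the shifted recursion drives $\EE[|S_t|]$ past the global bound $\sqrt{2}/\sigma$ from Lemma~\ref{lem_bounded} by time $\tau_2'$, giving a contradiction and hence the hitting-time bound.

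What your approach buys is a direct handle on the quantity $s_t$ that actually appears in the lemma's statement, at the cost of the coupled bootstrap and an additional Azuma argument for $\Delta_t$; the paper's choice of $S_t$ gives a decoupled one-variable argument and reaches the conclusion via a clean contradiction with boundedness, which is also what delivers the ``almost surely'' formulation rather than the high-probability statement you end up with. The ``self-referential coupling'' you flag as the main obstacle is entirely an artifact of tracking the product rather than the sum.
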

\begin{proof}
Suppose  $x_t^\top M y_t\leq\frac{1}{2}+\sigma^2$ holds for all $t\leq \tau_2'.$
We next give a lower bound on the conditional expectation of $|\alpha_{1,t+1}+\alpha_{2,t+1}|.$
\begin{align*}
\EE\left[|\alpha_{1,t+1}+\alpha_{2,t+1}|\big|\cF_t\right]&\geq\Big|\EE\left[(\alpha_{1,t+1}+\alpha_{2,t+1})\big|\cF_t\right]\Big|\\
&=\Big|\left(1+\eta(1-\sigma^2-\alpha_{1,t}\alpha_{2,t})\right)(\alpha_{1,t}+\alpha_{2,t})-\eta\left(\norm{\beta_{1,t}}_2^2\alpha_{2,t}+\norm{\beta_{2,t}}_2^2\alpha_{1,t}\right)\Big|\\
&\geq\left(1+\frac{1}{2}\eta\right)\Big|\alpha_{1,t}+\alpha_{2,t}\Big|-  4\frac{\eta^2 C_2}{\sigma^4}.\end{align*}
This implies that
\begin{align*}
\EE\left[|\alpha_{1,t+1}+\alpha_{2,t+1}|-8\frac{\eta C_2}{\sigma^4}\big|\cF_t\right]&\geq\left(1+\frac{1}{2}\eta\right)\left(\Big|\alpha_{1,t}+\alpha_{2,t}\Big|-8\frac{\eta C_2}{\sigma^4}\right).\end{align*}
The above inequality further implies a lower bound on the conditional expectation of $|\alpha_{1,t+1}+\alpha_{2,t+1}|-8\frac{\eta C_2}{\sigma^4}.$
\begin{align*}
\EE\left[|\alpha_{1,t+1}+\alpha_{2,t+1}|-8\frac{\eta C_2}{\sigma^4}\right]&\geq\left(1+\frac{1}{2}\eta\right)^{t}\left(\Big|\alpha_{1,0}+\alpha_{2,0}\Big|-8\frac{\eta C_2}{\sigma^4}\right)\\
&\geq\left(1+\frac{1}{2}\eta\right)^{t} \frac{\eta C_2}{\sigma^4}\geq \frac{2}{\sigma},\end{align*}
when $t= \tau_2'=\frac{2}{\eta}\log\frac{2\sigma^3}{\eta C_2}.$ On the other hand, $$\EE\left[|\alpha_{1,t+1}+\alpha_{2,t+1}|-8\frac{\eta C_2}{\sigma^4}\right]\leq  \frac{\sqrt{2}}{\sigma},$$ which leads to a contradiction unless
$$\PP(x_t^\top M y_t\leq\frac{1}{2}+\sigma^2,~\forall t\leq \tau_2')=0.$$ We prove the result.
\end{proof}
\noindent $\bullet$ {\bf Step 3:} We then show that the algorithm will never iterate back towards the saddle point after escaping from it. Then Lemma \ref{lem_escape} is proved.
\begin{lemma}\label{prop1}
Suppose there exists a time step $\tau$ such that for some positive constant $c<\frac{1}{2}$
\begin{align*}
    x_{\tau}^\top My_{\tau}>2c,
\end{align*}
then $\forall \delta \in (0,1)$, with probability at least $1-\delta$, $\forall \tau\leq t\leq T_1\triangleq O(\frac{1}{\eta^2})$,
\begin{align*}
    (x_t,y_t) \in \left\{(x,y)| x^\top My>c\right\}.
\end{align*}
\end{lemma}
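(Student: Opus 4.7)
The plan is to establish the lemma via a stopping-time argument combined with a drift-plus-concentration analysis in the spirit of the proofs of Lemmas~\ref{lem_b_converge} and \ref{lem_conv_region}. Define the first re-entry time
\[
T = \inf\{t \geq \tau : x_t^\top M y_t \leq c\},
\]
so the claim reduces to showing $\PP(T > T_1) \geq 1 - \delta$.

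The core step is a one-step positive-drift estimate inside the region $\{c < x_t^\top M y_t \leq 2c\}$. Starting from the update \eqref{*1} and the explicit gradient decompositions \eqref{*4} and \eqref{*5}, I would take conditional expectation with respect to $(\xi_{1,t}, \xi_{2,t})$ and, after the zero-mean Gaussian terms vanish and the quadratic-noise contributions are absorbed, obtain
\[
\EE[A_t \mid \cF_t] = \alpha_{1,t}\alpha_{2,t}\bigl(\|x_t\|_2^2 + \|y_t\|_2^2 + 2\sigma^2\bigr) - (\alpha_{1,t}^2 + \alpha_{2,t}^2) + O(\sigma^4).
\]
On the high-probability events of Lemma~\ref{lem_bounded} (boundedness of the trajectory) and Lemma~\ref{lem_b_converge} ($\|\beta_{i,t}\|_2^2 = O(\eta/\sigma^2)$), one has $\alpha_{1,t}^2 + \alpha_{2,t}^2 \geq 2\alpha_{1,t}\alpha_{2,t}$ and $\|x_t\|_2^2 + \|y_t\|_2^2$ differs from $\alpha_{1,t}^2 + \alpha_{2,t}^2$ only by $O(\eta/\sigma^2)$. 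Since $c < 1/2$, the displayed quantity is strictly negative throughout $(c, 2c]$ with a slack of size $\Omega(c)$, giving $\EE[\alpha_{1,t+1}\alpha_{2,t+1} - \alpha_{1,t}\alpha_{2,t} \mid \cF_t] \geq O(\eta^2)$, i.e.\ a non-negative drift.

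With the positive-drift bound in hand, I would decompose $\alpha_{1,t+1}\alpha_{2,t+1} - \alpha_{1,t}\alpha_{2,t} = D_t + N_t$, where $D_t \geq 0$ on the good event and $N_t$ is a martingale difference. Using the Gaussian tail event employed in the proof of Lemma~\ref{lem_bounded}, each $|N_t|$ is uniformly bounded by $C \eta \sigma^{-O(1)} \sqrt{\log((d_1+d_2)/(\eta \delta))}$ for $t \leq T_1$. Applying Azuma--Hoeffding to the stopped martingale $\sum_{s=\tau}^{(t\wedge T) - 1} N_s$ gives
\[
\PP\!\left(\inf_{\tau \leq t \leq T_1}\sum_{s=\tau}^{(t\wedge T) - 1} N_s \leq -c\right) \leq 2 \exp\!\left(-\frac{c^2}{O\bigl(T_1\, \eta^2 \sigma^{-O(1)} \log(\cdots)\bigr)}\right),
\]
which is at most $\delta$ once $\eta$ is chosen polynomially small in $c, \sigma, \log(1/\delta), \log(d_1+d_2)$. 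On the intersection of these events, for every $\tau \leq t \leq T_1$, $x_{t \wedge T}^\top M y_{t \wedge T} \geq 2c + 0 - c = c$, which rules out $T \leq t$ and yields $T > T_1$.

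The main obstacle will be maintaining the non-negative drift $D_t \geq 0$ uniformly across $(c, 2c]$: the $\sigma^2$-correction in $\EE[A_t \mid \cF_t]$ and the residual $\|\beta_{i,t}\|_2^2$ leakage both act against the drift, so $\eta$ and $\sigma$ must be small enough to preserve an $\Omega(\eta)$ drift margin while simultaneously satisfying the thresholds of Lemmas~\ref{lem_bounded} and \ref{lem_b_converge}. Balancing these requirements against the Azuma bound over the horizon $T_1 = O(1/\eta^2)$ yields the appropriate step-size condition.
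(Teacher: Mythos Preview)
Your approach has a genuine gap. You establish a non-negative drift $D_t\geq 0$ only on the slab $\{c<\alpha_{1,t}\alpha_{2,t}\leq 2c\}$, and then telescope as if $D_t\geq 0$ held on the entire stopped trajectory $\{t<T\}$. But on $\{t<T\}$ the process $\alpha_{1,t}\alpha_{2,t}$ can be anywhere in $(c,\infty)$, and once $\alpha_{1,t}\alpha_{2,t}>1$ the drift is \emph{negative}: from the paper's decomposition $\EE[A_t\mid\cF_t]=(\alpha_{1,t}\alpha_{2,t}-1)(\alpha_{1,t}^2+\alpha_{2,t}^2)+\alpha_{1,t}\alpha_{2,t}(2\sigma^2+\|\beta_{1,t}\|_2^2+\|\beta_{2,t}\|_2^2)$, so $-\eta\,\EE[A_t\mid\cF_t]<0$ there. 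A trajectory that first drifts up to, say, $\alpha_1\alpha_2\approx 2$ and then relaxes back toward $1$ accumulates a large negative $\sum_s D_s$, and your final inequality $x_{t\wedge T}^\top M y_{t\wedge T}\geq 2c+\sum_s D_s+\sum_s N_s$ no longer gives a lower bound of $c$. In short, $\alpha_1\alpha_2$ by itself is not a suitable Lyapunov coordinate for this region because its drift changes sign at $1$.

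The paper sidesteps this by tracking $(\alpha_{1,t}\alpha_{2,t}-1)^2$ instead. That quantity has a uniform multiplicative contraction $\EE[(\alpha_{1,t+1}\alpha_{2,t+1}-1)^2-\gamma\mid\cF_t]\leq(1-4\eta c)\{(\alpha_{1,t}\alpha_{2,t}-1)^2-\gamma\}$ throughout $\{\alpha_1\alpha_2>c\}$, regardless of whether the process is above or below $1$; moreover the sublevel set $\{(\alpha_1\alpha_2-1)^2<(1-c)^2\}$ is exactly $\{c<\alpha_1\alpha_2<2-c\}$, so keeping the Lyapunov function small directly certifies $\alpha_1\alpha_2>c$. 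The paper then runs Azuma on the rescaled supermartingale $G_t=(1-4\eta c)^{-t}\{(\alpha_{1,t}\alpha_{2,t}-1)^2-\gamma\}$ (with a separate trivial one-step argument for the case $\alpha_{1,t}\alpha_{2,t}\geq 2(1-c)$). If you want to rescue your stopping-time framework, you would need either to switch to this two-sided Lyapunov function or to introduce a second stopping time at an upper threshold and argue separately about excursions above it; the first option is far cleaner.
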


By taking $c=\frac{1}{4}$, we can prove Lemma \ref{lem_escape} from Lemma \ref{prop1}. 
\begin{proof}

We consider two cases: \\
(i) if $\alpha_{1,t}\alpha_{2,t}\geq2(1-c)$, then $\alpha_{1,t+1}\alpha_{2,t+1}>2c$ w.p.1.\\
(ii) if $\alpha_{1,t}\alpha_{2,t}<2(1-c)$, then, $\forall \delta \in (0,1)$ and $\forall t \leq T_1=O(\frac{1}{\eta^2})$, w.p. at least $1-\delta$,
\begin{align*}(\alpha_{1,t},\alpha_{2,t})\in\{(\alpha_1,\alpha_2)|\left(\alpha_1\alpha_2-1\right)^2<(1-c)^2\}.
\end{align*}

Note that $(\alpha_{1,t}\alpha_{2,t}-1)^2<(1-c)^2$ implies that $\alpha_{1,t}\alpha_{2,t}>c$, thus we can prove Lemma \ref{prop1}. Further note that the injected noise is upper bounded with high probability, we assume in the following $T_1=O(\frac{1}{\eta^2})$ steps, 
\begin{align*}
\norm{\xi_{k,t}}_\infty\leq\Bar{\sigma}\triangleq\sigma\left(\sqrt{2\log((d_1+d_2)\eta^{-2})} +\sqrt{\log(1/\delta)}\right).
\end{align*}

{\bf Case (i):} By plugging \eqref{*4} and \eqref{*5} to \eqref{*1}, we get 
\begin{align*}
\alpha_{1,t+1}\alpha_{2,t+1}=\alpha_{1,t}\alpha_{2,t}-\eta \left(\alpha_{2,t} (\alpha_{1,t}(\alpha_{2,t}^2+\norm{\beta_{2,t}}_2^2)-\alpha_{2,t}+g_x)+\alpha_{1,t}(\alpha_{2,t}(\alpha_{1,t}^2+\norm{\beta_{1,t}}_2^2)-\alpha_{1,t}+g_y)\right)
\\+\eta^2(\alpha_{1,t}(\alpha_{2,t}^2+\norm{\beta_{2,t}}_2^2)-\alpha_{2,t}+g_x)(\alpha_{2,t}(\alpha_{1,t}^2+\norm{\beta_{1,t}}_2^2)-\alpha_{1,t}+g_y)\\=2c+(\alpha_{1,t}\alpha_{2,t}-2c)-\eta [\alpha_{2,t} (\alpha_{1,t}(\alpha_{2,t}^2+\norm{\beta_{2,t}}_2^2)-\alpha_{2,t}+g_x)+\alpha_{1,t}(\alpha_{2,t}(\alpha_{1,t}^2+\norm{\beta_{1,t}}_2^2)-\alpha_{1,t}+g_y)
\\+\eta(\alpha_{1,t}(\alpha_{2,t}^2+\norm{\beta_{2,t}}_2^2)-\alpha_{2,t}+g_x)(\alpha_{2,t}(\alpha_{1,t}^2+\norm{\beta_{1,t}}_2^2)-\alpha_{1,t}+g_y)],
\end{align*}

First note that by Lemma \ref{lem_conv_region} we can always find such $0<c<\frac{1}{2}$ in the condition. 
As $\alpha_{1,t}\alpha_{2,t}-2c>2(1-c)-2c=2(1-2c)>0$, we can prove (i) by choosing $\eta$ small enough.

By Lemma \ref{lem_bounded}, $\alpha_{1,t}^2+\alpha_{2,t}^2$ is bounded by $\frac{2}{\sigma^2}$ and we can know $g_x$ and $g_y$ are at most of order $O\left(\frac{1}{\sigma}\left(\sqrt{\log\frac{d_1+d_2}{\eta^2}}+\sqrt{\log\frac{1}{\delta}}\right)\right)$ from the following upper bound:
\begin{align*}
|g_x|\leq\Bar{\sigma}(\alpha_{2,t}^2+2\alpha_{2,t}\alpha_{1,t}+1+\frac{2\eta C_2}{\sigma^2})+\Bar{\sigma}^2(2\alpha_{2,t}+\alpha_{1,t})+\Bar{\sigma}^3,\\|g_y|\leq\Bar{\sigma}(\alpha_{1,t}^2+2\alpha_{1,t}\alpha_{2,t}+1+\frac{2\eta C_2}{\sigma^2})+\Bar{\sigma}^2(2\alpha_{1,t}+\alpha_{2,t})+\Bar{\sigma}^3.
\end{align*}
It is easy to show that for properly selected $$\eta\leq\tilde{\eta}_1=O\left(\left(\frac{1}{\sigma^4}+\frac{1}{\sigma^2}\left(\sqrt{\log(d_1+d_2)}+\sqrt{\log\frac{1}{\delta}}\right)\right)^{-1}\right),$$ the last two terms are greater than zero. Thus, w.p.1, $\alpha_{1,t+1}\alpha_{2,t+1}>2c$.\\
{\bf Case (ii):} Without loss of generality, we place the time origin at t, and thus we have $2c<\alpha_1^0\alpha_2^0<2-2c$.
\begin{align*}
 \EE_\xi[A_t]
 &=\EE_\xi[\alpha_{2,t}<\nabla_{x}\cF(x_{t}+\xi_{1,t},y_{t}+\xi_{2,t}),u_*>+\alpha_{1,t}<\nabla_{y}\cF(x_{t}+\xi_{1,t},y_{t}+\xi_{2,t}),v_*>].
\end{align*}
By plugging \eqref{*4} and \eqref{*5} in the above equation, we have
\begin{align*}
 \EE_\xi[A_t]
 &=\alpha_{2,t}(\alpha_{1,t}(\alpha_{2,t}^2+\norm{\beta_{2,t}}_2^2)-\alpha_{2,t}+\alpha_{1,t}\sigma^2)+\alpha_{1,t}(\alpha_{2,t}(\alpha_{1,t}^2+\norm{\beta_{1,t}}_2^2)-\alpha_{1,t}+\alpha_{2,t}\sigma^2)
 \\&=(\alpha_{2,t}\alpha_{1,t}-1)(\alpha_{2,t}^2+\alpha_{1,t}^2)+\alpha_{2,t}\alpha_{1,t}(2\sigma^2+\norm{\beta_{1,t}}_2^2+\norm{\beta_{2,t}}_2^2).
\end{align*}
Again, by plugging \eqref{*4} and \eqref{*5} to \eqref{*2} and taking expectation conditioning on $\cF_t$, when $\alpha_{1,t}\alpha_{2,t}>c$, we will get
\begin{align*}
\EE[(\alpha_{1,t+1}\alpha_{2,t+1}-1)^2|\cF_t]
\nonumber&=(\alpha_{1,t}\alpha_{2,t}-1)^2-2\eta \EE_\xi[A_t](\alpha_{1,t}\alpha_{2,t}-1)\\&+\eta^2 \EE_\xi[A_t^2]+\eta^4 \EE_\xi[B_t^2]-2\eta^3\EE_\xi[A_tB_t]+2\eta^2\EE_\xi[B_t](\alpha_{1,t}\alpha_{2,t}-1)\\
&=\left(1-2\eta (\alpha_{2,t}^2+\alpha_{1,t}^2)\right)(\alpha_{1,t}\alpha_{2,t}-1)^2\\&~~~~-2\eta \alpha_{2,t}\alpha_{1,t}(\alpha_{1,t}\alpha_{2,t}-1)(2\sigma^2+\norm{\beta_{1,t}}_2^2+\norm{\beta_{2,t}}_2^2)\\&~~~~+\eta^2 \EE_\xi[A_t^2]+\eta^4 \EE_\xi[B_t^2]-2\eta^3\EE_\xi[A_tB_t]+2\eta^2\EE_\xi[B_t](\alpha_{1,t}\alpha_{2,t}-1)\\
&\leq\left(1-4\eta c\right)(\alpha_{1,t}\alpha_{2,t}-1)^2+2\eta \frac{1}{4}(2\sigma^2+\frac{2\eta C_2}{\sigma^2}+\frac{2\eta C_2}{\sigma^2})+\tilde{C}_1\eta^2\\
&\leq\left(1-4\eta c\right)(\alpha_{1,t}\alpha_{2,t}-1)^2+\eta (\sigma^2+\frac{2\eta C_2}{\sigma^2})+\tilde{C}_1\eta^2,
\end{align*}
where $\tilde{C}_1=O\left(\frac{1}{\sigma^4}\left(\log\frac{d_1+d_2}{\delta}\right)\right),$ if $\alpha_{1,t}\alpha_{2,t}$ is of constant order. Note that here we choose $\eta\leq\tilde{\eta}_1$ as mentioned in (i). Denote $\gamma\triangleq\frac{\eta (\sigma^2+\frac{2\eta C_2}{\sigma^2})+\tilde{C}_1\eta^2}{4\eta c}$, the inequality above can be re-expressed as
\begin{align} \label{*6}
\EE[\{(\alpha_{1,t+1}\alpha_{2,t+1}-1)^2-\gamma\}|\cF_t]\leq\left(1-4\eta c\right)\{(\alpha_{1,t}\alpha_{2,t}-1)^2-\gamma\}.
\end{align}

We denote $G_t\triangleq(1-4\eta c)^{-t}\{(\alpha_{1,t}\alpha_{2,t}-1)^2-\gamma\}$ and $\cE_t\triangleq\{\forall \tau\leq t : (\alpha_{1,\tau}\alpha_{2,\tau}-1)^2<(1-c)^2\}.$ Since $\alpha_{1,\tau}\alpha_{2,\tau}>c$ can be inferred from $(\alpha_{1,\tau}\alpha_{2,\tau}-1)^2<(1-c)^2$, we can get
\begin{align*}
\EE[G_{t+1}\mathds{1}_{\cE_t}|\cF_t]\leq G_{t}\mathds{1}_{\cE_t}\leq G_{t}\mathds{1}_{\cE_{t-1}}.
\end{align*}

This means $\{G_{t}\mathds{1}_{\cE_{t-1}}\}$ is a supermartingale. To use Azuma's inequality, we need to bound the following difference
\begin{align*}
d_{t+1}&\triangleq|G_{t+1}\mathds{1}_{\cE_t}-\EE[G_{t+1}\mathds{1}_{\cE_t}|\cF_t]|\\
&=(1-4\eta c)^{-t-1}|(\alpha_{1,t+1}\alpha_{2,t+1}-1)^2-\EE[(\alpha_{1,t+1}\alpha_{2,t+1}-1)^2|\cF_t]|\mathds{1}_{\cE_t}\\
&\leq(1-4\eta c)^{-t-1}\tilde{C}_2,
\end{align*}
where $\tilde{C}_2=O\left(\eta\frac{1}{\sigma^2}\left(\sqrt{\log\frac{d_1+d_2}{\eta^2}}+\sqrt{\log\frac{1}{\delta}}\right)\right)$. Again, here we choose $\eta\leq\tilde{\eta}_1$. We further define $r_t\triangleq\sqrt{\sum_{i=1}^t d_i^2}$, and by Azuma's inequality we have
\begin{align*}
\mathbb{P}\left(G_{t} \mathds{1}_{\cE_{t-1}}-G_{0} \geq O(1) r_{t} \log ^{\frac{1}{2}}\left(\frac{1}{\eta^{2} \delta}\right)\right) \leq \exp \left(-\frac{O(1) r_{t}^{2} \log \left(\frac{1}{\eta^{2} \delta}\right)}{2 \sum_{i=0}^{t} d_{i}^{2}}\right)=O\left(\eta^{2} \delta\right).
\end{align*}
Thus, with probability at least $1-O(\eta^2 \delta)$,
\begin{align*}
((\alpha_{1,t}\alpha_{2,t}-1)^2-\gamma) \mathds{1}_{\cE_{t-1}} &< (1-4\eta c)^t\left((\alpha_{1,0}\alpha_{2,0}-1)^2+O(1) r_{t} \log ^{\frac{1}{2}}\left(\frac{1}{\eta^{2} \delta}\right)\right)\\
&<(\alpha_{1,0}\alpha_{2,0}-1)^2+O\left( (1-4\eta c)^t r_{t} \log ^{\frac{1}{2}}\left(\frac{1}{\eta^{2} \delta}\right)\right)\\
&<(1-c)^2+(1-2c)^2-(1-c)^2\\&~~~~~~~~+O\left( \frac{\sqrt{\eta}}{\sigma^2}\left(\sqrt{\log\frac{d_1+d_2}{\eta^2}}+\sqrt{\log\frac{1}{\delta}}\right)\log ^{\frac{1}{2}}\left(\frac{1}{\eta^{2} \delta}\right)\right),
\end{align*}

When $\cE_{t-1}$ holds, w.p. at least $1-O(\eta^2 \delta)$, we will have
\begin{align*}
(\alpha_{1,t}\alpha_{2,t}-1)^2&<(1-c)^2+(1-2c)^2-(1-c)^2\\&~~~~+O\left( \frac{\sqrt{\eta}}{\sigma^2}\left(\sqrt{\log\frac{d_1+d_2}{\eta^2}}+\sqrt{\log\frac{1}{\delta}}\right)\log ^{\frac{1}{2}}\left(\frac{1}{\eta^{2} \delta}\right)\right)
+\gamma<(1-c)^2,
\end{align*}
as $(1-2c)^2-(1-c)^2$ is some negative constant, by choosing $\eta\leq\tilde{\eta}_2=O\left(\sigma^4(\log\frac{1}{\delta})^{-1}\left(\log\frac{d_1+d_2}{\delta}\right)^{-1}\right)$ and $\sigma$ small enough, we can make sure the sum of last four terms is negative. Now we know that if $\cE_{t-1}$ holds, $\cE_{t}$ holds w.p. at least $1-O(\eta^2 \delta)$. It is easy to show that the Perturbed GD updates satisfy $(\alpha_{1,t}\alpha_{2,t}-1)^2<(1-c)^2$ in the following $O(\frac{1}{\eta^2})$ steps w.p. at least $1-\delta$. 
\end{proof}
\subsection{Proof of Lemma \ref{lem_loss}}
We partition this lemma into two parts: Lemma \ref{lma11} shows that after polynomial time, the algorithm enters $\{ (x,y)\big|(x^\top My-1)^2<4\gamma\},$ where $\gamma$ is a small constant depending on $\sigma$, and Lemma \ref{lma22} shows that the algorithm then stays in $\{ (x,y)\big|(x^\top My-1)^2<6\gamma\}.$  It is easy to prove Lemma \ref{lem_loss} from Lemmas \ref{lma11} and \ref{lma22}.

\begin{lemma} \label{lma11}
Suppose $\forall t\leq T_1\triangleq O(\frac{1}{\eta^2})$,
\begin{align*}
    (x_t,y_t) \in \left\{(x,y)| x^\top My>c\right\},
\end{align*}then $\forall \delta \in (0,1)$, with probability at least $1-\delta$, there exists a time step $\tau\leq \tau_3\triangleq O(\frac{1}{\eta}\log\frac{1}{\sigma}\log\frac{1}{\delta})$ such that
\begin{align*}
    (x_{\tau}^\top My_{\tau}-1)^2<4\gamma,
\end{align*}
where {$\gamma=O(\sigma^2)$}.
\end{lemma}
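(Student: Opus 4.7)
The plan is to reuse the one-step contraction inequality already derived in the proof of Lemma \ref{prop1}, which gave the key recursion
\[
\EE\bigl[(\alpha_{1,t+1}\alpha_{2,t+1}-1)^2 - \gamma \,\big|\, \cF_t\bigr] \leq (1-4\eta c)\bigl\{(\alpha_{1,t}\alpha_{2,t}-1)^2 - \gamma\bigr\},
\]
valid on the event $\{x_t^\top M y_t > c\}$, with $\gamma$ of the form $\tfrac{\eta(\sigma^2 + O(\eta/\sigma^2)) + \tilde C_1 \eta^2}{4\eta c}$. For $\eta$ taken as in Lemma \ref{lem_escape}, one readily checks $\gamma = O(\sigma^2)$, which will match the statement.

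First, I would introduce the stopping-time event $\cE_t = \{(\alpha_{1,\tau}\alpha_{2,\tau}-1)^2 > 4\gamma \text{ for all } \tau \leq t\}$ and define $H_t = (1-4\eta c)^{-t}\bigl\{(\alpha_{1,t}\alpha_{2,t}-1)^2 - \gamma\bigr\}$. Since Lemma \ref{prop1}'s hypothesis $x_t^\top M y_t > c$ holds throughout by assumption, the recursion above yields
\[
\EE\bigl[H_{t+1}\mathds{1}_{\cE_t}\,\big|\,\cF_t\bigr] \leq H_t\mathds{1}_{\cE_t} \leq H_t\mathds{1}_{\cE_{t-1}},
\]
so $H_t\mathds{1}_{\cE_{t-1}}$ is a supermartingale. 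Since the trajectory is bounded (Lemma \ref{lem_bounded}), $(\alpha_{1,0}\alpha_{2,0}-1)^2 \leq C$ for some absolute constant $C$ depending only on $\sigma^{-2}$.

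Next, I would apply Markov's inequality to obtain
\[
\PP\bigl((\alpha_{1,t}\alpha_{2,t}-1)^2 \mathds{1}_{\cE_{t-1}} \geq 4\gamma\bigr) \leq \frac{(1-4\eta c)^t(C - \gamma) + \gamma}{4\gamma} \leq \frac{(1-4\eta c)^t C}{4\gamma} + \frac{1}{4}.
\]
Choosing $t \geq \frac{1}{4\eta c}\log\frac{C}{\gamma} = O(\eta^{-1}\log(1/\sigma))$ drives the first term below $1/4$, so the total probability that $\cE_t$ still holds is at most $1/2$. In other words, with constant probability $\geq 1/2$ there exists $\tau$ in this window with $(\alpha_{1,\tau}\alpha_{2,\tau}-1)^2 < 4\gamma$, which is exactly $(x_\tau^\top M y_\tau - 1)^2 < 4\gamma$.

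Finally, I would boost this constant-probability success to high probability by restarting the argument: partition $[0,\tau_3]$ into $O(\log(1/\delta))$ consecutive blocks, each of length $O(\eta^{-1}\log(1/\sigma))$. On each block we apply the bound above, using the Markov property to treat the beginning of the block as a fresh initialization (which is still bounded by Lemma \ref{lem_bounded}). The probability that \emph{every} block fails is at most $2^{-O(\log(1/\delta))} \leq \delta$, so with probability at least $1-\delta$ at least one block produces the desired $\tau \leq \tau_3 = O(\eta^{-1}\log(1/\sigma)\log(1/\delta))$. The main technical obstacle I anticipate is carefully verifying that $\gamma = O(\sigma^2)$ under the chosen schedule for $\eta$ (so that the target set $\{(x^\top My-1)^2 < 4\gamma\}$ is $O(\sigma^2)$-small as claimed) and ensuring that the contraction constant $(1-4\eta c)$ derived in Lemma \ref{prop1} remains valid uniformly on $\{x_t^\top My_t > c\}$ without additional slack terms spoiling the supermartingale structure.
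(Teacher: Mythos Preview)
Your proposal is correct and follows essentially the same approach as the paper: both arguments reuse the one-step contraction \eqref{*6} from the proof of Lemma~\ref{prop1}, show that within a window of length $O(\eta^{-1}\log(1/\sigma))$ the event $\{(\alpha_{1,t}\alpha_{2,t}-1)^2<4\gamma\}$ occurs with constant probability (via Markov applied to the contracted expectation), and then boost to probability $1-\delta$ by iterating over $O(\log(1/\delta))$ consecutive blocks. The only cosmetic difference is that you package the contraction as an explicit supermartingale $H_t\mathds{1}_{\cE_{t-1}}$, whereas the paper works directly with the unconditioned expectation (which is legitimate here since the hypothesis $x_t^\top M y_t>c$ is assumed for all $t$); the resulting constants and time bounds agree.
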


\begin{lemma} \label{lma22}
Suppose there exists a time step $\tau\leq \tau_3\triangleq O(\frac{1}{\eta}\log\frac{1}{\sigma}\log\frac{1}{\delta})$ such that
\begin{align*}
    (x_{\tau}^\top My_{\tau}-1)^2<4\gamma,
\end{align*} and $\forall t\leq T_1\triangleq O(\frac{1}{\eta^2})$,
\begin{align*}
    (x_t,y_t) \in \left\{(x,y)| x^\top My>c\right\},
\end{align*}then $\forall \delta \in (0,1)$, with probability at least $1-\delta$,$\forall t\leq T_1\triangleq O(\frac{1}{\eta^2})$,
\begin{align*}
    (x_t,y_t) \in \left\{(x,y)| (x^\top My-1)^2<6\gamma\right\}.
\end{align*}
\end{lemma}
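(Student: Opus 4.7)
The plan is to mirror the stopping-time / supermartingale argument that was used in the proof of Lemma \ref{prop1} (Case (ii)), but now applied to the tighter region $\{(x,y) : (x^\top My - 1)^2 < 6\gamma\}$ rather than the coarser region $\{(x,y) : (x^\top My -1)^2 < (1-c)^2\}$. The key input we already have is the one-step drift inequality
\begin{align*}
\EE[(\alpha_{1,t+1}\alpha_{2,t+1}-1)^2 \mid \cF_t] \leq (1-4\eta c)(\alpha_{1,t}\alpha_{2,t}-1)^2 + \eta\bigl(\sigma^2 + \tfrac{2\eta C_2}{\sigma^2}\bigr) + \tilde C_1 \eta^2,
\end{align*}
which was derived in the proof of Lemma \ref{prop1} under the assumption $x_t^\top M y_t > c$ (an assumption we are given here). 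Setting $\gamma = \frac{\eta(\sigma^2 + 2\eta C_2/\sigma^2) + \tilde C_1 \eta^2}{4\eta c} = O(\sigma^2)$, this yields the compact contraction
\begin{align*}
\EE\bigl[(\alpha_{1,t+1}\alpha_{2,t+1}-1)^2 - \gamma \,\big|\, \cF_t\bigr] \leq (1-4\eta c)\bigl\{(\alpha_{1,t}\alpha_{2,t}-1)^2 - \gamma\bigr\}.
\end{align*}

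First I would introduce the stopping event $\cE_t = \{\forall \tau' \in [\tau, t]: (\alpha_{1,\tau'}\alpha_{2,\tau'}-1)^2 < 6\gamma\}$, together with the rescaled process $G_t = (1-4\eta c)^{-(t-\tau)}\{(\alpha_{1,t}\alpha_{2,t}-1)^2 - \gamma\}$. The drift inequality above, together with the fact that $x_t^\top M y_t > c$ is granted throughout by hypothesis, shows that $G_t \mathds{1}_{\cE_{t-1}}$ is a supermartingale on the relevant time window. To apply Azuma, I would bound the one-step martingale differences $d_{t+1} = |G_{t+1}\mathds{1}_{\cE_t} - \EE[G_{t+1}\mathds{1}_{\cE_t}\mid \cF_t]|$; under the good event that all noises $\xi_{1,\tau'},\xi_{2,\tau'}$ satisfy $\|\xi\|_\infty \leq \bar\sigma = O(\sigma\sqrt{\log((d_1+d_2)/(\eta^2\delta))})$ (which holds simultaneously over $t \leq T_1$ with probability $1-\delta$ by a union bound on Gaussian tails), the per-step difference is controlled by $(1-4\eta c)^{-(t-\tau)}\eta\sigma^{-2}\mathrm{polylog}$, exactly as in the proof of Lemma \ref{prop1}.

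Next I would apply Azuma's inequality to the supermartingale $G_t\mathds{1}_{\cE_{t-1}}$ to conclude that, with probability at least $1 - O(\eta^2\delta)$,
\begin{align*}
\bigl\{(\alpha_{1,t}\alpha_{2,t}-1)^2 - \gamma\bigr\}\mathds{1}_{\cE_{t-1}} \leq (1-4\eta c)^{t-\tau}\Bigl((\alpha_{1,\tau}\alpha_{2,\tau}-1)^2 - \gamma + O(1)\, r_t\,\sqrt{\log(1/(\eta^2\delta))}\Bigr),
\end{align*}
where $r_t^2 = \sum_{s\leq t} d_s^2$. Since $(\alpha_{1,\tau}\alpha_{2,\tau}-1)^2 < 4\gamma$ by hypothesis and the Azuma fluctuation term is $O(\sqrt{\eta}\,\sigma^{-2}\,\mathrm{polylog})$, for $\eta$ chosen as in the statement (so that the fluctuation is at most, say, $\gamma$), the right-hand side is bounded by $4\gamma$, hence $(\alpha_{1,t}\alpha_{2,t}-1)^2 < 5\gamma < 6\gamma$, i.e.\ $\cE_t$ holds. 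A union bound over the $T_1 = O(1/\eta^2)$ steps then promotes the per-step failure probability $O(\eta^2\delta)$ to a total failure probability of $O(\delta)$, which after rescaling gives the claim.

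The main obstacle I anticipate is the calibration of the absorbing constants: we need to choose $\eta$ small enough (specifically $\eta \leq \eta_4 = C_5 \sigma^6/(\log((d_1+d_2)/\delta)\log(1/\delta))$ as in the lemma statement) so that both the deterministic drift residual $\gamma$ and the stochastic Azuma fluctuation $O(\sqrt{\eta}\sigma^{-2}\,\mathrm{polylog})$ are jointly absorbed inside the gap $6\gamma - 4\gamma = 2\gamma$. Since $\gamma$ itself scales like $\sigma^2$, this forces a nontrivial relation between $\eta$, $\sigma$, $d_1+d_2$, and $\log(1/\delta)$, but these are exactly the polynomial dependencies already encoded in the hypothesis, so the bookkeeping is essentially what was done for Lemma \ref{prop1}, Case (ii), transported to the smaller target region.
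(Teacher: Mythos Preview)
Your overall strategy is the same as the paper's: restart the clock at the hitting time, form the rescaled process $G_t=(1-4\eta c)^{-t}\{(\alpha_{1,t}\alpha_{2,t}-1)^2-\gamma\}$, note it is a supermartingale on the good event, and apply Azuma. That part is fine.

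The gap is in your martingale-difference bound. You reuse the estimate from Lemma~\ref{prop1}, namely $d_{t+1}\le (1-4\eta c)^{-(t+1)}\,O(\eta\sigma^{-2}\,\mathrm{polylog})$. But on the event $\cA_t=\{(\alpha_{1,s}\alpha_{2,s}-1)^2<6\gamma,\ s\le t\}$ the dominant $O(\eta)$ stochastic term in the update of $(\alpha_{1,t}\alpha_{2,t}-1)^2$ is $-2\eta(A_t-\EE A_t)(\alpha_{1,t}\alpha_{2,t}-1)$, and now $|\alpha_{1,t}\alpha_{2,t}-1|<\sqrt{6\gamma}$ rather than $O(1)$. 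The paper exploits this and gets the tighter bound $\tilde C_3=O(\sqrt{\gamma}\,\eta\sigma^{-2}\,\mathrm{polylog})$, so the Azuma fluctuation becomes $O(\sqrt{\gamma\eta}\,\sigma^{-2}\,\mathrm{polylog})$. Requiring this to be $\le\gamma$ yields $\eta=O(\sigma^4\gamma/\mathrm{polylog}^2)=O(\sigma^6/\mathrm{polylog}^2)$, which is exactly the $\eta_4$ in the lemma.

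With your coarser bound the Azuma fluctuation is $O(\sqrt{\eta}\,\sigma^{-2}\,\mathrm{polylog})$; asking this to fit inside the gap $2\gamma=O(\sigma^2)$ forces $\eta=O(\sigma^8/\mathrm{polylog}^2)$, not $O(\sigma^6)$. So your final calibration claim (``specifically $\eta\le\eta_4=C_5\sigma^6/\ldots$'') does not follow from the bound you wrote down. The fix is exactly the $\sqrt{\gamma}$ refinement above; once you insert it, the rest of your argument goes through verbatim and matches the paper's $\tilde\eta_3$.
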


Here follows proof of Lemmas \ref{lma11} and \ref{lma22}.

\begin{proof}
Define $\cH_t\triangleq\{\forall \tau\leq t : (\alpha_{1,\tau}\alpha_{2,\tau}-1)^2\geq4\gamma\}$, for $t>\frac{\log(\frac{(\alpha_{1,0}\alpha_{2,0}-1)^2}{\gamma})}{4\eta c}$, we have
\begin{align*}
    4\gamma\EE[\mathds{1}_{\cH_t}]\leq\EE[(\alpha_{1,t}\alpha_{2,t}-1)^2-\gamma]\leq(1-4\eta c)^t\left((\alpha_{1,0}\alpha_{2,0}-1)^2-\gamma\right)+\gamma<2\gamma,
\end{align*}
where the first inequality comes from the definition of $\cH_t$ and the second one comes from \eqref{*6}. Thus, if we choose $t=O\left(\frac{\log(\frac{1}{\gamma \sigma^2})}{\eta}\right)$ and recursively applying the inequality above $O(\log(\frac{1}{\delta}))$ times,
we will get, for $\tau_3=O\left(\frac{1}{\eta}\log(\frac{1}{\delta})\log(\frac{1}{\gamma \sigma^2})\right)=O\left(\frac{1}{\eta}\log\frac{1}{\sigma}\log\frac{1}{\delta}\right)$,
\begin{align*}
    \PP(\cH_{\tau_3})<(\frac{1}{2})^{\log(\frac{1}{\delta})}=\delta.
\end{align*}
Thus, w.p. at least $1-\delta$, there exists a $\tau\leq \tau_3$ s.t. $(\alpha_{1,\tau}\alpha_{2,\tau}-1)^2<4\gamma$. Here, we finish the proof of Lemma \ref{lma11}.

Without loss of generality, we place the time origin at $\tau$, i.e. $(\alpha_{1,0}\alpha_{2,0}-1)^2<4\gamma$. We next prove Lemma \ref{lma22}.
Denote $\cA\triangleq\{(\alpha_1,\alpha_2)|(\alpha_1\alpha_2-1)^2<6\gamma\}$ and $\cA_t\triangleq\{\forall \tau\leq t:(\alpha_{1,\tau},\alpha_{2,\tau})\in \cA\}$, again note that $\alpha_{1,t}\alpha_{2,t}>c$ can be inferred from $(\alpha_{1,t}\alpha_{2,t}-1)^2<6\gamma<(1-c)^2$. Thus, without any assumption, we can get
\begin{align*}
\EE[G_{t+1}\mathds{1}_{\cA_t}|\cF_t]\leq G_{t}\mathds{1}_{\cA_t}\leq G_{t}\mathds{1}_{\cA_{t-1}}.
\end{align*}

This means $\{G_{t}\mathds{1}_{\cA_{t-1}}\}$ is a supermartingale. To use Azuma's inequality, we need to bound the following difference
\begin{align*}
\tilde{d}_{t+1}&\triangleq|G_{t+1}\mathds{1}_{\cA_t}-\EE[G_{t+1}\mathds{1}_{\cA_t}|\cF_t]|\\
&=(1-4\eta c)^{-t-1}|(\alpha_{1,t+1}\alpha_{2,t+1}-1)^2-\EE[(\alpha_{1,t+1}\alpha_{2,t+1}-1)^2|\cF_t]|\mathds{1}_{\cA_t}\\
&\leq(1-4\eta c)^{-t-1}\tilde{C}_3,
\end{align*}
where {$\tilde{C}_3=O(\sqrt{\gamma}\tilde{C}_2)$}. Further define $\tilde{r}_t\triangleq\sqrt{\sum_{i=1}^t \tilde{d_i}^2}$, by Azuma's inequality we have
\begin{align*}
\mathbb{P}\left(G_{t} \mathds{1}_{\cA_{t-1}}-G_{0} \geq O(1) \tilde{r}_{t} \log ^{\frac{1}{2}}\left(\frac{1}{\eta^{2} \delta}\right)\right) \leq \exp \left(-\frac{O(1) \tilde{r}_{t}^{2} \log \left(\frac{1}{\eta^{2} \delta}\right)}{2 \sum_{i=0}^{t} \tilde{d_{i}}^{2}}\right)=O\left(\eta^{2} \delta\right).
\end{align*}
Thus, with probability at least $1-O(\eta^2 \delta)$,
\begin{align*}
((\alpha_{1,t}\alpha_{2,t}-1)^2-\gamma) \mathds{1}_{\cA_{t-1}} &< (1-4\eta c)^t\left((\alpha_{1,0}\alpha_{2,0}-1)^2+O(1) \tilde{r}_{t} \log ^{\frac{1}{2}}\left(\frac{1}{\eta^{2} \delta}\right)\right)\\
&<(\alpha_{1,0}\alpha_{2,0}-1)^2+O\left( (1-4\eta c)^t \tilde{r}_{t} \log ^{\frac{1}{2}}\left(\frac{1}{\eta^{2} \delta}\right)\right)\\
&<4\gamma+O\left( {\frac{\sqrt{\gamma\eta}}{\sigma^2}\left(\sqrt{\log\frac{d_1+d_2}{\eta^2}}+\sqrt{\log\frac{1}{\delta}}\right)}\log ^{\frac{1}{2}}\left(\frac{1}{\eta^{2} \delta}\right)\right),
\end{align*}
When $\cE_{t-1}$ holds, w.p. at least $1-O(\eta^2 \delta)$
\begin{align*}
(\alpha_{1,t}\alpha_{2,t}-1)^2&<5\gamma+O\left( \frac{{\tilde{C}_3}}{\sqrt{\eta}} \log ^{\frac{1}{2}}\left(\frac{1}{\eta^{2} \delta}\right)\right)
<6\gamma,
\end{align*}
by choosing {$\eta\leq\tilde{\eta}_3=O\left(\sigma^6(\log\frac{1}{\delta})^{-1}\left(\log\frac{d_1+d_2}{\delta}\right)^{-1}\right)$} we can guarantee the last term is smaller than $\gamma$. Now we know that if $\cA_{t-1}$ holds, $\cA_{t}$ holds w.p. at least $1-O(\eta^2 \delta)$. It is easy to show that the Perturbed GD updates satisfy $(\alpha_{1,t}\alpha_{2,t}-1)^2<6\gamma$ in the following $O(\frac{1}{\eta^2})$ steps w.p. at least $1-\delta$. 
\end{proof}
\subsection{Proof of Lemma \ref{lem_convergence}}
\begin{lemma} \label{a1=a2}
Suppose $(x_t^\top My_t-1)^2<6\gamma$ holds for all t, where $\gamma$ is as defined above. For any $\delta \in (0,1)$ and any $\Delta>0$, if we choose $\sigma=O\left((\log\frac{1}{\delta})^{-\frac{1}{3}}\right)$ and take step size $$\eta\leq\tilde{\eta}_4=O\left(\sigma^{10}\Delta\right),$$
then with probability at least $1-\delta$, we have \begin{align*}
    (x_t,y_t) \in \left\{(x,y)| \left((x^\top u_*)^2-(y^\top v_*)^2\right)^2<6\Delta \right\},
\end{align*} for all t's such that $\tau_4\leq t \leq T_1$, where $T_1\triangleq O(\frac{1}{\eta^2})$ and
$\tau_4\triangleq O(\frac{1}{\eta\sigma^2}\log\frac{1}{\eta}\log\frac{1}{\delta})$.
\end{lemma}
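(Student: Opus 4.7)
The strategy is to show that $(\alpha_{1,t}^2 - \alpha_{2,t}^2)^2$ behaves as a contracting supermartingale at rate $1-2\eta\sigma^2$ up to an $O(\eta)$ floor, and then to re-use the Markov-plus-Azuma template already deployed in Lemmas \ref{lem_b_converge} and \ref{lma22}. The driving observation is that once $(x_t^\top My_t - 1)^2 < 6\gamma$ and $\norm{\beta_{1,t}}_2^2, \norm{\beta_{2,t}}_2^2$ are $O(\eta/\sigma^2)$ (from the earlier phases), the gap $\alpha_{1,t}^2-\alpha_{2,t}^2$ is moved almost entirely by the injected noise, whose expected effect shrinks the gap with coefficient $\sigma^2$.

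The key calculation is the conditional drift of $D_t$. Using \eqref{*4}--\eqref{*5} and integrating out the Gaussian noise, the $\alpha_{1,t}\alpha_{2,t}$ cross terms cancel and one obtains
\begin{align*}
\EE_\xi[D_t] = \sigma^2\bigl(\alpha_{1,t}^2 - \alpha_{2,t}^2\bigr) + \alpha_{1,t}^2\norm{\beta_{2,t}}_2^2 - \alpha_{2,t}^2\norm{\beta_{1,t}}_2^2.
\end{align*}
Plugging this into \eqref{*3}, absorbing the $\norm{\beta}_2^2$ cross terms into half of the drift by AM--GM, and controlling $\EE_\xi[D_t^2]$, $\EE_\xi[F_t]$ and $\EE_\xi[F_t^2]$ via the trajectory bound $\norm{x_t}_2^2+\norm{y_t}_2^2\leq 2/\sigma^2$ (Lemma \ref{lem_bounded}), the $\beta$-bound (Lemma \ref{lem_b_converge}), and the uniform noise tail $\norm{\xi_{k,t}}_\infty \leq \bar\sigma$, I expect a recursion of the form
\begin{align*}
\EE\bigl[(\alpha_{1,t+1}^2-\alpha_{2,t+1}^2)^2\,\big|\,\cF_t\bigr] \;\leq\; (1-2\eta\sigma^2)(\alpha_{1,t}^2-\alpha_{2,t}^2)^2 + C\,\eta^2/\sigma^{p},
\end{align*}
for some explicit constant $C$ and exponent $p$ (likely $p\approx 12$).

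Given this recursion the proof splits into the familiar two sub-phases. \emph{Hitting.} Set $\tilde\Delta$ to the fixed point of the recursion and $G_t=(1-2\eta\sigma^2)^{-t}[(\alpha_{1,t}^2-\alpha_{2,t}^2)^2-\tilde\Delta]$, restricted to the good event $\cA_t$ on which the conclusions of all earlier lemmas hold up to time $t$ and the noise is $\bar\sigma$-bounded; then $G_t\mathds{1}_{\cA_{t-1}}$ is a supermartingale. A Markov hitting-time argument, identical in form to that in Lemma \ref{lem_b_converge}, shows that with constant probability $(\alpha_1^2-\alpha_2^2)^2$ enters $[0,2\tilde\Delta]$ within $O(\eta^{-1}\sigma^{-2}\log\eta^{-1})$ steps; $O(\log(1/\delta))$ restarts upgrade this to probability $1-\delta$ over a horizon $\tau_4=O(\eta^{-1}\sigma^{-2}\log\eta^{-1}\log(1/\delta))$. \emph{Stability.} After the hitting time, I would bound the martingale increments $|G_{t+1}\mathds{1}_{\cA_t}-\EE[G_{t+1}\mathds{1}_{\cA_t}|\cF_t]|$ by an $(1-2\eta\sigma^2)^{-(t+1)}O(\eta\sigma^{-q}(\sqrt{\log((d_1+d_2)/\eta)}+\sqrt{\log(1/\delta)}))$-type expression and apply Azuma's inequality as in the stability half of Lemma \ref{lma22}. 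The choices $\eta\leq\tilde\eta_4 = O(\sigma^{10}\Delta)$ and $\sigma\leq O((\log(1/\delta))^{-1/3})$ are precisely what is needed to force both the deterministic residual $C\eta/\sigma^p$ and the Azuma deviation below $\Delta$, so that $(\alpha_{1,t}^2-\alpha_{2,t}^2)^2 < 6\Delta$ persists through time $T_1=O(1/\eta^2)$.

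The main technical obstacle will be bookkeeping the cross terms $A_tD_t$ and $F_t^2$ in \eqref{*3}. After expanding \eqref{*4}--\eqref{*5}, these contain noise monomials up to $\norm{\xi}^4$ multiplied by $\alpha$-factors of size $O(1/\sigma)$, and matching them against the stringent step size $\eta \leq O(\sigma^{10}\Delta)$ requires careful matching of $\sigma$-exponents. This is the same phenomenon that drove the step-size restrictions in Lemmas \ref{lem_b_converge} and \ref{lma22}, and it is the reason $\sigma$ must be taken as small as $(\log(1/\delta))^{-1/3}$ rather than absolutely free.
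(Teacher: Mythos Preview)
Your proposal is correct and matches the paper's own proof almost exactly: the paper also computes $\EE_\xi[D_t]=\sigma^2(\alpha_{1,t}^2-\alpha_{2,t}^2)+\alpha_{1,t}^2\norm{\beta_{2,t}}_2^2-\alpha_{2,t}^2\norm{\beta_{1,t}}_2^2$, obtains the contraction recursion (with factor $1-4\eta\sigma^2$ rather than $1-2\eta\sigma^2$), and then splits into a Markov hitting argument (Lemma \ref{lma3}) and an Azuma stability argument (Lemma \ref{lma4}) just as you describe. One small slip: the higher-order cross term in \eqref{*3} is $D_tF_t$, not $A_tD_t$; and in the stability phase the paper exploits that the martingale increment picks up a $\sqrt{\Delta}$ factor once restricted to $\cD_t$, which is what ultimately forces $\sigma=O((\log(1/\delta))^{-1/3})$ --- this is implicit in your reference to Lemma \ref{lma22} but worth tracking explicitly.
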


Again, we partition this lemma into two parts. It is easy to prove Lemma \ref{a1=a2} from Lemmas \ref{lma3} and \ref{lma4}.

\begin{lemma} \label{lma3}
Suppose $\forall t\leq T_1= O(\frac{1}{\eta^2})$,
\begin{align*}
    (x_t,y_t) \in \left\{(x,y)| (x^\top My-1)^2<6\gamma\right\}.
\end{align*}then $\forall \delta \in (0,1)$, with probability at least $1-\delta$, there exists a time step $\tau\leq \tau_4= O(\frac{1}{\eta\sigma^2}\log\frac{1}{\eta}\log\frac{1}{\delta})$ such that 
\begin{align*}
    \left((x_{\tau}^\top u_*)^2-(y_{\tau}^\top v_*)^2\right)^2<4\Delta,
\end{align*}
where {$\Delta=O(\frac{\eta}{\sigma^{10}})$}.
\end{lemma}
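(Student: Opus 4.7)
The plan is to analyze the one-step update of the balancedness gap $(\alpha_{1,t}^2 - \alpha_{2,t}^2)^2$ via equation (*3) and show that taking conditional expectation extracts a contracting drift of size $-4\eta\sigma^2(\alpha_{1,t}^2 - \alpha_{2,t}^2)^2$, driven entirely by the injected noise. I would then wrap this into the same supermartingale-plus-Markov-plus-restart template used in Lemma \ref{lem_b_converge} and Lemma \ref{lma11} to convert an expectation statement into a high-probability hitting time.

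First, I would compute $\EE[D_t \mid \cF_t]$. Plugging (*4) and (*5) into the definition of $D_t$ and using $\EE[g_x \mid \cF_t] = \alpha_{1,t}\sigma^2$ and $\EE[g_y \mid \cF_t] = \alpha_{2,t}\sigma^2$ (all cross terms vanish because $\xi_{1,t}, \xi_{2,t}$ are independent, mean-zero Gaussians), I get
\[
\EE[D_t \mid \cF_t] \;=\; \sigma^{2}(\alpha_{1,t}^{2}-\alpha_{2,t}^{2}) \;+\; \alpha_{1,t}^{2}\|\beta_{2,t}\|_2^{2} - \alpha_{2,t}^{2}\|\beta_{1,t}\|_2^{2}.
\]
The first term yields the key drift $-4\eta\sigma^2(\alpha_{1,t}^2-\alpha_{2,t}^2)^2$ after multiplying by $-4\eta(\alpha_{1,t}^2-\alpha_{2,t}^2)$ in (*3). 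Under the hypothesis $(x_t^\top M y_t - 1)^2 < 6\gamma$, $\alpha_{1,t}\alpha_{2,t}$ is close to $1$, so $\alpha_{1,t}, \alpha_{2,t} = O(1)$; combined with the Lemma~\ref{lem_b_converge} bound $\|\beta_{k,t}\|_2^{2} \leq 2\eta C_2/\sigma^{2}$ carried forward from Phase I, the second term is $O(\eta/\sigma^{2})$. The remaining contributions in (*3), namely $4\eta^{2}\EE[D_t^{2}]$, $\eta^{4}\EE[F_t^{2}]$, $2\eta^{2}\EE[F_t](\alpha_{1,t}^{2}-\alpha_{2,t}^{2})$ and the $\eta^3$ cross term, are all controlled by polynomial-in-$1/\sigma$ moment bounds on $g_x,g_y$ (which are polynomials in Gaussian variables with bounded coefficients). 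Putting these pieces together gives a one-step contraction of the form
\[
\EE\bigl[(\alpha_{1,t+1}^{2}-\alpha_{2,t+1}^{2})^{2} \mid \cF_t\bigr] \;\leq\; (1-4\eta\sigma^{2})(\alpha_{1,t}^{2}-\alpha_{2,t}^{2})^{2} \;+\; C\eta^{2}/\sigma^{s}
\]
for some absolute constant $C$ and power $s$; dividing the additive error by the contraction rate $4\eta\sigma^2$ identifies the fixed point $\Delta = O(\eta/\sigma^{10})$ that appears in the lemma statement.

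Second, I would set $G_t = (1-4\eta\sigma^2)^{-t}\bigl((\alpha_{1,t}^2-\alpha_{2,t}^2)^2 - \Delta\bigr)$ and the event $\cE_t = \{(\alpha_{1,\tau}^2-\alpha_{2,\tau}^2)^2 \geq 4\Delta,\; \forall \tau \leq t\}$. The one-step contraction shows $G_t \mathds{1}_{\cE_{t-1}}$ is a supermartingale. Applying Markov's inequality at time $t_\star = O(\frac{1}{\eta\sigma^2}\log\frac{1}{\eta\sigma^{\text{const}}})$, together with the initial bound $(\alpha_{1,0}^2-\alpha_{2,0}^2)^2 = O(1/\sigma^4)$ from Lemma~\ref{lem_bounded}, gives $\PP(\cE_{t_\star}) \leq 1/2$. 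Restarting and iterating this argument $O(\log(1/\delta))$ times amplifies the success probability to $1-\delta$, yielding the claimed hitting time $\tau_4 = O(\eta^{-1}\sigma^{-2}\log(1/\eta)\log(1/\delta))$.

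The main obstacle will be carefully matching the order of magnitude of the residual variance with the required fixed-point size $\Delta = O(\eta/\sigma^{10})$: the worst offender is $\EE[D_t^2]$, whose expansion involves terms like $\sigma^{2}(\alpha_{1,t}^2+\alpha_{2,t}^2)^2$ and fourth-moment noise contributions scaling like $\sigma^{4}\|\beta_{k,t}\|^{\text{const}}$, so one must show that every such term is absorbed by the drift $4\eta\sigma^2 \Delta = O(\eta^2/\sigma^8)$ under the step-size restriction $\eta \leq \tilde\eta_4 = O(\sigma^{10}\Delta)$. A secondary but non-trivial bookkeeping issue is that the contracting event must be intersected with the high-probability events from earlier phases (boundedness and $\|\beta_{k,t}\|$ decay); this is handled as in Lemma~\ref{lem_bounded} by conditioning on the intersection $\cH_t \cap \cA_t$ so that the supermartingale increments remain sub-Gaussian and Azuma-type concentration is not actually needed here (Markov on $G_t$ suffices because we only need a constant-probability hit).
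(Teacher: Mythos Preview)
Your proposal is essentially identical to the paper's proof: compute $\EE[D_t\mid\cF_t]$ to isolate the $\sigma^2(\alpha_{1,t}^2-\alpha_{2,t}^2)$ drift, bound the $\beta$-contamination and higher-order $\eta$ terms to obtain the contraction $\EE[(\alpha_{1,t+1}^2-\alpha_{2,t+1}^2)^2\mid\cF_t]\le(1-4\eta\sigma^2)(\alpha_{1,t}^2-\alpha_{2,t}^2)^2+O(\eta^2/\sigma^{\text{const}})$, then run the Markov-plus-restart argument exactly as in Lemma~\ref{lem_b_converge}. One caution: the inference ``$\alpha_{1,t}\alpha_{2,t}$ is close to $1$, so $\alpha_{1,t},\alpha_{2,t}=O(1)$'' is false (individually $\alpha_{k,t}$ may be $O(1/\sigma)$); the paper instead bounds the cross term via $(\alpha_{1,t}^2-\alpha_{2,t}^2)(\alpha_{1,t}^2\|\beta_{2,t}\|^2-\alpha_{2,t}^2\|\beta_{1,t}\|^2)\ge -2\alpha_{1,t}^2\alpha_{2,t}^2\cdot 2\eta C_2/\sigma^2$, which only needs $\alpha_{1,t}^2\alpha_{2,t}^2\le(1+\sqrt{6\gamma})^2$---fix this and your argument goes through unchanged.
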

\begin{proof}
\begin{align*}
    \EE_\xi[D_t]&=\EE_\xi[\alpha_{1,t}<\nabla_{x}\cF(x_{t}+\xi_{1,t},y_{t}+\xi_{2,t}),u_*>-\alpha_{2,t}<\nabla_{y}\cF(x_{t}+\xi_{1,t},y_{t}+\xi_{2,t}),v_*>]
    \\&=\alpha_{1,t}(\alpha_{1,t}(\alpha_{2,t}^2+\norm{\beta_{2,t}}_2^2)-\alpha_{2,t}+\alpha_{1,t}\sigma^2)-\alpha_{2,t}(\alpha_{2,t}(\alpha_{1,t}^2+\norm{\beta_{1,t}}_2^2)-\alpha_{1,t}+\alpha_{2,t}\sigma^2)
    \\&=\sigma^2(\alpha_{1,t}^2-\alpha_{2,t}^2)+(\alpha_{1,t}^2\norm{\beta_{2,t}}_2^2-\alpha_{2,t}^2\norm{\beta_{1,t}}_2^2).
\end{align*}
Thus, we have
\begin{align*}
    \EE_\xi[D_t](\alpha_{1,t}^2-\alpha_{2,t}^2)&=\sigma^2(\alpha_{1,t}^2-\alpha_{2,t}^2)^2+(\alpha_{1,t}^2-\alpha_{2,t}^2)(\alpha_{1,t}^2\norm{\beta_{2,t}}_2^2-\alpha_{2,t}^2\norm{\beta_{1,t}}_2^2)
    \\&\geq\sigma^2(\alpha_{1,t}^2-\alpha_{2,t}^2)^2-2\alpha_{1,t}^2\alpha_{2,t}^2\frac{2\eta C_2}{\sigma^2} \\&>\sigma^2(\alpha_{1,t}^2-\alpha_{2,t}^2)^2-2(1+\sqrt{6\gamma})^2\frac{2\eta C_2}{\sigma^2}
    \\&>\sigma^2(\alpha_{1,t}^2-\alpha_{2,t}^2)^2-3\frac{2\eta C_2}{\sigma^2}.
\end{align*}
The last inequality can be achieved by choosing $\gamma<\frac{(\sqrt{3/2}-1)^2}{6}$. Plugging \eqref{*4} and \eqref{*5} into \eqref{*3}, taking expectation conditioning on previous trajectory $\cF_t$ and plugging the equation above in, we get
\begin{align*}
\EE[(\alpha_{1,t+1}^2-\alpha_{2,t+1}^2)^2|\cF_t]
\nonumber&=(\alpha_{1,t}^2-\alpha_{2,t}^2)^2-4\eta \EE_\xi[D_t](\alpha_{1,t}^2-\alpha_{2,t}^2)\\&+4\eta^2 \EE_\xi[D_t^2]+\eta^4 \EE_\xi[F_t^2]-4\eta^3\EE_\xi[D_t F_t]+2\eta^2\EE_\xi[F_t]\left(\alpha_{1,t}^2-\alpha_{2,t}^2\right)\\
&\leq\left(1-4\eta \sigma^2\right)(\alpha_{1,t+1}^2-\alpha_{2,t+1}^2)^2+24\eta \frac{\eta C_2}{\sigma^2}+\tilde{C}_4\eta^2,
\end{align*}
where ${\tilde{C}_4=O(\frac{1}{\sigma^4}\left(\sqrt{\log\frac{d_1+d_2}{\eta^2}}+\sqrt{\log\frac{1}{\delta}}\right)^2)}$. Denote $\Delta\triangleq\frac{24\eta \frac{\eta C_2}{\sigma^2}+\tilde{C}_4\eta^2}{4\eta \sigma^2}$. the inequality above can be re-expressed as
\begin{align*}
\EE[\{(\alpha_{1,t+1}^2-\alpha_{2,t+1}^2)^2-\Delta\}|\cF_t]\leq\left(1-4\eta \sigma^2\right)\{(\alpha_{1,t}^2-\alpha_{2,t}^2)^2-\Delta\}.
\end{align*}

Denote $\cB_t\triangleq\{\forall \tau\leq t : ((\alpha_{1,\tau}^2-\alpha_{2,\tau}^2)^2\geq4\Delta\}$, for $t>\frac{\log(\frac{\left(\alpha_{1,0}^2-\alpha_{2,0}^2\right)^2}{\delta})}{4\eta \sigma^2}$, we have
\begin{align*}
    4\Delta\EE[\mathds{1}_{\cB_t}]\leq\EE[(\alpha_{1,t}^2-\alpha_{2,t}^2)^2-\Delta]\leq(1-4\eta c)^t((\alpha_{1,0}^2-\alpha_{2,0}^2)^2-\Delta)+\Delta<2\Delta,
\end{align*}
where the first inequality comes from the definition of $\cB_t$ and the second one comes from calculation above. Thus, if we choose $t=O(\frac{\log(\frac{1}{\Delta \sigma^2})}{\eta \sigma^2})$ and recursively applying the inequality above $\log(\frac{1}{\delta})$ times,
we will get, for $\tau_4=O(\frac{1}{\eta \sigma^2}\log(\frac{1}{\delta})\log(\frac{1}{\Delta \sigma^2}))=O(\frac{1}{\eta \sigma^2}\log(\frac{1}{\delta})\log(\frac{1}{\eta}))$,
\begin{align*}
    \PP(\cB_{\tau_4})<(\frac{1}{2})^{\log(\frac{1}{\delta})}=\delta.
\end{align*}
Thus, w.p. at least $1-\delta$, there exists a $\tau\leq \tau_4$ s.t. $(\alpha_{1,\tau}^2-\alpha_{2,\tau}^2)^2<4\Delta$.  Here, we finish the proof of Lemma \ref{lma3}. 	
\end{proof}

\begin{lemma} \label{lma4}
Suppose there exists a time step $\tau\leq \tau_4= O(\frac{1}{\eta\sigma^2}\log\frac{1}{\Delta\sigma^2}\log\frac{1}{\delta})$ such that 
\begin{align*}
    \left((x_{\tau}^\top u_*)^2-(y_{\tau}^\top v_*)^2\right)^2<4\Delta,
\end{align*}then $\forall \delta \in (0,1)$, with probability at least $1-\delta$, $\forall t\leq T_1\triangleq O(\frac{1}{\eta^2})$,
\begin{align*}
    (x_t,y_t) \in \left\{(x,y)| \left((x^\top u_*)^2-(y^\top v_*)^2\right)^2<6\Delta \right\}.
\end{align*}
\end{lemma}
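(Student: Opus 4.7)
}
The plan is to mimic the supermartingale argument used for Lemma \ref{lma22}, replacing the quantity $(\alpha_{1,t}\alpha_{2,t}-1)^2$ by $(\alpha_{1,t}^2-\alpha_{2,t}^2)^2$ and the exponent $1-4\eta c$ by $1-4\eta\sigma^2$. First, I would shift the time origin to $\tau$, so that the assumption becomes $(\alpha_{1,0}^2-\alpha_{2,0}^2)^2<4\Delta$. The key one-step recursion, already established inside the proof of Lemma \ref{lma3}, is
\begin{align*}
\EE\!\left[(\alpha_{1,t+1}^2-\alpha_{2,t+1}^2)^2-\Delta\,\big|\,\cF_t\right]\le (1-4\eta\sigma^2)\left\{(\alpha_{1,t}^2-\alpha_{2,t}^2)^2-\Delta\right\},
\end{align*}
and, crucially, the derivation of this inequality only uses the ingredients that are already guaranteed under the event we wish to maintain (the bound $(\alpha_{1,t}\alpha_{2,t}-1)^2<6\gamma$ is preserved by Lemma \ref{lma22}, and the bounds on $\|\beta_{1,t}\|_2^2,\|\beta_{2,t}\|_2^2$ are preserved by Lemma \ref{lem_b_converge}).

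Next I would introduce the set $\cA\triangleq\{(\alpha_1,\alpha_2):(\alpha_1^2-\alpha_2^2)^2<6\Delta\}$ and the stopping event $\cA_t\triangleq\{(\alpha_{1,s},\alpha_{2,s})\in\cA~\forall s\le t\}$, together with the rescaled process
\begin{align*}
G_t\triangleq (1-4\eta\sigma^2)^{-t}\left\{(\alpha_{1,t}^2-\alpha_{2,t}^2)^2-\Delta\right\}.
\end{align*}
The one-step recursion immediately yields $\EE[G_{t+1}\mathds{1}_{\cA_t}\mid\cF_t]\le G_t\mathds{1}_{\cA_t}\le G_t\mathds{1}_{\cA_{t-1}}$, so $\{G_t\mathds{1}_{\cA_{t-1}}\}$ is a supermartingale. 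To apply Azuma's inequality I would bound the one-step differences
\begin{align*}
\tilde d_{t+1}\triangleq \left|G_{t+1}\mathds{1}_{\cA_t}-\EE[G_{t+1}\mathds{1}_{\cA_t}\mid\cF_t]\right|\le (1-4\eta\sigma^2)^{-t-1}\,\tilde C,
\end{align*}
where $\tilde C=O\!\left(\eta\sqrt{\Delta}\,\sigma^{-2}\big(\sqrt{\log((d_1+d_2)/\eta^2)}+\sqrt{\log(1/\delta)}\big)\right)$, using the Gaussian tail bound on $\|\xi_{k,t}\|_\infty$ together with the trajectory bounds from Lemma \ref{lem_bounded} and the fact that on $\cA_t$ the factor $|\alpha_{1,t}^2-\alpha_{2,t}^2|$ contributes only $O(\sqrt{\Delta})$.

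Putting $\tilde r_t=\sqrt{\sum_{i\le t}\tilde d_i^{\,2}}$ and invoking Azuma's inequality gives, with probability at least $1-O(\eta^2\delta)$,
\begin{align*}
((\alpha_{1,t}^2-\alpha_{2,t}^2)^2-\Delta)\mathds{1}_{\cA_{t-1}}<(1-4\eta\sigma^2)^t\left((\alpha_{1,0}^2-\alpha_{2,0}^2)^2+O(1)\tilde r_t\log^{1/2}(1/(\eta^2\delta))\right),
\end{align*}
which by our initialization is bounded by $4\Delta+O\!\left(\tilde C\eta^{-1/2}\log^{1/2}(1/(\eta^2\delta))\right)$. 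Choosing $\eta\le\tilde\eta_5=O\!\left(\sigma^{10}\Delta\,(\log((d_1+d_2)/\delta)\log(1/\delta))^{-1}\right)$ forces the remainder to be at most $\Delta$, so that the final bound $(\alpha_{1,t}^2-\alpha_{2,t}^2)^2<6\Delta$ holds. Hence if $\cA_{t-1}$ holds then $\cA_t$ holds with probability $1-O(\eta^2\delta)$, and a union bound over $t\le T_1=O(1/\eta^2)$ closes the argument with probability $1-\delta$.

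The main obstacle I anticipate is the Azuma step: one must verify that the increment bound $\tilde C$ really does scale with $\sqrt{\Delta}$ (not just with a constant) when restricted to $\cA_t$, because only that scaling allows the Azuma fluctuation to be absorbed into the extra $2\Delta$ slack between $4\Delta$ and $6\Delta$ without forcing an impossibly small $\eta$. Establishing this requires careful inspection of the terms $D_t,F_t$ and noting that $\alpha_{1,t}^2-\alpha_{2,t}^2$ appears as an explicit factor in the fluctuation, so on $\cA_t$ each deviation is $O(\sqrt\Delta)$ times the usual Gaussian-tail factors. Once this scaling is in hand, the choice of $\eta$ and the union bound are routine.
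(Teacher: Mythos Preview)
Your proposal is correct and follows essentially the same supermartingale-plus-Azuma argument as the paper's proof (the paper uses the notation $\cD_t$ and $H_t$ where you write $\cA_t$ and $G_t$, and obtains the same $O(\eta\sqrt{\Delta}\,\sigma^{-2})$ scaling for the increment bound). The only cosmetic difference is in the final parameter bookkeeping: the paper substitutes $\Delta=O(\eta/\sigma^{10})$ back into the Azuma condition to reduce it to the constraint $\sigma=O((\log(1/\delta))^{-1/3})$ rather than stating it as a bound on $\eta$ in terms of $\Delta$, but the two formulations are equivalent.
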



\begin{proof}

Without loss of generality, we place the time origin at $\tau$, i.e. $(\alpha_{1,0}^2-\alpha_{2,0}^2)^2<4\Delta$. 
Denote $\cD\triangleq\{(\alpha_1,\alpha_2)|(\alpha_1^2-\alpha_2^2)^2<6\Delta\}$ and $\cD_t\triangleq\{\forall \tau\leq t:(\alpha_{1,\tau},\alpha_{2,\tau})\in \cD\}$. Note that $(\alpha_{1,t}\alpha_{2,t}-1)^2<6\gamma$ still holds with high probability. Defining $H_t\triangleq(1-4\eta\sigma^2)^{-t}\{(\alpha_{1,t}^2-\alpha_{2,t}^2)^2-\Delta\}$, we can get
\begin{align*}
\EE[H_{t+1}\mathds{1}_{\cD_t}|\cF_t]\leq H_{t}\mathds{1}_{\cD_t}\leq H_{t}\mathds{1}_{\cD_{t-1}}.
\end{align*}

This means $\{H_{t}\mathds{1}_{\cD_{t-1}}\}$ is a supermartingale. To use Azuma's inequality, we need to bound the following difference
\begin{align*}
\Bar{d}_{t+1}&\triangleq|H_{t+1}\mathds{1}_{\cD_t}-\EE[H_{t+1}\mathds{1}_{\cD_t}|\cF_t]|\\
&=(1-4\eta \sigma^2)^{-t-1}|(\alpha_{1,t+1}^2-\alpha_{2,t+1}^2)^2-\EE[(\alpha_{1,t+1}^2-\alpha_{2,t+1}^2)^2|\cF_t]|\mathds{1}_{\cD_t}\\
&\leq(1-4\eta \sigma^2)^{-t-1}\tilde{C}_5,
\end{align*}
where ${\tilde{C}_5=O(\eta\frac{\sqrt{\Delta}}{\sigma^2}\left(\sqrt{\log\frac{d_1+d_2}{\eta^2}}+\sqrt{\log\frac{1}{\delta}}\right))}$. Further define $\Bar{r}_t\triangleq\sqrt{\sum_{i=1}^t \Bar{d_i}^2}$, by Azuma's inequality we will get
\begin{align*}
\mathbb{P}\left(H_{t} \mathds{1}_{\cD_{t-1}}-H_{0} \geq O(1) \Bar{r}_{t} \log ^{\frac{1}{2}}\left(\frac{1}{\eta^{2} \delta}\right)\right) \leq \exp \left(-\frac{O(1) \Bar{r}_{t}^{2} \log \left(\frac{1}{\eta^{2} \delta}\right)}{2 \sum_{i=0}^{t} \Bar{d_{i}}^{2}}\right)=O\left(\eta^{2} \delta\right).
\end{align*}
Thus, with probability at least $1-O(\eta^2 \delta)$, we have
\begin{align*}
((\alpha_{1,t}^2-\alpha_{2,t}^2)^2-\Delta) \mathds{1}_{\cD_{t-1}} &< (1-4\eta \sigma^2)^t\left((\alpha_{1,0}^2-\alpha_{2,0}^2)^2+O(1) \Bar{r}_{t} \log ^{\frac{1}{2}}\left(\frac{1}{\eta^{2} \delta}\right)\right)\\
&<(\alpha_{1,0}^2-\alpha_{2,0}^2)^2+O\left( (1-4\eta \sigma^2)^t \Bar{r}_{t} \log ^{\frac{1}{2}}\left(\frac{1}{\eta^{2} \delta}\right)\right)\\
&<4\Delta+O\left( \frac{\sqrt{\Delta\eta}}{\sigma^2}\left(\sqrt{\log\frac{d_1+d_2}{\eta^2}}+\sqrt{\log\frac{1}{\delta}}\right) \log ^{\frac{1}{2}}\left(\frac{1}{\eta^{2} \delta}\right)\right).
\end{align*}
When $\cD_{t-1}$ holds, w.p. at least $1-O(\eta^2 \delta)$, by the inequality above we have
\begin{align*}
(\alpha_{1,t}^2-\alpha_{2,t}^2)^2&<5\Delta+O\left( \frac{\sqrt{\Delta\eta}}{\sigma^2}\left(\sqrt{\log\frac{d_1+d_2}{\eta^2}}+\sqrt{\log\frac{1}{\delta}}\right) \log ^{\frac{1}{2}}\left(\frac{1}{\eta^{2} \delta}\right)\right)
<6\Delta.
\end{align*}
Note that to make sure last terms is smaller than $\Delta$, we need 
\begin{align*}
     \frac{\sqrt{\eta}}{\sigma^2\sqrt{\Delta}}\left(\sqrt{\log\frac{d_1+d_2}{\eta^2}}+\sqrt{\log\frac{1}{\delta}}\right) \log ^{\frac{1}{2}}\left(\frac{1}{\eta^{2} \delta}\right)=O(1).
\end{align*}
As $\Delta=O(\frac{\eta}{\sigma^{10}})$, we know that as long as $\eta$ is polynomial in $\sigma$, choosing $\sigma=O((\log\frac{1}{\delta})^{-\frac{1}{3}})$ is sufficient. Now we know that if $\cD_{t-1}$ holds, $\cD_{t}$ holds w.p. at least $1-O(\eta^2 \delta)$. It is easy to show that the Perturbed GD updates satisfy $(\alpha_{1,t}^2-\alpha_{2,t}^2)^2<6\Delta$ in the following $O(\frac{1}{\eta^2})$ steps w.p. at least $1-\delta$. 
\end{proof}

With Lemmas \ref{lem_bounded}, \ref{lem_loss} and \ref{a1=a2}, we can prove Lemma \ref{lem_convergence}. Here follows a brief proof.
\begin{proof}
\begin{align*}
    |1-x_t^\top u_*|&<(1+x_t^\top u_*)|1-x_t^\top u_*|\\
    &=|1-(x_t^\top u_*)^2+x_t^\top u_*v_*^\top y_t-x_t^\top u_*v_*^\top y_t|\\
    &\leq|1-x_t^\top u_*v_*^\top y_t|+|(x_t^\top u_*)^2-x_t^\top u_*v_*^\top y_t|\\
    &=|1-x_t^\top M y_t|+x_t^\top u_*|x_t^\top u_*-v_*^\top y_t|\\
    &\leq|1-x_t^\top M y_t|+\frac{\sqrt{2}}{\sigma}|x_t^\top u_*-v_*^\top y_t|\\
    &<\sqrt{6\gamma}+\frac{\sqrt{2}}{\sigma}\sqrt{6\Delta}.
\end{align*}
The last inequality comes from Lemmas \ref{lma22} and \ref{lma4}. Together with Lemma \ref{lem_bounded} we can get \begin{align*}
    \norm{x_t-u_*}^2=(1-x_t^\top u_*)^2+\norm{\beta_{1,t}}_2^2<(\sqrt{6\gamma}+\frac{\sqrt{2}}{\sigma}\sqrt{6\Delta})^2+2\frac{\eta C_2}{\sigma^2}=O(\sigma^2+\frac{\eta}{\sigma^{10}}).
\end{align*}
Note that we use $C_2=O(\frac{1}{\sigma^6})$ when calculating the order. Similarly, we have
\begin{align*}
    \norm{y_t-v_*}^2<(\sqrt{6\gamma}+\frac{\sqrt{2}}{\sigma}\sqrt{6\Delta})^2+2\frac{\eta C_2}{\sigma^2}=O(\sigma^2+\frac{\eta}{\sigma^{10}}).
\end{align*}
Then, for any $\epsilon>0$, by choosing $\sigma=O(\sqrt{\epsilon})$ and $\eta\leq\eta_5=O(\sigma^{10} \epsilon)$, we will have $\norm{x_t-u_*}^2<\epsilon$ and $\norm{y_t-v_*}^2<\epsilon$
\end{proof}

\section{Proof of Theorem \ref{thm_rankr}}\label{pf_3}
Recall that the gradient of $\tilde\cF$ takes the following form.
\begin{align*}
\nabla_X\tilde\cF(X,Y)&=(XY^\top-M)Y-d_2\sigma_2^2X,\\
\nabla_Y\tilde\cF(X,Y)&=(XY^\top-M)^\top X-d_1\sigma_1^2Y.
\end{align*}
Suppose $(U,V)$ is a stationary point. Then we have
\begin{align}
(UV^\top-M)V-d_2\sigma_2^2U&=0,\label{eq_stat_1}\\
(UV^\top-M)^\top U-d_1\sigma_1^2V&=0. \label{eq_stat_2}
\end{align}
 \noindent $\bullet$ {\bf Step 1:} To prove the first statement,  simply left multiply each side of  \eqref{eq_stat_1} by $U^\top$ and each side of  \eqref{eq_stat_2} by $V^\top,$ and we have the following equations.
\begin{align*}
U^\top UV^\top V-U^\top MV-d_2\sigma_2^2U^\top U&=0,\\
V^\top VU^\top U-V^\top M^\top U-d_1\sigma_1^2V^\top V&=0. 
\end{align*}
Note that the following equation naturally holds.
$$U^\top UV^\top V-U^\top MV=\left(V^\top VU^\top U-V^\top M^\top U\right)^\top.$$
Combine these three equations together and we have
$$U^\top U=\frac{d_1\sigma_1^2}{d_2\sigma_2^2}(V^\top V)^\top=\gamma^2V^\top V.$$
 \noindent $\bullet$ {\bf Step 2:} We next show that  $(\tilde U,\tilde V)R$  is a stationary point, where $$(\tilde U,\tilde V)=(\sqrt{\gamma} A(\Sigma-\gamma\sigma^2I_r)^{\frac{1}{2}},\frac{1}{\sqrt{\gamma} }B(\Sigma-\gamma\sigma^2I_r)^{\frac{1}{2}}),$$
where $R\in \RR^{r\times r}$ is an orthogonal matrix. We only need to check \eqref{eq_stat_1} and  \eqref{eq_stat_2}.  In fact, we have 
\begin{align*}
\nabla_X\tilde\cF(\tilde UR,\tilde VR)&=-\gamma\sigma^2 AB^\top\frac{1}{\sqrt{\gamma} }B(\Sigma-\gamma\sigma^2I_r)^{\frac{1}{2}}R+\sigma^2\sqrt{\gamma} A(\Sigma-\gamma\sigma^2I_r)^{\frac{1}{2}}R\\
&=-\sigma^2\sqrt{\gamma} A(\Sigma-\gamma\sigma^2I_r)^{\frac{1}{2}}R+\sigma^2\sqrt{\gamma} A(\Sigma-\gamma\sigma^2I_r)^{\frac{1}{2}}R\\
&=0,
\end{align*}
and
\begin{align*}
\nabla_Y\tilde\cF(\tilde UR,\tilde VR)&=-\gamma\sigma^2 BA^\top\sqrt{\gamma} A(\Sigma-\gamma\sigma^2I_r)^{\frac{1}{2}}R+\sigma^2\gamma^2\frac{1}{\sqrt{\gamma} }B(\Sigma-\gamma\sigma^2I_r)^{\frac{1}{2}}R\\
&=-\sigma^2\gamma\sqrt{\gamma} B(\Sigma-\gamma\sigma^2I_r)^{\frac{1}{2}}R+\sigma^2\gamma\sqrt{\gamma} B(\Sigma-\gamma\sigma^2I_r)^{\frac{1}{2}}R\\
&=0.
\end{align*}
Combine the above equations together and we know that $(\tilde U,\tilde V)$ is a stationary point.

 \noindent $\bullet$ {\bf Step 3:}  We next show that $\{(\tilde U,\tilde V)R\big| R\in\RR^{r\times r}, \text{orthogonal}\}$ are the global minima and all other stationary points enjoy strict saddle property. Without loss of generality, we assume $\gamma=1.$
 
 We first calculate the Hessian $\nabla^2 \tilde\cF(X,Y).$ The Hessian can be viewed as a matrix that operates on vectorized matrices of dimension $(d_1+d_2)\times r.$ Then, for any $W\in\RR^{(d_1+d_2)\times r}$, the Hessian defines a quadratic form
 $$[\nabla^2 \tilde\cF(W)](Z_1,Z_2)=\sum_{i,j,k,l}\frac{\partial^2\tilde\cF(W)}{\partial W[i,j]\partial W[k,l]}Z_1[i,j]Z_2[k,l], ~\forall Z_1,Z_2\in\RR^{(d_1+d_2)\times r}.$$
 We can then express the Hessian $\nabla^2 \tilde\cF(W)$ as follows:
 \begin{align*}
 [\nabla^2 \tilde\cF(X,Y)](\Delta,\Delta)=2<XY^\top-M,\Delta_U\Delta_V^\top>+\norm{U\Delta_V^\top+\Delta_U V^\top}_{\rm{F}}^2+\sigma^2 \norm{\Delta_U}_{\rm{F}}^2+\sigma^2 \norm{\Delta_V}_{\rm{F}}^2,
 \end{align*}
 where $\Delta=\begin{bmatrix} 
\Delta_U\\
\Delta_V
\end{bmatrix},$ $\Delta_U\in \RR^{d_1\times r}$ and $\Delta_V\in \RR^{d_2\times r}.$  
We further denote $W=\begin{bmatrix} 
X\\
Y
\end{bmatrix},$ $\tilde W=\begin{bmatrix} 
\tilde U\\
\tilde V
\end{bmatrix},$  and $\tilde M=\tilde U\tilde V^\top,$
$$R=\argmin_{R'\in\RR^{r\times r}, \text{orthogonal}}\norm{W-\tilde WR'}.$$ We then have the following lemma.
\begin{lemma}\label{lem_negative_curv}
Let $\sigma_{\min}(M)$ be the smallest singular value of $M$. Suppose $d_1\sigma_1^2=d_2 \sigma_2^2=\sigma^2,$ and  $\sigma^2<\sigma_{\min}(M).$ For $\forall(U,V)\in (\RR^{d_1},\RR^{d_2})$ such that $\nabla\tilde\cF(U,V)=0,$ we denote $\Delta= \begin{bmatrix} 
U-\tilde U R\\
V-\tilde V R
\end{bmatrix},$ then we have
\begin{align}\label{negative_curv}
[\nabla^2 \tilde\cF(U,V)](\Delta,\Delta)\leq -\norm{UV^\top-\tilde M}_{\rm{F}}^2-3\sigma^2\norm{A^\top U-B^\top V}_{\rm{F}}^2.
\end{align}
Moreover, $[\nabla^2 \tilde\cF(U,V)](\Delta,\Delta)<0$ if 
$$(U,V)\notin \{(\tilde U,\tilde V)R'\big|R'\in \RR^{r\times r}, R'R'^\top=R'^\top R'=I_r \}.$$\end{lemma}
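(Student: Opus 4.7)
The plan is to compute $[\nabla^2\tilde\cF(U,V)](\Delta,\Delta)$ explicitly along $\Delta_U = U - \tilde U R$, $\Delta_V = V - \tilde V R$ and then manipulate the resulting expression until the bound \eqref{negative_curv} emerges. The cornerstone identity is
\[
U\Delta_V^\top + \Delta_U V^\top = (UV^\top - \tilde M) + \Delta_U\Delta_V^\top,
\]
verified by expanding both sides and using $\tilde U RR^\top \tilde V^\top = \tilde M$. Squaring decomposes the kinetic term $\|U\Delta_V^\top + \Delta_U V^\top\|_F^2$ into $\|UV^\top - \tilde M\|_F^2$, $\|\Delta_U\Delta_V^\top\|_F^2$, and a cross term $2\langle UV^\top - \tilde M, \Delta_U\Delta_V^\top\rangle$. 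Substituting this together with $M - \tilde M = \sigma^2 AB^\top$ rewrites the Hessian quadratic form as
\[
H = \|UV^\top - \tilde M\|_F^2 + 4\langle UV^\top - \tilde M,\,\Delta_U\Delta_V^\top\rangle + \|\Delta_U\Delta_V^\top\|_F^2 - 2\sigma^2\langle AB^\top,\,\Delta_U\Delta_V^\top\rangle + \sigma^2\bigl(\|\Delta_U\|_F^2 + \|\Delta_V\|_F^2\bigr).
\]

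The second step uses the stationary condition to convert the indefinite cross term $\langle UV^\top - \tilde M, \Delta_U\Delta_V^\top\rangle$ into a strongly negative contribution. Pairing \eqref{eq_stat_1}-\eqref{eq_stat_2} with $\Delta_U$ and $\Delta_V$ respectively and summing yields
\[
\langle UV^\top - M,\;U\Delta_V^\top + \Delta_U V^\top\rangle = \sigma^2\bigl(\langle U,\Delta_U\rangle + \langle V,\Delta_V\rangle\bigr).
\]
Applying the cornerstone identity and $M = \tilde M + \sigma^2 AB^\top$ then solves for $\langle UV^\top - \tilde M, \Delta_U\Delta_V^\top\rangle$ in terms of $-\|UV^\top - \tilde M\|_F^2$ plus $\sigma^2$-weighted remainders. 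Substituting back flips the coefficient of $\|UV^\top - \tilde M\|_F^2$ in $H$ from $+1$ to a negative value, supplying the first piece of the target bound with extra margin to absorb the remaining positive residuals.

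The final step identifies $-3\sigma^2\|A^\top U - B^\top V\|_F^2$ inside the leftover $\sigma^2$-weighted terms while dominating the still-positive contributions $\|\Delta_U\Delta_V^\top\|_F^2$ and $\sigma^2(\|\Delta_U\|_F^2 + \|\Delta_V\|_F^2)$. Three structural facts drive this bookkeeping: (i) the balance identity $U^\top U = V^\top V$ from Step~1 of the main proof; (ii) the optimality of $R$, which forces $R^\top(\tilde U^\top U + \tilde V^\top V)$ to be symmetric positive semidefinite and thereby pins down the sign of the alignment cross terms $\langle U,\tilde U R\rangle$ and $\langle V,\tilde V R\rangle$; and (iii) the explicit factorization $\tilde U = A(\Sigma - \sigma^2 I_r)^{1/2}$, $\tilde V = B(\Sigma - \sigma^2 I_r)^{1/2}$, which is well-defined because $\sigma^2 < \sigma_{\min}(M)$ forces $\Sigma - \sigma^2 I_r \succ 0$. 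Property~(iii) is what connects $\langle AB^\top, \Delta_U\Delta_V^\top\rangle$, $\langle U,\Delta_U\rangle$, and $\langle V,\Delta_V\rangle$ to $\|A^\top U - B^\top V\|_F^2$ via the expansion $\|A^\top U - B^\top V\|_F^2 = \|A^\top U\|_F^2 + \|B^\top V\|_F^2 - 2\langle AB^\top, UV^\top\rangle$.

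The main obstacle is purely the bookkeeping in the final step: roughly ten scalar terms with mixed signs must be collected into the three cleanly identifiable pieces on the right of \eqref{negative_curv}, and the manipulation requires invoking stationarity, the balance constraint, and the alignment of $R$ simultaneously. For the strict-inequality claim, equality in \eqref{negative_curv} requires $UV^\top = \tilde M$ and $A^\top U = B^\top V$; combined with $U^\top U = V^\top V$ and $\Sigma - \sigma^2 I_r \succ 0$, a short argument (splitting $U$ and $V$ against the column spaces of $A,B$ and their orthogonal complements) forces $(U,V)$ into the orbit $\{(\tilde U, \tilde V) R' : R' R'^\top = R'^\top R' = I_r\}$, which yields the contrapositive.
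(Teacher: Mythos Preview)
Your proposal is correct and follows essentially the same route as the paper's proof: the paper also uses the identity $U\Delta_V^\top+\Delta_U V^\top=(UV^\top-\tilde M)+\Delta_U\Delta_V^\top$, then invokes stationarity of both $(U,V)$ and $(\tilde U,\tilde V)$ to rewrite $4\langle UV^\top-M,\Delta_U\Delta_V^\top\rangle$, and finally splits the resulting expression into two pieces handled as separate claims---one bounding $\|\Delta_U\Delta_V^\top\|_F^2\le 2\|UV^\top-\tilde M\|_F^2$ via the optimality of $R$ and the balance identity (exactly your ingredients (i) and (ii)), the other an exact identity collapsing the $\sigma^2$-weighted residuals into $\|A^\top\Delta_U-B^\top\Delta_V\|_F^2$ using the explicit form of $\tilde U,\tilde V$ (your ingredient (iii)). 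Your equality-case argument for the strict inequality is likewise equivalent to the paper's Claim showing $UV^\top=\tilde M$ and $A^\top U=B^\top V$ force $(U,V)=(\tilde U,\tilde V)R'$.
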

\begin{proof}
Recall that the quadratic form defined by the Hessian can be written as follows.
\begin{align}\label{quadratic}
 [\nabla^2 \tilde\cF(U,V)](\Delta,\Delta)=2<UV^\top-M,\Delta_U\Delta_V^\top>+\norm{U\Delta_V^\top+\Delta_U V^\top}_{\rm{F}}^2+\sigma^2 \norm{\Delta_U}_{\rm{F}}^2+\sigma^2 \norm{\Delta_V}_{\rm{F}}^2.
 \end{align}
 We start from the second term $\norm{U\Delta_V^\top+\Delta_U V^\top}_{\rm{F}}^2$. Similar to the proof of Claim B.5 in \citet{du2018algorithmic}, we have
 \begin{align*}
 \norm{U\Delta_V^\top+\Delta_U V^\top}_{\rm{F}}^2&= \norm{\Delta_U\Delta_V^\top+ UV^\top-\tilde M}_{\rm{F}}^2\\
 &= \norm{\Delta_U\Delta_V^\top}_{\rm{F}}^2+ \norm{UV^\top-\tilde M}_{\rm{F}}^2+2 <\Delta_U\Delta_V^\top, UV^\top-\tilde M>\\
 &= \norm{\Delta_U\Delta_V^\top}_{\rm{F}}^2+ \norm{UV^\top-\tilde M}_{\rm{F}}^2+2 <\Delta_U\Delta_V^\top, UV^\top-M>+2 <\Delta_U\Delta_V^\top, M-\tilde M>\\
 &= \norm{\Delta_U\Delta_V^\top}_{\rm{F}}^2+ \norm{UV^\top-\tilde M}_{\rm{F}}^2+2 <\Delta_U\Delta_V^\top, UV^\top-M>+2 \sigma^2<\Delta_U\Delta_V^\top, AB^\top>
 \end{align*} 
 Plugging this equation into \eqref{quadratic}, we have
 \begin{align*} [\nabla^2 \tilde\cF(U,V)](\Delta,\Delta)&=4<UV^\top-M,\Delta_U\Delta_V^\top>+ \norm{\Delta_U\Delta_V^\top}_{\rm{F}}^2+ \norm{UV^\top-\tilde M}_{\rm{F}}^2\\&~~~+\sigma^2 \left(\norm{\Delta_U}_{\rm{F}}^2+ \norm{\Delta_V}_{\rm{F}}^2+2 <\Delta_U\Delta_V^\top, AB^\top>
\right).
 \end{align*}
 Note that using the fact $\nabla\tilde\cF(U,V)=\nabla\tilde\cF(\tilde U,\tilde V)=0,$ one can easily verify that 
\begin{align*}
4<UV^\top-M,\Delta_U\Delta_V^\top>&=4<UV^\top-M,\tilde M>-2\sigma^2 \left(\norm{\Delta_U}_{\rm{F}}^2+ \norm{\Delta_V}_{\rm{F}}^2\right)+2\sigma^2 \left(\norm{\tilde U}_{\rm{F}}^2+ \norm{\tilde V}_{\rm{F}}^2\right)\\
&=-4\norm{UV^\top-\tilde M}_{\rm{F}}^2-4\sigma^2<AB^\top,\tilde M-UV^\top>\\&~~~+2\sigma^2\left(\norm{\tilde U}_{\rm{F}}^2+ \norm{\tilde V}_{\rm{F}}^2-\norm{ U}_{\rm{F}}^2-\norm{ V}_{\rm{F}}^2\right).
\end{align*}
Thus, 
\begin{align} 
[\nabla^2 \tilde\cF(U,V)](\Delta,\Delta)&=-4\norm{UV^\top-\tilde M}_{\rm{F}}^2+ \norm{\Delta_U\Delta_V^\top}_{\rm{F}}^2+ \norm{UV^\top-\tilde M}_{\rm{F}}^2\nonumber\\
&~~~-\sigma^2 \Bigg[\norm{\Delta_U}_{\rm{F}}^2+ \norm{\Delta_V}_{\rm{F}}^2-2 <\Delta_U\Delta_V^\top, AB^\top>\nonumber\\
&~~~~~-2\left(\norm{\tilde U}_{\rm{F}}^2+ \norm{\tilde V}_{\rm{F}}^2-\norm{ U}_{\rm{F}}^2-\norm{ V}_{\rm{F}}^2\right)
+4<AB^\top,\tilde M-UV^\top>\Bigg].\label{eq_quad}
 \end{align}
We then have the following two claims:
\begin{claim}\label{claim_a}
$-4\norm{UV^\top-\tilde M}_{\rm{F}}^2+ \norm{\Delta_U\Delta_V^\top}_{\rm{F}}^2+ \norm{UV^\top-\tilde M}_{\rm{F}}^2\leq -\norm{UV^\top-\tilde M}_{\rm{F}}^2.$
\end{claim}
\begin{proof}
Similar to the proof of Claim B.5 in \citet{du2018algorithmic}, we have
\begin{align*}
 \norm{\Delta_U\Delta_V^\top}_{\rm{F}}^2&\leq\frac{1}{4}\norm{\Delta\Delta^\top}_{\rm{F}}^2\\
 &\leq \frac{1}{2}\norm{WW^\top-\tilde W\tilde W^\top}_{\rm{F}}^2\\
 &=2\norm{UV^\top-\tilde M}_{\rm{F}}^2-\norm{U^\top\tilde U-V^\top\tilde V}_{\rm{F}}^2+\frac{1}{2}\norm{U^\top U-V^\top V}_{\rm{F}}^2+\frac{1}{2}\norm{\tilde U^\top  \tilde U-\tilde V^\top \tilde V}_{\rm{F}}^2\\
 &=2\norm{UV^\top-\tilde M}_{\rm{F}}^2-\norm{U^\top\tilde U-V^\top\tilde V}_{\rm{F}}^2\\
 &\leq 2\norm{UV^\top-\tilde M}_{\rm{F}}^2.
\end{align*}
Thus,
\begin{align*}
-4\norm{UV^\top-\tilde M}_{\rm{F}}^2+ \norm{\Delta_U\Delta_V^\top}_{\rm{F}}^2+ \norm{UV^\top-\tilde M}_{\rm{F}}^2&\leq-4\norm{UV^\top-\tilde M}_{\rm{F}}^2+  2\norm{UV^\top-\tilde M}_{\rm{F}}^2+ \norm{UV^\top-\tilde M}_{\rm{F}}^2\\&=-\norm{UV^\top-\tilde M}_{\rm{F}}^2.
\end{align*}
\end{proof}
\begin{claim}\label{claim_b}
$\norm{ U}_{\rm{F}}^2+\norm{ V}_{\rm{F}}^2-\norm{\tilde U}_{\rm{F}}^2- \norm{\tilde V}_{\rm{F}}^2
+2<AB^\top,\tilde M-UV^\top>=\norm{A^\top \Delta_U-B^\top \Delta_V}_{\rm{F}}^2 .$
\end{claim}
\begin{proof}
First, the LHS of the equation can be rewritten as follows.
\begin{align*}
&\norm{ U}_{\rm{F}}^2+\norm{ V}_{\rm{F}}^2-\norm{\tilde U}_{\rm{F}}^2- \norm{\tilde V}_{\rm{F}}^2
+2<AB^\top,\tilde M-UV^\top>\\
=&\norm{ U}_{\rm{F}}^2+\norm{ V}_{\rm{F}}^2-2<AB^\top,\tilde M+UV^\top>+4<AB^\top,\tilde M>-\norm{\tilde U}_{\rm{F}}^2- \norm{\tilde V}_{\rm{F}}^2\\
=&\norm{ U}_{\rm{F}}^2+\norm{ V}_{\rm{F}}^2-2<AB^\top,\tilde M+UV^\top>+4\norm{\tilde U}_{\rm{F}}^2-\norm{\tilde U}_{\rm{F}}^2- \norm{\tilde V}_{\rm{F}}^2\\
=&\norm{ U}_{\rm{F}}^2+\norm{ V}_{\rm{F}}^2 +\norm{\tilde U}_{\rm{F}}^2+\norm{\tilde V}_{\rm{F}}^2-2<AB^\top,\tilde M+UV^\top>,
\end{align*}
where we use the fact 
$$<AB^\top,\tilde M>=\norm{\tilde U}_{\rm{F}}^2=\norm{\tilde V}_{\rm{F}}^2.$$
On the other hand, the RHS of the equation can be rewritten as follows.
\begin{align*}
\norm{A^\top \Delta_U-B^\top \Delta_V}_{\rm{F}}^2&=\norm{\Delta_U}_{\rm{F}}^2+ \norm{\Delta_V}_{\rm{F}}^2-2 <\Delta_U\Delta_V^\top, AB^\top>\\
&=\norm{ U}_{\rm{F}}^2+\norm{ V}_{\rm{F}}^2 +\norm{\tilde U}_{\rm{F}}^2+\norm{\tilde V}_{\rm{F}}^2-2<AB^\top,\tilde M+UV^\top>\\
&~~~~~-2\tr(U(\tilde UR)^\top)-2\tr(V(\tilde VR)^\top)+2\tr(U(\tilde UR)^\top)+2\tr(V(\tilde VR)^\top)\\
&=\norm{ U}_{\rm{F}}^2+\norm{ V}_{\rm{F}}^2 +\norm{\tilde U}_{\rm{F}}^2+\norm{\tilde V}_{\rm{F}}^2-2<AB^\top,\tilde M+UV^\top>\\
&=\norm{ U}_{\rm{F}}^2+\norm{ V}_{\rm{F}}^2-\norm{\tilde U}_{\rm{F}}^2- \norm{\tilde V}_{\rm{F}}^2
+2<AB^\top,\tilde M-UV^\top>.
\end{align*}
\end{proof}
Plugging the conclusions in Claims \ref{claim_a} and \ref{claim_b} into \eqref{eq_quad}, we have
\begin{align*} 
[\nabla^2 \tilde\cF(U,V)](\Delta,\Delta)\leq-\norm{UV^\top-\tilde M}_{\rm{F}}^2-3\sigma^2\norm{A^\top \Delta_U-B^\top \Delta_V}_{\rm{F}}^2.
\end{align*}
Note that since $A^\top \tilde U R=B^\top \tilde V R,$ we have $A^\top \Delta_U-B^\top \Delta_V=A^\top U-B^\top V.$ 
To justify our last statement, we have the following claim.
\begin{claim}\label{claim_v}
$\norm{UV^\top-\tilde M}_{\rm{F}}^2+3\sigma^2\norm{A^\top U-B^\top V}_{\rm{F}}^2=0$ if and only if 
$$(U,V)=(\tilde U,\tilde V)R,$$
where $R\in \RR^{r\times r}$ is an orthogonal matrix.
\end{claim}
\begin{proof}
$\norm{UV^\top-\tilde M}_{\rm{F}}^2+3\sigma^2\norm{A^\top U-B^\top V}_{\rm{F}}^2=0$ if and only if 
\begin{align}
UV^\top&=\tilde M,\label{con1}\\
A^\top U&=B^\top V.\label{con2}
\end{align}
Left multiplying each side of \eqref{con1} by $A^\top,$  we have
\begin{align*}
A^\top UV^\top=(\Sigma-\sigma^2I)B^\top
&\Leftrightarrow B^\top VV^\top=(\Sigma-\sigma^2I)B^\top\\
&\Leftrightarrow  VV^\top=B(\Sigma-\sigma^2I)B^\top\\
&\Leftrightarrow  V=B(\Sigma-\sigma^2I)^{\frac{1}{2}}R=\tilde VR' .
\end{align*}
The last equivalent argument comes from Theorem 4 in \citet{li2019symmetry}. Plugging $V=\tilde VR'$ back to \eqref{con2}, we have $U=\tilde UR'.$
\end{proof}
As a direct result of  Claim \ref{claim_v}, for stationary point $(U,V)\notin\{(\tilde U,\tilde V)R'\big|R'\in \RR^{r\times r}, R'R'^\top=R'^\top R'=I_r \},$ we have
$$[\nabla^2 \tilde\cF(U,V)](\Delta,\Delta)<0.$$
We prove the lemma.
\end{proof}
Lemma \ref{lem_negative_curv} directly implies that $\{(\tilde U,\tilde V)R'\big|R'\in \RR^{r\times r}, R'R'^\top=R'^\top R'=I_r \}$ contains all the global optima, and all other stationary points enjoy strict saddle property.

\newpage
\section{Perturbed GD}\label{alg_detail}
The detail of the Perturbed GD algorithm is summarized in Algorithm \ref{alg:Perturbed GD}.
\begin{algorithm}[H]
    \caption{Perturbed Gradient Descent for Rank-1 Matrix Factorization.}
    \label{alg:Perturbed GD}
    \begin{algorithmic}
    	\STATE{\textbf{Input}: step size $\eta$, noise level $\sigma_1, \sigma_2$, matrix $M \in \RR^{d_1 \times d_2}$, number of iterations $T$.}
	\STATE{\textbf{Initialize}: initialize $(x_0, y_0)$ arbitrarily.}
	\FOR{$t = 0 \ldots T-1$}
	\STATE{Sample $\xi_{1,t}  \sim N(0, \sigma_1^2 I_{d_1})$ and $\xi_{2,t} \sim N(0, \sigma_2^2 I_{d_2})$.}
	\STATE{$\tilde{x}_t = x_t + \xi_{1,t}, ~ \tilde{y}_t = y_t + \xi_{2,t}.$}
	\STATE{$x_{t+1} = x_t - \eta (\tilde{x}_t \tilde{y}_t^\top - M ) \tilde{y}_t.$}
	\STATE{$y_{t+1} = y_t - \eta (\tilde{y}_t \tilde{x}_t^\top - M^\top ) \tilde{x}_t.$}
	\ENDFOR
    \end{algorithmic}
\end{algorithm}

\end{document}